\documentclass{article}

\usepackage{aistats2021}
\usepackage[utf8]{inputenc} 
\usepackage{hyperref}       
\usepackage{url}            
\usepackage{booktabs}       
\usepackage{amsfonts}       
\usepackage{nicefrac}       
\usepackage{microtype}      
\usepackage{graphicx}
\usepackage[space]{grffile}
\usepackage{latexsym}
\usepackage{textcomp}
\usepackage{longtable}
\usepackage{tabulary}
\usepackage{booktabs,array,multirow}
\usepackage{url}
\usepackage{xr} 
\externaldocument{supplement}
\usepackage{amsmath,amsfonts,amssymb}
\usepackage[ruled,shortend]{algorithm2e}
\usepackage{mathtools}
\usepackage{bm}
\usepackage[mathscr]{euscript}
\usepackage{enumitem}
\usepackage{xcolor}
\usepackage{booktabs}
\usepackage{fix-cm}
\usepackage{colortbl}
\usepackage{tikz}
\usepackage{pbox}
\usepackage{xcolor}
\usepackage{etoolbox}
\usepackage{bbm}
\usepackage{amsthm,pifont}
\usepackage{chngcntr}
\usepackage{apptools}
\usepackage[toc,page,header]{appendix}
\usepackage{minitoc}

%
%


\usepackage[round]{natbib}

\bibliographystyle{apalike}

\DeclareMathOperator{\pr}{\mathbb P}
\DeclareMathOperator{\E}{\mathbb E}
\DeclareMathOperator{\Var}{Var}

\newcommand{\Real}{\mathbb R}

\newcommand{\NatInt}{\mathbb N}

\newcommand{\CalO}{\mathcal O}

\newcommand{\CalH}{\mathcal H}
\newcommand{\CalX}{\mathcal X}
\newcommand{\CalW}{\mathcal W}
\newcommand{\CalB}{\mathcal B}
\newcommand{\CalMB}{\mathcal{MB}}
\newcommand{\BX}{\bold X}
\newcommand{\Bx}{\bold x}
\newcommand{\By}{\bold y}

\newcommand{\Bk}{\bold k}
\newcommand{\Bs}{\bold s}

\newcommand{\Balpha}{\boldsymbol{\alpha}}

\newcommand{\argmin}{\mathop{\mathrm{argmin}}}
\pagenumbering{arabic}

\newtheorem{theorem}{Theorem}
\AtAppendix{\counterwithin{theorem}{section}}
\newtheorem{defn}{Definition}
\AtAppendix{\counterwithin{defn}{section}}
\newtheorem{rem}{Remark}
\newtheorem{lem}{Lemma}
\AtAppendix{\counterwithin{lem}{section}}

\newtheorem{prop}{Proposition}
\AtAppendix{\counterwithin{prop}{section}}

\newtheorem{assumption}{Assumption}
\newtheorem{condition}{Condition}
\AtAppendix{\counterwithin{assumption}{section}}

\pagestyle{fancy}
\fancyhead{}
\fancyfoot{}
\rhead{ \textbf{\thepage}}
\begin{document}
\doparttoc 
\faketableofcontents 

%

%

\onecolumn

\aistatstitle{High-Dimensional Non-Parametric Density Estimation in Mixed Smooth Sobolev Spaces}

{\large{\textbf{ $\text{Liang Ding}^{*1}$, \    $\text{Lu Zou}^{\dagger1}$, \  $\text{Wenjia Wang}^{\dagger2}$, \  $\text{Shahin Shahrampour}^{\sharp}$  \ and \  $\text{Rui Tuo}^{*2}$ }
}}
{\Large{\aistatsaddress{ \\[2ex]$\text{Texas A\&M University}^{*}$\\[2ex]   $\text{The Hong Kong University of Science and Technology}^{\dagger}$ \\[2ex] $\text{Northeastern University}^{\sharp}$ } }
}
\large
\begin{abstract}
Density estimation plays a key role in many tasks in machine learning, statistical inference, and visualization. The main bottleneck in high-dimensional density estimation is the prohibitive computational cost and the slow convergence rate. In this paper, we propose  novel estimators for high-dimensional non-parametric density estimation called the adaptive hyperbolic cross density estimators, which enjoys nice convergence properties in the mixed smooth Sobolev spaces. As modifications of the usual Sobolev spaces, the mixed smooth Sobolev spaces are more suitable for describing high-dimensional density functions in some applications. We prove that, unlike other existing approaches, the proposed estimator does not suffer the curse of dimensionality under Integral Probability Metric, including H\"older Integral Probability Metric, where Total Variation Metric and Wasserstein Distance are special cases. 
Applications of the proposed estimators to generative adversarial networks (GANs) and goodness of fit test for high-dimensional data are discussed to illustrate the proposed estimator's good performance in high-dimensional problems. 
Numerical experiments are conducted and illustrate the efficiency of our proposed method.
\end{abstract}

\section{Introduction}
In unsupervised learning, one particular task is to learn the underlying probability distribution based on observed data. Non-parametric density estimation provides a flexible way to study the unobserved distribution, and has been widely adopted across various domains, especially in the statistical inference, visualization, and machine learning. 

In non-parametric density estimation, the prevalent methods include kernel density and orthogonal/wavelet sequence estimators. A kernel density estimator \citep{friedman2001elements} is a linear combination of scaled kernel functions, where the scale is determined by the bandwidth. A key challenge for kernel density estimation is to determine the optimal bandwidth to balance the bias and variance which costs additional computation burden to the estimation procedure. 
Some popular techniques to determine the bandwidth are cross-validation, rules-of-thumb, and \textit{plug-in} methods \citep{Loader1999}. However, the tuning bandwidth procedure and the slow converge rate make the kernel density estimator works poorly in high dimensional setting.  

Besides the kernel density estimators, the orthogonal/wavelet series estimators also received appreciable attention \citep{Hall1981,wahba1981,Hall1987,Donoho96}. Under the high dimension setting, the convergence rates of these methods are usually slow, because the number of bases needs to be increase exponentially with the dimension of the data points to guarantee a small bias, which results in a large 
variance. One way to deal with the high dimensional density estimation is the semi-parametric mixture model with a fixed number of parametric components \citep{hathaway1985,wang2015}, while \cite{genovese2000,Ghosal2000} indicate that the mixture models generally are less superior to the kernel density estimators.

In this work, we propose a new class of  estimators applied to the high dimensional setting named as \textit{adaptive hyperbolic cross estimator} (AHCE). Here we note that the hyperbolic cross estimators has been applied to the one-dimensional case \citep{Hall1987}, while in this work, we modify it such that it can be applied to the high dimensional setting, which makes the theoretical analysis and empirical results significantly different. In particular, we consider the density lies in a mixed smooth Sobolev spaces $\CalH^r_{mix}$, which is tensor product of 1-D Sobolev space $\CalH^r$ ($r$ can be a fractional number). These mixed functions spaces contains a wide range of functions such as the semi-parametric density functions of the form $f(x_1,\cdots,x_D)=g(x_{R})b(x_1,\cdots, x_D)$ \citep{han2007}, where $R$ is a subset of $[D]\coloneqq\{1,\cdots,D\}$ and $x_{R}=(x_j, j\in R)$ is a subset of the variables, $b$ is parametrized density function and $g$ is in some specific family of functions. The mixed smooth Sobolev spaces are considered in high-dimensional approximation and numerical methods of PDE \citep{bungartz1999note}, data mining \citep{garcke2001data}, and deep neural networks \citep{dung2021deep}.
Motivated by a general formulation in GANs \citep{Arjovsky17,liang2018generative,liu2017approximation} (see Section \ref{subsec:improvegan} for more details), we investigate the convergence rate of AHCE under the following \textit{Integral Probability Metric} (IPM),
\begin{equation}
        \label{eq:GAN_formulation}
         d_{F_d}(p,q)\coloneqq \sup_{f\in F_d} \E_{X\sim p} f(X)-\E_{Y\sim q}f(Y),
\end{equation}
where $p$ and $q$ are two probability measure, and $F_d$ is some pre-determined function space called the discriminator class. In the following content, we assume throughout that the discriminator class $F_d$ is a mixed smooth Sobolev space $\CalH^\beta_{mix}$ or a H\"older class $\CalW^\beta_\infty$ with some $\beta\geq 0$ and the true density function $p$ lies in $\CalH^\alpha_{mix}$ with some $\alpha>0$. Notice that when $\beta=0$, IPM becomes the $L^2$ metric. 

\begin{table*}[h]
\centering
\caption{A comparison among the rates of convergence of other density estimators. The results of this work are shown in shaded rows.  Notes: In Row 4, $p^*=\frac{p-1}{p}$; Our work includes the case  $L^2=\CalH^0_{mix}$.}
\tiny{
    \begin{tabular}{ | c| c | c | c |}
    \hline
    Reference & Discriminator Class & True Density Function & Rate of Convergence \\ [2ex] \hline 
    \cellcolor[gray]{0.9}This Work & \cellcolor[gray]{0.9}Mixed Smooth Sobolev: $\CalH^\beta_{mix}$ & \cellcolor[gray]{0.9}Mixed Smooth Sobolev: $\CalH^\alpha_{mix}$ &\cellcolor[gray]{0.9} $n^{-\frac{1}{2}}+n^{-\frac{\alpha+\beta}{2\alpha+1}}[\log n]^{D\frac{\alpha+\beta+1}{2\alpha +1}}$\\ [2ex]
    \hline
    \cellcolor[gray]{0.9}This Work & \cellcolor[gray]{0.9}H\"older: $\mathcal{W}^\beta_\infty$ & \cellcolor[gray]{0.9}Mixed Smooth Sobolev: $\CalH^\alpha_{mix}$ &\cellcolor[gray]{0.9} $n^{-\frac{\alpha+\beta/D
    }{2\alpha+1}}[\log n]^{D\frac{\alpha+\beta/D+1}{2\alpha +1}}$\\ [2ex]
    \hline
    \citet{liang2018generative} & Sobolev: $\mathcal{W}^\beta_2$ & Sobolev: $\mathcal{W}^\alpha_2$ & $\min\{n^{-\frac{\alpha+\beta}{2(\alpha+\beta)+D}},n^{-\frac{\beta}{D}}+n^{-\frac{1}{2}}\}$ \\ [2ex]
    \hline
    \citet{Uppal19} & Besov: $\mathcal{B}^\beta_{p_{\beta},q_{\beta}}$ & Besov: $\mathcal{B}^\alpha_{p_\alpha,q_\alpha}$ & $n^{-\sigma}\sqrt{\log n}: \sigma=\min\{{\frac{1}{2}},$ \\
    & & & ${\frac{\alpha+\beta}{2\alpha+D}},{\frac{\alpha+\beta+D-D/p_\alpha-D/p^*_\beta}{2\alpha+D(1-2/p_\alpha)}}\}$\\[1.5ex]
    \hline
    \citet{Donoho96,Tsybakov08}& $L^p$ &Sobolev: $\mathcal{W}^\alpha_2$ & $n^{-\frac{\alpha}{2\alpha+D}}$\\[2ex]
    \hline
    \citet{weed2017sharp,Lei_2020};& Lipschitz Functions& Borel Measurable Distributions& $n^{-\frac{1}{2}}+n^{-\frac{1}{D}}$\\
    \citet{Singh18},& & & \\
    \hline
    \end{tabular}}
\label{Tab:1}
\end{table*}

Under the mixed order Sobolev spaces setting, the AHCE renders promising convergence results: we partially overcome the {\it curse of dimensionality} which is present in the existing works. A comparison of the existing results is summarized in Table \ref{Tab:1}. In other works, the dimension $D$ appears in the denominator of the exponent of $n$, which implies that the convergence rate of the estimator deteriorates rapidly as the dimension $D$ grows. In contrast, our proposed approach suffers much less for it requires much less bases to achieve the same level of bias compared to other methods. The dimension $D$ only appears in the $\log$ term, and if $\beta$ satisfies some mild conditions, the convergence rate of our method is $n^{-1/2}$, which is {\it dimension-free} and is the optimal rate in learning problems. In general case, we show that the convergence rate achieves the almost optimal convergence rate, up to a $\log(n)$ multiplier. It is worth to note that even under the mixed order Sobolev spaces assumption, other estimators, for example, the empirical estimator, can still suffer the curse of dimensionality; see Theorem \ref{thm:empEst}. In summary, our contributions are:
\begin{itemize}[leftmargin=*]
    \item We propose a novel estimator adaptive hyperbolic cross estimator (AHCE) applied to high dimensional problems, which covers a large class of estimators. Under the mixed smooth Sobolev spaces, we show that the convergence rate of the proposed method under IPM loss is (nearly) dimension-free.
    \item We prove a lower bound of any estimators under IPM loss, and show that the convergence rate of AHCE is almost optimal, up to a log term.
    \item We apply AHCE in an adversarial framework, which are widely used in the analysis of GANs. We show that if a GAN is adaptive to the AHCE, then it enjoys a faster convergence rate which is almost dimension-free, where the dimension only appears in the $\log$ term while GANs with empirical density estimator have no such advantage regardless of the smoothness assumption of the true underlying density.
    \item We investigate the application of AHCE to goodness of fit test, and show that it can achieve high power, where the convergence rate again is almost dimension-free. In addition, we establish the asymptotic normality of the test statistics under mild condition imposed on the underlying density function. 
\end{itemize}

The rest of this work is arranged as follows. Section \ref{sec:pre} introduces useful notation and definitions of this paper. Section \ref{sec:AHCE} provides the explicit form of our estimator and analyze its convergence property under IPM. In Section \ref{sec:appli}, we show that the proposed estimators can by applied to other high-dimensional problems including GANs and goodness of fit test. Section \ref{sec:num} presents our numerical results. Conclusions are made in Section \ref{sec:conclu}, and technical proofs and details of the numerical studies are in the Appendix.

\section{Preliminaries}\label{sec:pre}

\textbf{Notation}\quad For non-negative sequences $\{a_n\}_{n\in\NatInt}$, $\{b_n\}_{n\in\NatInt}$,  $a_n\lesssim b_n$ denotes $\lim\sup_{n\to\infty}\frac{a_n}{b_n}<\infty$, and $a_n\asymp b_n$ denotes $a_n\lesssim b_n\lesssim a_n$. The input domain of the underlying function is normalized as $\Omega=[0,1]^D$. Let $L^p(\Omega)$ denote the set of functions $f$ with $\|f\|_p\coloneqq(\int_\Omega|f|^p)^{\frac{1}{p}}<\infty$ and $\|f\|_\infty\coloneqq\max_{\Bx\in\Omega}|f(\Bx)|$. Let $l^p$ denote the set of sequences $a\coloneqq\{a_n\}_{n\in\NatInt}$ with $\|a\|_{l^p}\coloneqq(\sum_{n\in\NatInt}|a_n|^p)^{\frac{1}{p}}<\infty$ and $\|a\|_{l^\infty}\coloneqq\max_{n\in\NatInt}|a_n|$ . For any $H\in\NatInt$, let $[H]\coloneqq\{1,\cdots,H\}$.  We use the notation $x\vee y\coloneqq\max\{x,y\}$ and $x\wedge y\coloneqq\min\{x,y\}$. Let $f\circ g\coloneqq f(g(\cdot)))$ denote function composition. For a general function space $F$ equipped with norm $\|\cdot\|_F$, define the $F$ ellipsoid as
$F(L)\coloneqq\{f: \|f\|_F\leq L\}.$

\textbf{H\"older spaces}\quad H\"older spaces of fractional order $r$ $\CalW^r_\infty$ are function spaces equipped with the norm
\begin{equation}
    \|f\|_{\CalW^r_\infty}\coloneqq \max_{ |\Bs|<r}\|D^{\Bs}f\|_\infty+ \max_{ |\Bs|=\lfloor r\rfloor}\max_{\Bx\neq \By\in\Omega}\frac{|D^{\Bs}f(\Bx)-D^{\Bs}f(\By)|}{\|\Bx-\By\|^{r-\lfloor r\rfloor} }\label{eq:Holder_norm}
\end{equation} 
where $|\bold{s}|=\sum_{i=1}^D|s_i|$ for multi-index $\bold{s}\in\NatInt^D$, and $D^{\Bs}f$ denotes the partial derivative $\frac{\partial^{|\Bs|}f}{\partial x_1^{s_1}\cdots\partial x_D^{s_D}}$.

\textbf{Mixed Smooth Sobolev spaces}\quad Mixed Smooth Sobolev spaces of fractional order $\CalH_{mix}^r$ can be defined via $L^2$ orthogonal basis as follows \citep{HyperbolicCross}.
\begin{defn}
\label{dfn:mixSobolevNorm}
For a constant $r\geq0$, the mixed smooth Sobolev space is equiped with the norm
$$
    \|f\|_{\CalH_{mix}^r}^2\coloneqq\sum_{\bold{k}\in\NatInt^D}\big(\big\|\sum_{\bold{s}\in\rho(\bold{k})}\hat{f}_\bold{s}\phi^F_{\Bs}\big\|_22^{r|\bold{s}|}\big)^2\asymp \sum_{\bold{k}\in\NatInt^D}\sum_{\bold{s}\in\rho(\bold{k})}\check{f}_\bold{s}^22^{2(r-\frac{1}{2})|\bold{s}|},
$$
where
\begin{align*}
    &\rho(\bold{k})=\{\bold{s}\in\mathbb{Z}^d: 2^{k_i-1}\leq |s_i|<2^{k_i},i\in[D]\},\\
    &\hat{f}_{\bold{s}}=\int_\Omega f(\Bx)\phi^F_{\Bs}(\Bx)d\Bx,\quad \check{f}_{\bold{s}}=\int_\Omega f(\Bx)\phi^W_{\Bs}(\Bx)d\Bx
\end{align*}
and $\{\phi^F_{\Bs}\}$ is the Fourier basis and $\{\phi^W_{\Bs}\}$ wavelet basis indexed by $\Bs$. 
\end{defn}
The mixed smooth Sobolev space is tensor product of 1-D Sobolev space and, hence, it is equilvalent to the reproducing kernel Hilbert space (RKHS) generated by separable Mat\'ern kernel. Therefore, it is often considered as a reasonable model reducing the complexity in high-dimensional space \citep{kuhn2015approximation,dung2021deep}.
As a simple example, the density $p$ of a $D$-dimensional random vector $X$ with mutually independent entries is of the form $p(\Bx)=\prod_{i=1}^Dp_i(x_i)$, so $\|p\|_{\CalH_{mix}^r}=\prod_{i=1}^D\|D^r_{x_i}p_i\|_2$, and $p$ is in a mixed smooth Sobolev space. 
Note that when $r=0$, the above definition coincides with the $L^2$ norm. The above definition of mixed smooth Sobolev space via $L^2$ basis enables $\CalH^r_{mix}$ to cover parametrized functions which can be represented by finitely many $L^2$ orthogonal basis functions.

\section{Adaptive Hyperbolic Cross Estimators}\label{sec:AHCE}
Suppose the true  density function $p$ lies in a mixed smooth Sobolev space $\CalH^\alpha_{mix}$ with some $\alpha>0$.
Write $\phi^F_{\Bs}$ or $\phi^W$ as $\phi_\Bs$ for simplicity. We start with a regularized empirical density adaptive to $n$ i.i.d. samples $\BX=\{X_i\}_{i=1}^n$ generated from $p$ defined as
\begin{equation}
    \label{eq: densityEstimator}
    \tilde{p}_{n}(\Bx)=\sum_{|\bold{k}|=0}^\infty\sum_{\bold{s}\in\rho(\bold{k})}b_{\bold{s},n}\tilde{p}_{\bold{s},n}\phi_{\bold{s}}(\Bx),
\end{equation}
where 
$\tilde{p}_{\bold{s},n}=\frac{1}{n}\sum_{i=1}^n{\phi_{\bold{s}}}(X_i).$ When $\phi_\Bs=\phi^F_\Bs$, \eqref{eq: densityEstimator} is called orthogonal series density estimators \citep{Hall86} and when $\phi_\Bs=\phi^W_\Bs$, \eqref{eq: densityEstimator} is wavelet estimator \citep{Donoho96}. The set  $\{\Bs: |\bold{k}|\leq l,\bold{s}\in\rho(\bold{k})\}$ is called hyperbolic cross \citep{HyperbolicCross} with level $l$.  It is essential to choose the sequence $b_{\bold{s},n}$, called ``smoothing policy'', to insures a pointwise convergence of random series. The ``smoothing policy'' should satisfy $b_{\bold{s}} \rightarrow 0$ as $|\bold{s}|\rightarrow \infty$, and for each fixed $\bold{s}$, $b_{\bold{s},n}\rightarrow 1$ as $n\rightarrow\infty$. We propose the following smoothing policies:

    1. $b_{\bold{s},n}=0$ for all $|\bold{s}|>l$, where $l$ satisfies $2^l \asymp n^{\frac{1}{2\alpha+1}}\big[\log n\big]^{\frac{D(\alpha+\nu+1)}{2\alpha+1}}$, $\nu>0$ and  $\phi_\Bs=\phi^F_\Bs$;
    
    2. $b_{\bold{s},n}=0$ for all $|\bold{s}|>l$, where $l$ satisfies $2^l \asymp n^{\frac{1}{2\alpha}}\big[\log n\big]^{\frac{D(\alpha+\nu+1/2)}{2\alpha}}$, $\nu>0$ and  $\phi_\Bs=\phi^W_\Bs$;
    
    3. $b_{\bold{s},n}=[1+c2^{2|s|\alpha}/n]^{-1}$, where $c=\Var[\phi_{\bold{s}}(X)]$ and  $\phi_\Bs=\phi^F_\Bs$.
    
Policies 1, 2, and 3 are modification of truncation smoothing (\cite{Hall1987}, wavelet threshold (without nonlinear correction) \citep{Donoho96} and Wahba smoothing \citep{wahba1981data}, respectively, such that they can be applied to the high dimensional problems.

\begin{rem}
AHCE is equivalent to kernel density estimator for some predetermined $\{b_{\Bs}\}$. We can first define the kernel
$k(\Bx,\By)\coloneqq\sum_{|\Bk|=0}^\infty\sum_{\Bs\in\rho(\Bk)}b_{\Bs}\phi_{\Bs}(\Bx)\bar{\phi_{\Bs}}(\By),$
then we can use the  features of kernel function to determine $\{b_{\Bs}\}$ for kernel reconstruction. For example, product-Mat\'ern type kernel and Gaussian kernel can be approximated by Fourier feature \citep{rahimi2008random}, or the \textit{Fej\'er} kernel in harmonic analysis.
\end{rem}

\section{Minimax Rate under IPM}\label{subsec:minimaxr}
In this section,  we construct the minimax convergence rate of AHCE under IPM
\begin{equation}
    \label{eq:metric}
    \sup_{p\in \CalH^\alpha_{mix}}\E[d_{F_d}(\tilde{p}_n,p)],
\end{equation}
where $F_d$ can be another mixed smooth Sobolev space $\mathcal{H}^\beta_{mix}(L_\beta)$ or a H\"older space $\mathcal{W}^\beta_\infty(L_\beta)$. We first provide an upper bound of the IPM loss of a general AHCE.

\begin{theorem}
\label{thm:densityIPM}
Let $\tilde{p}_n$ be the AHCE defined in \eqref{eq: densityEstimator} with smoothing policy selected from policy 1-3.
Then,
$$\sup_{p\in\CalH_{mix}^\alpha(L_\alpha)}\E d_{F_d}(p,\tilde{p}_n)\lesssim  n^{-\frac{\alpha+\nu}{2\alpha+1}}\big[\log n\big]^{D\frac{\alpha+\nu+1}{2\alpha+1}}+n^{-\frac{1}{2}}$$
for any $\alpha>0$, where $\nu = \beta$ if $F_d=\CalH^\beta_{mix}(L_\beta)$ and $\nu = \frac{\beta}{D}$ if $F_d=\mathcal{W}^\beta_\infty(L_\beta)$.
\end{theorem}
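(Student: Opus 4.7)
The starting point is the dual representation $d_{F_d}(p,\tilde p_n)=\sup_{f\in F_d(L_\beta)}|\int f(p-\tilde p_n)|$, followed by expanding $f$, $p$, $\tilde p_n$ in the orthonormal basis $\{\phi_{\Bs}\}$ underlying the AHCE. Because $\E[\tilde p_{\Bs,n}]=\hat p_{\Bs}$, this separates into a deterministic bias and a zero-mean stochastic part,
\begin{equation*}
\int f(p-\tilde p_n)=\sum_{\Bs}(1-b_{\Bs,n})\hat f_{\Bs}\hat p_{\Bs}+\sum_{\Bs}b_{\Bs,n}\hat f_{\Bs}(\hat p_{\Bs}-\tilde p_{\Bs,n}).
\end{equation*}
Taking the sup over $f$ in the unit ball of $\CalH^\beta_{mix}$ is an exact duality computation: it equals $L_\beta$ times the $\ell^2$ norm of the associated coefficient sequence weighted by $2^{-\beta|\Bk|}$ (the negative-order mixed Sobolev norm). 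For the H\"older discriminator I would first invoke the standard embedding $\CalW^\beta_\infty\hookrightarrow\CalH^{\beta/D}_{mix}$, which reduces that case to the mixed Sobolev case with $\beta$ replaced by $\beta/D$, matching the definition of $\nu$ in the statement.

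\textbf{Bias.} For policies~1--2, $b_{\Bs,n}=\ind\{|\Bk|\leq l\}$, so the bias collapses to $\sup_f|\sum_{|\Bk|>l}\hat f_{\Bs}\hat p_{\Bs}|$. Cauchy--Schwarz with dyadic weights $2^{\pm\nu|\Bk|}$ and $\|p\|_{\CalH^\alpha_{mix}}\leq L_\alpha$ yield
\begin{equation*}
\sup_{f}\Bigl|\sum_{|\Bk|>l}\hat f_{\Bs}\hat p_{\Bs}\Bigr|\lesssim L_\beta\Bigl(\sum_{|\Bk|>l}|\hat p_{\Bs}|^2 2^{-2\nu|\Bk|}\Bigr)^{1/2}\lesssim L_\alpha L_\beta\,2^{-(\alpha+\nu)l}.
\end{equation*}
For policy~3, I would use $1-b_{\Bs,n}\asymp \min(1,2^{2\alpha|\Bk|}/n)$ and split dyadically at $2^{2\alpha|\Bk|}\asymp n$, which delivers the same type of bound with the implicit threshold $2^l\asymp n^{1/(2\alpha)}$.

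\textbf{Variance.} Since $\tilde p_{\Bs,n}-\hat p_{\Bs}=\frac{1}{n}\sum_i(\phi_{\Bs}(X_i)-\hat p_{\Bs})$, we have $\Var(\tilde p_{\Bs,n})\leq C/n$ (using $\|\phi^F_{\Bs}\|_\infty\leq 1$ for Fourier; for wavelets, combining $\|p\|_\infty\lesssim 1$ with the $L^2$-normalization of the basis still gives $O(1/n)$ per coordinate). Dualizing the sup over $f$ and applying Jensen's inequality gives
\begin{equation*}
\E\sup_{f\in F_d(L_\beta)}\Bigl|\sum_{\Bs}b_{\Bs,n}\hat f_{\Bs}(\hat p_{\Bs}-\tilde p_{\Bs,n})\Bigr|\lesssim L_\beta\Bigl(\tfrac{1}{n}\sum_{\Bk}|b_{\Bs,n}|^2|\rho(\Bk)|\,2^{-2\nu|\Bk|}\Bigr)^{1/2}.
\end{equation*}
Using $|\rho(\Bk)|\asymp 2^{|\Bk|}$ and $\#\{\Bk\in\NatInt^D:|\Bk|=j\}\asymp j^{D-1}$, the sum inside the square root is $\asymp 2^{(1-2\nu)l}l^{D-1}$ when $\nu<1/2$ and $O(1)$ otherwise; the latter regime is what produces the additive $n^{-1/2}$ in the final bound.

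\textbf{Balancing and main obstacles.} Summing the bias and variance contributions and substituting the prescribed $2^l\asymp n^{1/(2\alpha+1)}[\log n]^{D(\alpha+\nu+1)/(2\alpha+1)}$ (policies~1--2) or the implicit Wahba threshold (policy~3) balances the two terms up to the log factor and yields the advertised rate. The main technical hurdles I expect are: (i) the variance bound for the wavelet basis, where the atoms are not uniformly $L^\infty$-bounded and one must carefully play the level-dependent constants off against $\|p\|_\infty$; (ii) policy~3, which has no hard cutoff and so demands a dyadic decomposition of both the bias and the variance rather than a direct truncated sum; and (iii) checking that the H\"older embedding constant enters only multiplicatively and does not contaminate the log exponent in the reduction from $\CalW^\beta_\infty$ to $\CalH^{\beta/D}_{mix}$.
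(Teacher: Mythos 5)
Your proposal is correct and follows the same bias--variance architecture as the paper's proof (orthogonal expansion, Cauchy--Schwarz with dyadic weights, a combinatorial count of $\{|\Bk|=j\}$, then balancing); the two notable deviations are genuine improvements or simplifications rather than gaps, but deserve comment.

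First, your reduction of the H\"older case via the embedding $\CalW^\beta_\infty\hookrightarrow H^\beta\hookrightarrow \CalH^{\beta/D}_{mix}$ is valid --- the second inclusion follows from the AM--GM inequality $\prod_j(1+s_j^2)^{1/D}\leq 1+|\Bs|^2$ applied to Fourier symbols --- but the paper does not invoke it as a black box. Instead the paper first bounds $d_{\CalW^\beta_\infty}\leq d_{\CalW^\beta_2}$, proves a Littlewood--Paley characterization of $\CalW^\beta_p$ (Lemma~\ref{lem:Sobolev_equivalent_norm}, based on Theorem~\ref{thm:Littlewood_Paley}) whose weight is $(\sum_j 2^{\beta k_j})^2$, and then applies AM--GM \emph{inside} the variance and bias estimates to replace that weight by $D^2 2^{2\beta|\Bk|/D}$. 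Your route and the paper's route contain exactly the same AM--GM step; yours buys a cleaner modular structure (reduce to a single case), while the paper's buys self-containedness (no appeal to an ``embedding'' that a referee might want cited or proved). Either is fine, but if you use the embedding you should either prove it in the two lines above or point to the sparse-grid literature where it appears.

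Second, your bias bound $L_\alpha L_\beta 2^{-(\alpha+\nu)l}$ is actually \emph{sharper} than the paper's $L_\alpha L_\beta 2^{-(\alpha+\nu)l}l^{D-1}$. The paper applies unweighted Cauchy--Schwarz over $\Bk$ and then bounds each dyadic block of $\hat f$ and $\hat p$ by the respective Sobolev norms, paying a polylog factor from Lemma~\ref{lem:bungartz-Lemma-3-7} in \emph{both} factors. Your single weighted Cauchy--Schwarz $\sum_{|\Bk|>l}\sum_{\Bs}\hat f_\Bs\hat p_\Bs\leq(\sum 2^{2\nu|\Bk|}\hat f_\Bs^2)^{1/2}(\sum 2^{-2\nu|\Bk|}\hat p_\Bs^2)^{1/2}$ followed by peeling off $2^{-2(\alpha+\nu)l}$ avoids that double-count. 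This does not change the advertised rate (the variance term dominates the log exponent), but it is a tighter argument, and a similar observation would marginally reduce the $\log$ power in the final statement. One point you have correctly flagged but not resolved: the wavelet variance bound (policy~2) needs the extra observation that $\Var[\phi^W_\Bs(X)]\lesssim\|p\|_\infty$, since wavelet atoms are not uniformly $L^\infty$-bounded; the paper likewise defers this by stating that the wavelet and policy-3 cases ``can be completed in a same manner.''
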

In Theorem \ref{thm:densityIPM}, the dimension $D$ only appears on the exponent of the logarithm term. Therefore, the convergence rate is very close to the dimension-free polynomial term $n^{-\frac{\alpha+\nu}{2\alpha+1}}$. In fact, if we replace AHCE by other density estimators, such as the density estimator proposed in \cite{Donoho96} or in \cite{liang2018generative}, the fast convergence rate is not guaranteed. This is because the constructions of these estimators use too many uninformative basis functions in approximating the true underlying density and, hence, fail to control the variance in an optimal level. 

In order to further show the efficiency of AHCE, we provide a lower bound for \textit{any} density estimator 
(not restricted to AHCE), as stated in the following theorem.

\begin{theorem}
\label{thm:minmax_lower}
Given any estimator $p_n$ adaptive to the $n$ i.i.d samples $\{X_i\}_{i=1}^n\sim p$, for any $\alpha>\frac{1}{2}$ and $F_d=\CalH^\beta_{mix}(L_\beta)$,
\begin{align*}
\inf_{p_n}\sup_{p\in\CalH^\alpha_{mix}(L_\alpha)}\E d_{F_d}(p_n,p)\gtrsim
 n^{-\frac{\alpha+\beta}{2\alpha+1}}\big[\log n\big]^{(D-1)\frac{\alpha+\beta}{2\alpha+1}};
\end{align*}
for any $\alpha>0$ and $F_d=\mathcal{W}^\beta_\infty(L_\beta)$,
\begin{align*}
  \inf_{p_n}\sup_{p\in\CalH^\alpha_{mix}(L_\alpha)}\E d_{F_d}(p_n,p)\gtrsim
     n^{-\frac{\alpha+\beta/D}{2\alpha+1}}. 
\end{align*}
\end{theorem}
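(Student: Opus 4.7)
\vspace{0.5em}
\noindent\textbf{Proof proposal.}
The plan is to prove both lower bounds via Fano's method. In both cases I would construct a finite family $\{p_\sigma : \sigma \in \Sigma\}\subset \CalH^\alpha_{mix}(L_\alpha)$ of candidate densities indexed by sign vectors $\sigma\in\{-1,+1\}^M$, apply the Varshamov--Gilbert lemma to extract a subpacking $\Sigma'$ with $|\Sigma'|\gtrsim 2^{M/8}$ and pairwise Hamming distance $\geq M/8$, and then combine three ingredients: (i) a lower bound on the IPM separation $d_{F_d}(p_\sigma,p_{\sigma'})$ in terms of $M$ and the perturbation amplitude $\epsilon$; (ii) an $L^2$-type upper bound on the Kullback--Leibler divergence $\mathrm{KL}(p_\sigma\|p_{\sigma'})$; and (iii) Fano's inequality, which yields constant testing error provided $n\max_{\sigma,\sigma'}\mathrm{KL}\lesssim \log|\Sigma'|$. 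Optimising the resolution parameter $l$ then produces the advertised rates.

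For the mixed Sobolev discriminator $F_d=\CalH^\beta_{mix}(L_\beta)$ I would take $p_0\equiv 1$ on $\Omega$ and perturb using the Fourier system over a complete hyperbolic shell,
$$p_\sigma = p_0 + \epsilon \sum_{|\Bk|=l}\sum_{\Bs\in\rho(\Bk)}\sigma_{\Bs}\,\phi^F_{\Bs}, \qquad M = \sum_{|\Bk|=l}|\rho(\Bk)|\asymp 2^l\, l^{D-1}.$$
Three scaling constraints then drive the argument: the mixed Sobolev radius forces $\epsilon\lesssim L_\alpha 2^{-\alpha l}/\sqrt{M}$; the $L^2$ upper bound on KL (which applies because $p_0$ is bounded away from zero) gives $n\epsilon^2\lesssim 1$; and the IPM dual representation yields separation of order $\epsilon\,2^{-\beta l}\sqrt{M}$. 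Equating the two upper bounds on $\epsilon$ forces $2^l\asymp n^{1/(2\alpha+1)}(\log n)^{-(D-1)/(2\alpha+1)}$, and substituting back produces the rate $n^{-(\alpha+\beta)/(2\alpha+1)}(\log n)^{(D-1)(\alpha+\beta)/(2\alpha+1)}$.

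For the H\"older discriminator $F_d=\CalW^\beta_\infty(L_\beta)$ I would instead use a ``bump'' packing at a single isotropic scale $h=2^{-l/D}$, placing $M=h^{-D}=2^l$ disjointly supported translates of a fixed mean-zero smooth kernel $\psi_h$; that is, $p_\sigma=p_0+\epsilon\sum_J\sigma_J\,\psi_h(\cdot-\Bx_J)$. The mixed Sobolev constraint now becomes $\epsilon\lesssim L_\alpha 2^{-\alpha l}$, while the $L^2$ KL bound reads $n\epsilon^2\lesssim 2^l$ (the $h^{D/2}$ normalisation of each bump enters here). A sharp IPM separation is obtained by testing against $f(\Bx)=\sum_J \mathrm{sgn}(\sigma_J-\sigma'_J)\,h^\beta\, g((\Bx-\Bx_J)/h)$ for a fixed smooth $g$ with sufficiently many vanishing moments: the $\CalW^\beta_\infty$ norm stays $\CalO(1)$ by the scaling of derivatives, while $\langle p_\sigma-p_{\sigma'},f\rangle\gtrsim \epsilon\,h^\beta M$, giving separation $\gtrsim \epsilon\,2^{-\beta l/D}$. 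Balancing then yields $2^l\asymp n^{1/(2\alpha+1)}$ and the target rate $n^{-(\alpha+\beta/D)/(2\alpha+1)}$, with no logarithm because a single scale suffices.

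The main obstacle I anticipate is the sharp computation of $\|\psi_h\|_{\CalH^\alpha_{mix}}$ and of $\|f\|_{\CalW^\beta_\infty}$ for the tensorised constructions: the mixed (as opposed to isotropic) Sobolev norm couples the coordinate directions in a nontrivial way, so one needs to be careful that the auxiliary kernel $g$ has enough vanishing moments to ensure the IPM is genuinely of order $\epsilon h^\beta$ rather than merely $\epsilon h$. A second subtlety is tracking the $(\log n)^{D-1}$ exponent in the Sobolev case, which requires using the \emph{entire} hyperbolic shell $\{\Bk:|\Bk|=l\}$ (contributing the combinatorial factor $l^{D-1}$) rather than a single $\Bk$, and then carefully propagating $l\asymp \log n/(2\alpha+1)$ through the balance equation; the condition $\alpha>\tfrac12$ in the statement is exactly what is needed for the Sobolev-radius constraint to dominate the KL constraint in this regime.
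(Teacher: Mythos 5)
Your proposal is correct and follows essentially the same route as the paper's proof: a reduction to a finite hypothesis test via Fano's inequality combined with the Varshamov--Gilbert bound, perturbing a flat baseline density by a sign/binary vector over a hyperbolic shell of Fourier basis functions (for $F_d=\CalH^\beta_{mix}$) and over disjointly supported localized bumps at a single isotropic scale (the paper uses B-spline wavelets, but this is the same construction) for $F_d=\CalW^\beta_\infty$, with the same three balance equations (mixed-Sobolev radius, $L^2$-controlled KL, IPM separation against an explicit dual discriminator) and the same resolution choices $2^l\asymp n^{1/(2\alpha+1)}[\log n]^{-(D-1)/(2\alpha+1)}$ and $2^l\asymp n^{1/(2\alpha+1)}$ producing the stated rates. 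The differences ($\pm 1$ signs vs.\ $\{0,1\}$ indicators, single shell $|\Bk|=l$ vs.\ the full cross $|\Bk|\leq l$, generic bumps vs.\ the paper's B-spline basis) are cosmetic and do not change the argument's substance.
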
 
It can be seen from Theorems \ref{thm:densityIPM} and \ref{thm:minmax_lower} that the convergence rate of AHCE is very close to the optimal convergence rate among all estimators adaptive to the training samples under IPM --- they only differ by a log term.  When the discriminator class is smooth enough, the log term  difference is cancelled by the extra smoothness, and AHCE is optimal with a rate $n^{-\frac{1}{2}}$.

\begin{rem}
In Theorems \ref{thm:densityIPM} and \ref{thm:minmax_lower}, we consider general H\"older IPM losses ($F_d=\CalW^\beta_\infty$) , which include some special cases; for example, Total Variation metric ($\beta=0$) and Wasserstein distance ($\beta=1$).
\end{rem}

\section{Applications to High-Dimensional Problems}\label{sec:appli}
In high-dimensional scenarios, we can see that AHCE gives highly accurate estimation to the true density function, with nearly dimension-independent rate of convergence under the $L^2$ loss and with fully dimension-free rate of convergence under IPM with smooth-enough discriminator in an asymptotic sense. Therefore, the general results in Section \ref{subsec:minimaxr} can serve as a promising tool to analyze high-dimensional and large-scale data sets. In this section, we first apply AHCE to generative models to show that it can improve the performance of these models; then we apply AHCE to goodness of fit test to show that it can achieve high power under small size of samples.

\subsection{Improved Rate for GANs}\label{subsec:improvegan}
GANs \citep{Goodfellow14} are powerful unsupervised methods in learning and sampling from high-dimensional data distributions. The goal of GANs is to search over a function space parametrized by deep neural networks (DNNs) that can serve as a proxy to generate samples similar to observed independent data points. We call such a space \textit{generator} class under the GAN framework. Given an input from an easy-to-sample distribution, we want the distribution of the generator output to be close to the data distribution. To this end, a \textit{discriminator} is trained to measure the difference between data samples and generated samples, and the generator is trained to confuse the discriminator. Specifically, GANs aim at solving the following  problem to get a random sample generator $g^*$:
\begin{equation}
    \label{eq:GAN}
    g^*=\underset{g\in Q_g}{\arg\inf}\sup_{f\in F_d} \E_{X\sim \mu}\big[f\circ g(X)\big] -\E_{Y\sim p}\big[f(Y)\big]
\end{equation}
where $Q_g$ is the generator class, $\mu$ is some easy-to-sample density and $p$ is the true underlying density. Examples include the GAN under Total Variation metric \citep{Goodfellow14}, where the discriminator class is all bounded functions ($F_d=\CalW^0_\infty$); the Wasserstein GAN \citep{Arjovsky17}, where the discriminator class is all  Lipschitz functions ($F_d=\CalW^1_\infty$); and the MMD GAN \citep{li2017mmd}, where $F_d=\CalH^\beta_{mix}$, which is equivalent to the RKHS generated by a separable Mat\'ern kernel.


In practice, both the generator class $Q_g$ and the discriminator class $F_d$ are estimated by deep neural networks. In our paper, we consider the following \textit{ReLU} nets.
\begin{defn}
Let $\eta(x)=\max(x,0)$ and for a vector $\Bx$, $\eta(\Bx)$ is applied element-wisely. A ReLU network $\Phi(H,W,S,B)$ is the space of functions in the form
$$f(\Bx)=[A_{H}\eta(\cdot)+b_{H}]\circ\cdots\circ[A_{1}\Bx+b_{1}],$$
where $A_h\in\Real^{W\times W}, b_h\in\Real^W, h\in[H],$
$\sum_{h=1}^H\|A_h\|_{l^0}+\|b_h\|_{l^0}\leq S$, and $\max_{h\in[H]}\|A_h\|_{l^\infty}\vee\|b_h\|_{l^\infty}\leq B.$
\end{defn}
Because the true underlying density $p$ is inaccessible, it is replaced by a regularized empirical density $\hat{p}$  adaptive to $n$ i.i.d samples $\{X_i\}_{i=1}^n\subset\Real^D$ generated from $p$ as
\begin{align}\label{eq:empde}
    \hat{p}(\Bx)=\frac{1}{n}\sum_{i=1}^n\delta(\Bx-X_i),
\end{align} 
where $\delta$ is the Dirac delta function. Then, \eqref{eq:GAN} becomes
\begin{equation}
\label{eq:GAN_empirical}
    \hat{g}_n=\argmin_{g\in \Phi_g}\max_{f\in\Phi_d}\E_{X\sim\mu}\big[f\circ g(X)\big]-\E_{Y\sim \hat{p}}\big[f(Y)\big].
\end{equation}



Obviously, the performance of $\hat{g}_n$ is highly correlated to the IPM distance between the true density $p$ and its estimator $\hat{p}$. 
Because the purely empirical density estimator does not utilize any smoothness assumption about $p$, it can have a much slower convergence rate, as shown in the following theorem.
\begin{theorem}
\label{thm:empEst}
Let  $\hat{p}$ be an empirical density estimator adaptive to the n i.i.d samples $\{X_i\}_{i=1}^n\sim p$, we have  almost surely
\begin{equation*}
    \sup_{p\in\CalH^\alpha_{mix}(L_\alpha)}d_{F_d}(\hat{p},p)\gtrsim 
    \begin{cases}
    n^{-\frac{\beta}{D}}\quad &\text{if}\ F_d=\CalW^\beta_\infty\\
    n^{-\frac{1}{D}}\quad &\text{if}\ F_d=\Phi(H,W,S,B)
    \end{cases}
\end{equation*}
where $\Phi(H,W,S,B)$ satisfies
$
    H\geq 3+2\lceil log_2\big(\frac{3^D}{ c}\big)+5\rceil\lceil\log_2 D\rceil,
    W\geq 40Dn,\ S\geq HW^2, B\geq 4\sqrt[D]{2n}.
$

\end{theorem}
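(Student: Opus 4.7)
The plan is to exhibit, for any realization of the samples $\{X_i\}_{i=1}^n$, a fixed density $p \in \CalH^\alpha_{mix}(L_\alpha)$ and a specific test function $f \in F_d$ for which $\bigl|\int f\,dp - \int f\,d\hat p\bigr|$ is at least the claimed rate. Throughout I take $p\equiv 1$ on $\Omega=[0,1]^D$, the uniform density, which belongs to $\CalH^\alpha_{mix}(L_\alpha)$ for every $\alpha$ and every $L_\alpha\geq 1$ because only its constant basis coefficient is nonzero. The obstruction for the empirical measure is a pigeonhole/packing fact: after partitioning $\Omega$ into $N^D$ congruent sub-cubes with $N:=\lceil(2n)^{1/D}\rceil$, at least $N^D-n\geq n$ of them contain no $X_i$, and any function supported inside such empty cubes is invisible to $\hat p$.

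For the H\"older part, fix a reference bump $\phi\in C_c^\infty$ supported in $[-1/4,1/4]^D$ with $\int\phi>0$ and $\|\phi\|_{\CalW^\beta_\infty}\leq 1$. Let $c_1,\dots,c_n$ be centers of $n$ empty sub-cubes and set $r=1/(4N)$, so that the rescaled-translated bumps $\phi_i(\Bx):=r^\beta\phi\bigl((\Bx-c_i)/r\bigr)$ have pairwise disjoint supports, each strictly inside its own empty cube. The usual H\"older-scaling identities give $\|\phi_i\|_{\CalW^\beta_\infty}\lesssim 1$ uniformly in $i$; for the sum I bound the cross-support contributions to each H\"older seminorm using the inter-support separation $\geq r$ together with the amplitude bound $\|\phi_i\|_\infty\lesssim r^\beta$, which shows $\|\sum_i\phi_i\|_{\CalW^\beta_\infty}\lesssim 1$. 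Setting $f:=c_0 L_\beta\sum_{i=1}^n\phi_i$ for a small absolute constant $c_0$ places $f$ in $F_d=\CalW^\beta_\infty(L_\beta)$, and since every $\phi_i$ vanishes at every sample point,
\begin{align*}
    d_{F_d}(\hat p,p)\;\geq\;\int f\,d(p-\hat p)\;=\;c_0 L_\beta\sum_{i=1}^n\int \phi_i(\Bx)\,d\Bx\;\asymp\;n\cdot r^{\beta+D}\;\asymp\;n^{-\beta/D},
\end{align*}
which is the claimed bound and holds surely (hence almost surely).

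For the ReLU case I specialize to $\beta=1$ and replace the smooth bump by the piecewise-linear product of 1-D hats $\phi(\Bx)=\prod_{j=1}^D\max\bigl(0,1-4|x_j|\bigr)$, whose Lipschitz constant is $O(1)$. Each 1-D hat admits the exact representation $\max(0,1+4t)-2\max(0,4t)+\max(0,4t-1)$, i.e.\ a width-three, depth-two ReLU subnet. The $D$-fold product is then assembled by a balanced binary tree of Yarotsky-style $\epsilon$-approximate multiplication gadgets: each such gadget uses $O(\log_2\epsilon^{-1})$ layers and constant width, so the tree has depth $O(\lceil\log_2 D\rceil\cdot\log_2\epsilon^{-1})$ and per-layer width $O(D)$. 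Placing $n$ such bumps $\phi_i$ at the empty-cube centers $c_i$ in parallel multiplies the width by $n$, while the translation and $1/r\asymp n^{1/D}$ scaling at the input are absorbed into affine maps whose weight magnitudes are $\lesssim n^{1/D}$. Choosing $\epsilon\asymp n^{-2/D}$ yields a network $\tilde f$ with $\|\tilde f-f\|_\infty=o(n^{-1/D})$ whose budget matches exactly the stated bounds $H\geq 3+2\lceil\log_2(3^D/c)+5\rceil\lceil\log_2 D\rceil$, $W\geq 40Dn$, $S\geq HW^2$, $B\geq 4\sqrt[D]{2n}$, so $\tilde f\in\Phi(H,W,S,B)$ and
\begin{align*}
    d_{F_d}(\hat p,p)\;\geq\;\int\tilde f\,d(p-\hat p)\;\geq\;\int f\,d(p-\hat p)-2\|f-\tilde f\|_\infty\;\gtrsim\;n^{-1/D}.
\end{align*}

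The main obstacle is the two-layer bookkeeping: first the cross-bump H\"older-seminorm estimate, where I need to argue carefully that the difference quotient of the $\lfloor\beta\rfloor$-th derivative between points living in \emph{different} support cubes is controlled by the separation-to-amplitude ratio rather than the larger within-cube scaling; and second the explicit matching of the Yarotsky-style multiplication tree to the prescribed $(H,W,S,B)$ budget, including the precise $\lceil\log_2 D\rceil$ depth factor and the $\sqrt[D]{n}$ weight blow-up coming from the $1/r$ input rescaling. Once these two checks are in hand, the pigeonhole construction produces the $n^{-\beta/D}$ and $n^{-1/D}$ lower bounds surely, regardless of where the random samples land.
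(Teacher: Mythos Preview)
Your proposal is correct and follows essentially the same route as the paper: take $p$ uniform, use pigeonhole to locate $\gtrsim n$ sample-free sub-cubes at scale $\asymp n^{-1/D}$, and build $f$ as a sum of disjointly supported bumps in those cubes. The only practical difference is that the paper works with cardinal B-splines $M^{D,\beta}_{k^*,\Bs}$ and the norm equivalence $\|f\|_{\CalW^\beta_\infty}\asymp\sup_{k}2^{\beta k}\sup_{\Bs}|f_{k,\Bs}|$, which makes the H\"older norm of $f^*=2^{-\beta k^*}\sum_{\Bs}M^{D,\beta}_{k^*,\Bs}$ equal to $1$ by inspection---sidestepping exactly the cross-support seminorm bookkeeping you flagged as the main obstacle---and in the ReLU case invokes a ready-made B-spline-to-ReLU approximation lemma that delivers the stated $(H,W,S,B)$ budget directly rather than assembling the Yarotsky multiplication tree by hand.
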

The slow convergence rate shown in Theorem \ref{thm:empEst} is because the empirical density estimator does not utilize any smoothness information of the underlying function. Thus, the GANs with empirical density estimator encounter the curse of dimensionality.
In contrast, if we replace the purely empirical density estimator by AHCE, we can achieve a much faster convergence rate, which is almost dimension-free. 
\begin{theorem}
\label{thm:GAN-improved}
Suppose Assumption \ref{A1} holds. Suppose the empirical density estimator $\hat{p}$ in \eqref{eq:GAN_empirical} is replaced by the AHCE $\tilde{p}_n$ , the discriminator class  $\Phi_d=\Phi(H_d,W_d,S_d,B_d)$ satisfies
\[H_d\asymp \log n+1, W_d+S_d\asymp (n^{\frac{\alpha D+\beta}{2\alpha\beta+\beta}}+n^{\frac{D}{2\beta}})H_d,\]
and the generator class is a ReLU net $\Phi_g=[g_1,...,g_D]$ with $g_k\in \Phi(H_g,W_g,S_g,B_g)$, $k=1,...,D$ satisfies
\begin{align*}
    &H_g\gtrsim \log n+1, W_g+S_g\gtrsim (n^{\frac{\alpha D+\beta}{2\alpha^2+2\alpha+1/2}}+n^{\frac{D}{2\alpha+1}})H_g.
\end{align*}
Then we have the following convergence rate for $\hat{g}_n$ as
\begin{align*}
    \E d_{\CalW^\beta_\infty(L_\beta)}(\hat{g}_n{\#\mu},p)\lesssim n^{-\frac{\alpha+\beta/D}{2\alpha+1}}\big[\log n\big]^{\kappa}+n^{-\frac{1}{2}},
\end{align*}
where $\hat{g}_n{\#\mu}$ denotes the pushforward measure of $\mu$ by GAN $\hat{g}_n$ and $\kappa=D\frac{\alpha+\beta/D+1}{2\alpha+1}$.
\end{theorem}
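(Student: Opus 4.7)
My plan is to reduce the GAN error into three pieces: (a) the IPM consistency of the plug-in density estimator $\tilde p_n$, which is already delivered by Theorem \ref{thm:densityIPM}; (b) a ReLU-approximation error for the discriminator class relative to $\CalW^\beta_\infty(L_\beta)$; and (c) a ReLU-approximation error for the generator class relative to a transport map from $\mu$ to $p$. The prescribed network sizes will be chosen precisely to equalize (b) and (c) with the rate from (a).

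The first step is to triangulate through $\tilde p_n$:
\begin{equation*}
d_{\CalW^\beta_\infty(L_\beta)}(\hat g_n\#\mu, p)\leq d_{\CalW^\beta_\infty(L_\beta)}(\hat g_n\#\mu, \tilde p_n) + d_{\CalW^\beta_\infty(L_\beta)}(\tilde p_n, p),
\end{equation*}
and apply Theorem \ref{thm:densityIPM} with $\nu=\beta/D$ to bound the second summand in expectation by $n^{-(\alpha+\beta/D)/(2\alpha+1)}[\log n]^{\kappa} + n^{-1/2}$, which is already the advertised rate.

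The second step is the standard GAN error decomposition on the first summand. Writing $d_{\Phi_d}$ for the IPM generated by the discriminator network class,
\begin{equation*}
d_{\CalW^\beta_\infty(L_\beta)}(\hat g_n\#\mu, \tilde p_n)\leq \mathcal{E}_{\mathrm{disc}} + d_{\Phi_d}(\hat g_n\#\mu, \tilde p_n),
\end{equation*}
where $\mathcal{E}_{\mathrm{disc}}\leq 2\sup_{f\in\CalW^\beta_\infty(L_\beta)}\inf_{\tilde f\in\Phi_d}\|f-\tilde f\|_\infty$ is a uniform ReLU-approximation error. Yarotsky-type approximation of Hölder functions by ReLU nets, combined with $H_d\asymp\log n+1$ and $W_d+S_d\asymp (n^{(\alpha D+\beta)/(2\alpha\beta+\beta)}+n^{D/(2\beta)})H_d$, drives $\mathcal{E}_{\mathrm{disc}}$ below the target rate. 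For the remaining piece I invoke the minimality of $\hat g_n$ on the empirical objective: for any $g^{\ast}\in\Phi_g$,
\begin{equation*}
d_{\Phi_d}(\hat g_n\#\mu, \tilde p_n)\leq d_{\Phi_d}(g^{\ast}\#\mu, \tilde p_n)\leq d_{\Phi_d}(g^{\ast}\#\mu, p) + d_{\Phi_d}(p, \tilde p_n),
\end{equation*}
and the last summand is absorbed into $d_{\CalW^\beta_\infty(L_\beta)}(p,\tilde p_n)$ after a suitable rescaling of $\Phi_d$ (its weights are capped at $B_d$ and its depth at $H_d$, so a constant multiple lies inside $\CalW^\beta_\infty(L_\beta)$), which is again controlled by Theorem \ref{thm:densityIPM}.

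The third step constructs $g^{\ast}$ from a transport map. Under Assumption \ref{A1}, which I read as giving a transport map $T:[0,1]^D\to\Real^D$ with $T\#\mu=p$ whose coordinates inherit mixed smooth Sobolev regularity of order $\alpha$ from $p$, I approximate $T$ component-wise by a ReLU net $g^{\ast}=[g^{\ast}_1,\dots,g^{\ast}_D]\in\Phi_g$. Since functions in $\CalW^\beta_\infty(L_\beta)$ are $\min(\beta,1)$-Hölder,
\begin{equation*}
d_{\Phi_d}(g^{\ast}\#\mu, p)=d_{\Phi_d}(g^{\ast}\#\mu, T\#\mu)\lesssim \|g^{\ast}-T\|_\infty^{\min(\beta,1)},
\end{equation*}
and a second Yarotsky-type bound with the prescribed $H_g\asymp\log n+1$, $W_g+S_g\asymp(n^{(\alpha D+\beta)/(2\alpha^2+2\alpha+1/2)}+n^{D/(2\alpha+1)})H_g$ makes the right-hand side of the target order. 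Assembling the three steps and absorbing sub-leading logarithms into $[\log n]^{\kappa}$ yields the claimed bound.

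The main obstacle is the simultaneous balancing in Steps 2 and 3. The exponents $(\alpha D+\beta)/(2\alpha\beta+\beta)$ and $(\alpha D+\beta)/(2\alpha^2+2\alpha+1/2)$ are precisely those that equate the discriminator ReLU-approximation rate, and the generator ReLU-approximation rate propagated through the $\min(\beta,1)$-Hölder modulus of the discriminator, with the statistical rate supplied by Theorem \ref{thm:densityIPM}; any change in $\alpha$ or $\beta$ shifts both sides and only one joint choice closes the system. A secondary subtlety is the embedding of the rescaled $\Phi_d$ into $\CalW^\beta_\infty(L_\beta)$ with constants that do not blow up with the network parameters, which is what pins down the scale constraints on $B_d$ and ties the bookkeeping together.
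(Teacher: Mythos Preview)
Your overall decomposition matches the paper's, but there is a genuine gap in Step~2 that your ``secondary subtlety'' does not close, and Step~3 diverges from what is actually needed.

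\textbf{The gap in Step~2.} You bound $d_{\Phi_d}(p,\tilde p_n)$ by claiming that a rescaled $\Phi_d$ sits inside $\CalW^\beta_\infty(L_\beta)$. For $\beta>1$ this is simply false: ReLU networks are piecewise linear, so they are not in $\CalW^\beta_\infty$ for any $\beta>1$, no matter what constant you divide by. Even for $\beta\le 1$, the Lipschitz constant of a depth-$H_d$ ReLU net scales like $(B_dW_d)^{H_d}$; with $H_d\asymp\log n$ and $W_d$ polynomial in $n$ this blows up polynomially, which destroys the rate. The paper never attempts this embedding. Instead, after using minimality of $\hat g_n$, it switches \emph{back} from $d_{\Phi_d}$ to $d_{\CalW^\beta_\infty}$ at the cost of another $2\varepsilon_1$ and lands on $\inf_{g\in\Phi_g}d_{\CalW^\beta_\infty}(g\#\mu,\tilde p_n)$ directly---so $d_{\Phi_d}(p,\tilde p_n)$ never appears.

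\textbf{The divergence in Step~3.} Because of the above, the generator must be compared to a transport map $T_n$ with $T_n\#\mu=\tilde p_n$, not $T\#\mu=p$. This is where the real work in the paper is: one needs $\tilde p_n$ to satisfy the hypotheses of Caffarelli's regularity theorem (lower bounded and H\"older on a convex support). The paper proves, via a Hoeffding-type concentration argument on the coefficients $\tilde p_{\Bs}-\hat p_{\Bs}$, that $\|\tilde p_n\|_{\CalW^{\alpha-1/2}_\infty}$ is bounded with probability tending to one, uniformly in $n$. Only then does Caffarelli give $T_n\in\CalW^{\alpha+1/2}_\infty$, which is what the generator network is sized to approximate. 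Your reading of Assumption~\ref{A1} as directly supplying a smooth transport map to $p$ with mixed-Sobolev coordinates is not what the assumption says, and in any case transporting to $p$ instead of $\tilde p_n$ forces you back into the failed embedding above.
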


\begin{rem}
Our analysis is different from \cite{liang2018generative} and \cite{Uppal19} in that we consider the neural network as a Transformation from the easy-to-generate distribution to the target distribution instead of treating $\Phi_g$ as solely a density estimator. As a result, we need to apply techniques in optimal transport to prove the existence of the transformation.
\end{rem}

We can see from Theorem \ref{thm:empEst} that if the discriminator class is a large-enough ReLU net then the error rate of the empirical density is doomed to be slow for high-dimensional data. As a result, any random sample generator adaptive to the empirical density cannot have a small error rate as AHCE. On the other hand, it can be easily checked that there must be some ReLU nets that can satisfy the requirements of discriminator class in both Theorems \ref{thm:empEst} and \ref{thm:GAN-improved}. If  the discriminator of a GAN is selected among this class of ReLU nets, the advantage of AHCE is obvious --  the resulting GAN generates samples much closer to the real distribution than its counterpart, which uses empirical density.


Another interpretation of Theorem \ref{thm:GAN-improved} is that the \emph{smoothness} of discriminator $\Phi_d$ should be \emph{adaptive} to the sample size. To be more specific, for any $f\in\Phi_d$, the expectation of $f(X)$ induced by $\tilde{p}_n$ can be written as
\begin{equation}
        \E_{X\sim\tilde{p}_n}[f(X)]=\frac{1}{n}\sum_{i=1}^n\sum_{\Bk\in\NatInt^D,\Bs\in\rho(\Bk)}b_{\Bs,n}\hat{f}_{\Bs}\phi_{\Bs}(X_i)\coloneqq \frac{1}{n}\sum_{i=1}^n\tilde{f}_n(X_i)=\E_{X\sim\hat{p}}[\tilde{f}_n(X)]\label{eq:smoothing}
\end{equation}
where $\{b_{\Bs,n}\}$ is our smoothing policy 
and $\hat{f}_{\Bs}=\int f\phi_{\Bs}dx$. The $\tilde{f}_n$ can be viewed as a smoothing function of $f$. This indicates that if the discriminator class $F_d$ is adaptively chosen such that the distance between any $f$ in $F_d$ and its smoothing $\tilde{f}_n$ is close, then the fast convergence rate of the GAN \eqref{eq:GAN_empirical} with the empirical density estimator is also guaranteed.

\begin{theorem}
\label{thm:smoothing}
Suppose the true density $p$, the predetermined density $\mu$ and the ReLu net $\Phi_g$ satisfy the same conditions in Theorem \ref{thm:GAN-improved}. 
Suppose the empirical discriminator $\Phi_d$ (neural net or other approximator) satisfies Asusmption \ref{A2}. Then the GAN $\hat{g}_n$ in \eqref{eq:GAN_empirical} has the following convergence rate
\[\E d_{\CalW^\beta_\infty}(\hat{g}_n\#\mu,p)\lesssim n^{-\frac{\alpha+\beta/D}{2\alpha+1}}\big[\log n\big]^{D\frac{\alpha+\beta/D+1}{2\alpha+1}}+n^{-\frac{1}{2}}.\]
\end{theorem}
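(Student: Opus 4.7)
The plan is to reduce the statement to Theorem \ref{thm:GAN-improved} by showing that, up to a negligible error, replacing the AHCE $\tilde{p}_n$ by the empirical measure $\hat{p}$ in the GAN objective is harmless when the discriminator class $\Phi_d$ is ``adaptive'' in the sense of Assumption \ref{A2}. Write $L_n(g,f)=\E_\mu[f\circ g]-\E_{\hat{p}}[f]$ and $\tilde L_n(g,f)=\E_\mu[f\circ g]-\E_{\tilde{p}_n}[f]$, let $\hat{g}_n=\argmin_{g\in\Phi_g}\sup_{f\in\Phi_d}L_n(g,f)$ be the GAN we actually analyze, and let $g^*_n=\argmin_{g\in\Phi_g}\sup_{f\in\Phi_d}\tilde L_n(g,f)$ be the auxiliary AHCE-driven GAN whose pushforward is already controlled by Theorem \ref{thm:GAN-improved} under the same hypotheses on $\mu$, $p$, and $\Phi_g$ (the conditions on $\Phi_d$ in Theorem \ref{thm:GAN-improved} are implied by those in Assumption \ref{A2}).

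The first key step is the identity \eqref{eq:smoothing}: for any $f\in\Phi_d$ and any $g\in\Phi_g$,
\begin{align*}
L_n(g,f)-\tilde L_n(g,f)=\E_{\tilde{p}_n}[f]-\E_{\hat{p}}[f]=\E_{\hat{p}}[\tilde f_n-f],
\end{align*}
where $\tilde f_n$ is $f$ after the AHCE smoothing policy $\{b_{\Bs,n}\}$ has been applied to its basis coefficients $\hat f_{\Bs}$. Assumption \ref{A2} is engineered so that $\sup_{f\in\Phi_d}\|f-\tilde f_n\|_\infty\le\varepsilon_n$ for some $\varepsilon_n$ at most of the claimed order. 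Hence $|L_n-\tilde L_n|\le\varepsilon_n$ uniformly on $\Phi_g\times\Phi_d$, and a standard two-sided sup-inf sandwich yields
\begin{align*}
\sup_{f\in\Phi_d}\tilde L_n(\hat{g}_n,f)\le\sup_{f\in\Phi_d}\tilde L_n(g^*_n,f)+2\varepsilon_n,
\end{align*}
so $\hat{g}_n$ is a $2\varepsilon_n$-approximate minimizer of the AHCE-driven problem.

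To translate this near-optimality into an IPM bound on $d_{\CalW^\beta_\infty}(\hat{g}_n\#\mu,p)$, I would reuse the approximation-theoretic step from the proof of Theorem \ref{thm:GAN-improved}: the ReLU parameters $(H_d,W_d,S_d,B_d)$ are large enough that every $h\in\CalW^\beta_\infty(L_\beta)$ admits some $f_h\in\Phi_d$ with $\|h-f_h\|_\infty$ no worse than the target rate. Combining this with the $2\varepsilon_n$-sub-optimality above and the triangle inequality produces a bound on $d_{\CalW^\beta_\infty}(\hat{g}_n\#\mu,\tilde{p}_n)$, after which Theorem \ref{thm:densityIPM} applied to $d_{\CalW^\beta_\infty}(\tilde{p}_n,p)$ closes the argument. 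The main obstacle I anticipate is making the smoothing identity precise when $f$ is a ReLU net rather than a truncated series: Assumption \ref{A2} encodes exactly this requirement, so the substantive work is to verify that the width/depth conditions in Theorem \ref{thm:GAN-improved}, combined with a suitable choice of smoothing level in policies 1--3, force $\|f-\tilde f_n\|_\infty\lesssim n^{-(\alpha+\beta/D)/(2\alpha+1)}[\log n]^{D(\alpha+\beta/D+1)/(2\alpha+1)}+n^{-1/2}$; once this compatibility is established the minimax reduction above is essentially bookkeeping.
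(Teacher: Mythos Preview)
Your overall architecture is close to the paper's, but there is a genuine gap in how you handle the crucial term $\E_{\hat p}[\tilde f_n-f]$. You assert that ``Assumption \ref{A2} is engineered so that $\sup_{f\in\Phi_d}\|f-\tilde f_n\|_\infty\le\varepsilon_n$,'' which would indeed make $|L_n-\tilde L_n|\le\varepsilon_n$ immediate. But this misreads the assumption: part (ii) of Assumption \ref{A2} only gives the $L^2$ bound $\|f-\tilde f_n\|_2\lesssim\varepsilon$, not an $L^\infty$ bound. With only $L^2$ control, the uniform estimate $\sup_{f\in\Phi_d}\bigl|\tfrac1n\sum_i(\tilde f_n-f)(X_i)\bigr|\lesssim\varepsilon_n$ does not follow, because an empirical average against random $X_i$'s is not dominated by the population $L^2$ norm uniformly over an infinite function class without additional concentration input.

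This is precisely the role of part (iii) of Assumption \ref{A2}, which you never invoke. The paper first bounds $\bigl|\tfrac1n\sum_i(\tilde f_n-f)(X_i)\bigr|\le\|f-\tilde f_n\|_n$ (the empirical $L^2$ norm, via Cauchy--Schwarz), and then applies a uniform empirical-process result (Theorem 2.1 of \cite{van2014uniform}), driven by the metric-entropy integral in (iii), to obtain $\E\sup_{f\in\Phi_d}\bigl(\|f-\tilde f_n\|_n-\|f-\tilde f_n\|_2\bigr)\lesssim n^{-1/2}$. Only after this concentration step can hypothesis (ii) be used. Without it, your ``standard two-sided sup-inf sandwich'' is unjustified and the reduction to the AHCE-driven problem does not go through. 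Once this gap is repaired, the remainder of your outline---using (i) to pass from $\Phi_d$ to $\CalW^\beta_\infty$, then the generator-error analysis from Theorem \ref{thm:GAN-improved} and Theorem \ref{thm:densityIPM} for $d_{\CalW^\beta_\infty}(\tilde p_n,p)$---matches the paper's argument.
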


\begin{rem}
Although Theorem \ref{thm:smoothing} implies that by carefully choosing the discriminator class, the GANs with empirical density estimator can still achieve a faster convergence rate, the conditions are much more complicated than those in Theorem \ref{thm:GAN-improved}. Moreover, Theorem \ref{thm:smoothing} suggests that discriminators with smooth activation functions can improve performance, which is empirically verified in Section \ref{sub:mnist}.
\end{rem}


\subsection{Goodness of Fit Test}
The problem of testing the goodness of fit of a model is an enduring and ever-growing research area,  with various tests polices proposed in the statistics community. Traditional methods include the maximum likelihood ratio test, Kolmogorov-Smirnov test, and the $\chi^2$ test. An alternative approach is to use the smoothed $L^2$ distance between the empirical characteristic function of the sample and the characteristic function of the target density.  A recent popular method \citep{jordan2016,Liu20} employs the Stein operator in a reproducing kernel Hilbert space while this method is suboptimal \citep{ming2017}. To resolve this suboptimal in test power, \cite{ming2019} propose a Maximum Mean Discrepancy tests using Gaussian kernel with an appropriately chosen scaling parameter.

In this subsection, we propose a non-parametric statistical test for the goodness of fit problem based on AHCE. Consider the goodness of fit test problems with general alternatives, i.e. $H_{0}: p=p_{0}$ v.s. $H_{1}: p\neq p_{0}$, where $p$ is the distribution of the observed samples, and $p_0$ is the target distribution. Our test statistic is based on the empirical estimate of the $L^2$ distance between the distribution of the observed samples $\tilde{p}_{n}$ and the target distribution $p_0$, taking the form of a U-statistic in terms of an adaptive kernel. For the simplicity in the theoretical analysis, we use wavelet expansion and choose the truncation smoothing policy with level $2^l\asymp n^{\frac{1}{2\alpha}}[\log n]^{\frac{\alpha(\alpha+0.5)}{2\alpha}}$ in this subsection.

We first show the consistency of the AHCE in probability and an upper bound for the mean integrated squared error (MISE) of the AHCE in the following propositions.

\begin{prop}
\label{pro:consistent}
Let $p\in\CalH^{\alpha}_{mix}$ be a probability density function with $\alpha>0$. Let $\mathcal{X}=\{X_{1}, \cdots, X_{n}\}\subset [0,1]^D$ be samples drawn from $p$, and $\tilde{p}_n$ be the estimated density function given by \eqref{eq: densityEstimator}, then the AHCE $\tilde{p}_n$ is consistent and satisfies $\lim_{n\rightarrow\infty}P(\|\tilde{p}_{n}-p\|^2_{2}=0)=1.$
\end{prop}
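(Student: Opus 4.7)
The plan is to interpret the claim as convergence in probability of the squared $L^2$ distance, since a literal pointwise equality $\|\tilde p_n - p\|_2^2 = 0$ with probability one is unattainable for any finite-sample density estimator; concretely, I read this as showing that for every $\epsilon>0$, $P(\|\tilde p_n - p\|_2^2 > \epsilon)\to 0$. The task then reduces to controlling the mean integrated squared error (MISE) and invoking Markov's inequality.

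First I would bound the MISE by instantiating Theorem~\ref{thm:densityIPM} with the discriminator class $F_d = \CalH^0_{mix}(1)$, i.e.\ the $L^2$ unit ball. Since the IPM against the $L^2$ unit ball coincides with $\|p - \tilde p_n\|_2$ (by Cauchy–Schwarz, or equivalently by the Parseval identity applied to the orthonormal basis), the $\beta = 0$, $\nu = 0$ case of the theorem yields
\[
\E\|\tilde p_n - p\|_2 \;\lesssim\; n^{-\alpha/(2\alpha+1)}[\log n]^{D(\alpha+1)/(2\alpha+1)} + n^{-1/2},
\]
which tends to zero for every fixed $\alpha > 0$ and $D$. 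Markov's inequality then gives $P(\|\tilde p_n - p\|_2^2 > \epsilon) = P(\|\tilde p_n - p\|_2 > \sqrt{\epsilon}) \leq \E\|\tilde p_n - p\|_2/\sqrt{\epsilon} \to 0$, proving consistency in probability.

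As a self-contained alternative that does not route through Theorem~\ref{thm:densityIPM}, I would expand both $p$ and $\tilde p_n$ in the wavelet basis and use Parseval to decompose
\[
\E\|\tilde p_n - p\|_2^2 \;=\; \sum_{|\mathbf k|>l}\sum_{\mathbf s\in\rho(\mathbf k)} \check p_{\mathbf s}^2 \;+\; \sum_{|\mathbf k|\leq l}\sum_{\mathbf s\in\rho(\mathbf k)} \Var(\tilde p_{\mathbf s,n}).
\]
The first (bias) sum is controlled by the coefficient decay forced by $p \in \CalH^\alpha_{mix}$ via the identity in Definition~\ref{dfn:mixSobolevNorm}; the second (variance) sum is at most $n^{-1}$ times the cardinality of the hyperbolic cross of level $l$, for which the standard estimate is $O(2^l l^{D-1})$. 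The truncation level $2^l \asymp n^{1/(2\alpha)}[\log n]^{D(\alpha+\nu+1/2)/(2\alpha)}$ prescribed by Policy 2 is exactly the one that drives both terms to zero simultaneously.

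The only technical subtlety, rather than a real obstacle, is the mild mismatch between the requirement $\nu > 0$ in the smoothing policies and the $\beta = 0$ case of Theorem~\ref{thm:densityIPM}; this is resolved by taking $\nu$ to be an arbitrarily small positive constant, which only inflates the (non-optimal) logarithmic factor and is harmless for the consistency conclusion. The hard part of the companion theorems (the sharp rate and the hyperbolic-cross cardinality estimate) has already been done there, so the present proposition is essentially a one-step corollary combining that expectation bound with Markov's inequality.
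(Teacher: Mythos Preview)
Your proposal is correct and follows essentially the same route as the paper: the paper's proof of Proposition~\ref{pro:consistent} invokes Proposition~\ref{pro:mise_error_bound} (which is itself Theorem~\ref{thm:densityIPM} at $\beta=0$) to get $\E\|\tilde p_n - p\|_2^2 \to 0$, and your argument does the same, only citing Theorem~\ref{thm:densityIPM} directly and making the Markov step explicit. Your self-contained bias--variance alternative is just the content of that theorem's proof specialized to $\beta=0$, so there is no substantive difference.
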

Proposition \ref{pro:consistent} guarantees that our test statistic can distinguish two distributions, with a testing power (i.e., the probability of correctly rejecting null hypothesis $H_0: p=p_{0}$) tending to one asymptotically.
\begin{prop}
\label{pro:mise_error_bound}
Suppose the conditions of Proposition  \ref{pro:consistent} hold.
The MISE satisfies the following upper bound 
$$
    \E ||p-\tilde{p}_n||_{2}\lesssim L_\alpha n^{-\frac{\alpha}{2\alpha+1}}\left(\log n\right)^{\frac{D(\alpha+1)}{2\alpha+1}}.
$$
\end{prop}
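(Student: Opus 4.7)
The plan is a bias--variance decomposition for the orthogonal series estimator $\tilde p_n$, followed by Jensen's inequality to pass from the squared MISE to $\E\|p-\tilde p_n\|_{2}$. The ingredients I would use are Parseval's identity for the wavelet basis $\{\phi_{\Bs}\}$, the norm characterization in Definition \ref{dfn:mixSobolevNorm}, and the standard cardinality estimate $|\{\Bs:|\Bk|\le l,\,\Bs\in\rho(\Bk)\}|\asymp 2^{l}l^{D-1}$ for a hyperbolic cross of level $l$.

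First I would write $\E\|p-\tilde p_n\|_{2}^{2}=B+V$ with $B=\|p-\E\tilde p_n\|_{2}^{2}$ and $V=\E\|\tilde p_n-\E\tilde p_n\|_{2}^{2}$. Since the truncation policy zeroes every coefficient with $|\Bk|>l$, the mean $\E\tilde p_n=\sum_{|\Bk|\le l}\sum_{\Bs\in\rho(\Bk)}\check p_{\Bs}\,\phi_{\Bs}$ is exactly the hyperbolic-cross projection, so by Parseval's identity
$$
B=\sum_{|\Bk|>l}\sum_{\Bs\in\rho(\Bk)}\check p_{\Bs}^{2}\;\le\;2^{-2\alpha l}\sum_{|\Bk|>l}\sum_{\Bs\in\rho(\Bk)}2^{2\alpha|\Bk|}\check p_{\Bs}^{2}\;\lesssim\;2^{-2\alpha l}L_\alpha^{2},
$$
the last inequality being the mixed-Sobolev norm bound. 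For the variance, each empirical coefficient $\tilde p_{\Bs,n}=\frac1n\sum_{i=1}^{n}\phi_{\Bs}(X_i)$ satisfies $\Var(\tilde p_{\Bs,n})\le n^{-1}\int\phi_{\Bs}^{2}\,p\le\|p\|_\infty/n=O(1/n)$ since $\CalH^\alpha_{mix}\hookrightarrow L^\infty$ whenever $\alpha>1/2$; summing over the hyperbolic cross then gives $V\lesssim 2^{l}l^{D-1}/n$.

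Plugging in the truncation level so that $2^{l}$ scales like $n^{1/(2\alpha+1)}$ up to a log factor balances bias against variance at $B+V\lesssim n^{-2\alpha/(2\alpha+1)}(\log n)^{c}L_\alpha^{2}$ for an appropriate $c$, and Jensen's inequality $\E\|p-\tilde p_n\|_{2}\le(\E\|p-\tilde p_n\|_{2}^{2})^{1/2}$ delivers the square-root form $L_\alpha\,n^{-\alpha/(2\alpha+1)}(\log n)^{c/2}$ that appears in the statement.

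The main obstacle will be tracking the $(\log n)$ exponent precisely so that it matches the announced $D(\alpha+1)/(2\alpha+1)$. Three sources contribute to it: the multinomial count $(\log n)^{D-1}$ in $|\{\Bk:|\Bk|\le l\}|$, the extra log power baked into the truncation level (the $\nu>0$ correction in Policy 2 of Section \ref{sec:AHCE}), and the residual log factor left in the bias bound after the chosen $l$ is substituted. Reconciling them is routine bookkeeping and parallels what is done in the proof of Theorem \ref{thm:densityIPM}; Proposition \ref{pro:mise_error_bound} is essentially the $\beta=0$ specialization of that argument. A minor side issue is the appeal to $p\in L^\infty$ in the variance step: for $\alpha\le 1/2$ one replaces $\|p\|_\infty/n$ by $\|\phi_{\Bs}\|_\infty^{2}/n$ and invokes the standard $L^\infty$ bound on level-$|\Bk|$ wavelets, absorbing one additional logarithmic factor into the final rate.
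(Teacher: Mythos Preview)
Your proposal is correct and matches the paper's approach: the paper's proof is the single sentence ``This is a special case in Theorem \ref{thm:densityIPM} with $\beta=0$,'' and your bias--variance decomposition is precisely the $\beta=0$ instance of the argument used to prove Theorem \ref{thm:densityIPM}, a connection you already point out. The only cosmetic difference is that the paper works directly with $\E d_{F_d}(p,\tilde p_n)$ and applies Jensen inside the variance step, whereas you bound $\E\|p-\tilde p_n\|_2^2$ first and apply Jensen at the end; for $F_d=\CalH^0_{mix}=L^2$ these are equivalent.
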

Proposition \ref{pro:mise_error_bound} implies that the testing power of the proposed method does not suffer much from the ``curse of dimensionality''.
Following Proposition \ref{pro:mise_error_bound}, we can construct a goodness of fit test statistic based on the $L^2$ norm distance between $\tilde{p}$ and $p_0$.
To this end, we propose a degenerate U-statistics $T_{n}=\frac{1}{n(n-1)}\sum\limits_{1\leq i\neq j\leq n} {\tilde{H}}_n(X_i, X_j)$, where 
\begin{align*}
  \tilde{H}_n(\Bx,\By)=&H_n(\Bx,\By)-\mathbb{E}_{X\sim p_{0}}H_n(\Bx, X)-\mathbb{E}_{X\sim p_{0}}H_n(X, \By)+\mathbb{E}_{X, Y\sim p_{0}}H_n(X, Y), 
\end{align*}
and $
H_n(\Bx,\By)=\sum\limits_{|\bold{k}|=0}\limits^{l}\sum\limits_{\bold{s}\in\rho(\bold{k})}\phi_{\bold{s}}(\Bx)\phi_{\bold{s}}(\By).$
Note that $T_n$ is an estimate of $\|\tilde{p}_{n}-p_0\|_{2}^2$ (with appropriate normalization). Therefore, under the null hypothesis, it converges to zero with an almost dimension-free convergence rate according to Proposition \ref{pro:mise_error_bound}. Furthermore, we can construct an asymptotic distribution of $T_n$ under general condition, shown in the following proposition.
\begin{theorem}\label{thm:wchi}Let $\{Z_i\}_{i=1}^\infty$ be a sequence of independent standard Gaussian random variables.  Assume $p_0\in\CalH_{mix}^{\alpha}(L)$ with $\alpha>0$ and $\mathbb{E}\tilde{H}_n^2(X,Y)<\infty$, $\forall n\geq 1$. Under the null hypothesis, we have $nT_{n}\xrightarrow[]{d} \sum_{i=1}^\infty\lambda_i(Z_i^2-1)$, where $\lambda_i$ are the eigenvalues of the kernel operator such that there exist functions $\psi_{i}$, $\int_{[0,1]^{D}}\tilde{H}(\Bx,\By)\psi_{i}(x)p_{0}(\Bx)d\Bx=\lambda_{i}\psi_{i}(\By)$, and $\tilde{H}(\Bx,\By)=\lim_{n\rightarrow\infty}\tilde{H}_n(\Bx,\By)$. 
\end{theorem}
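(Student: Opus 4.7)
The plan is to reduce the statement to the classical central limit theorem for degenerate U-statistics of order two with a fixed, square-integrable kernel, for which the weak limit is known to be $\sum_i\lambda_i(Z_i^2-1)$ where $\lambda_i$ are the eigenvalues of the associated integral operator. Two genuine wrinkles must be handled: the effective kernel $\tilde H_n$ depends on $n$ through the truncation level $l(n)$, and the limiting kernel $\tilde H$ generically has infinitely many non-zero eigenvalues, so a truncation-and-tightness argument will be needed to close the loop.

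First I would check that $\tilde H_n$ is $p_0$-degenerate: from the definition $\tilde H_n(\Bx,\By)=H_n(\Bx,\By)-\E_{X\sim p_0}H_n(\Bx,X)-\E_{X\sim p_0}H_n(X,\By)+\E_{X,Y\sim p_0}H_n(X,Y)$, a direct calculation gives $\E_{Y\sim p_0}\tilde H_n(\Bx,Y)=0$ for all $\Bx$. Since $\tilde H_n$ is symmetric and of finite rank (a finite linear combination of products $\phi_{\Bs}(\Bx)\phi_{\Bs}(\By)$), the Hilbert--Schmidt integral operator $K_n$ it induces on $L^2(p_0)$ has a finite eigendecomposition $\tilde H_n(\Bx,\By)=\sum_k\lambda_k^{(n)}\psi_k^{(n)}(\Bx)\psi_k^{(n)}(\By)$ with $\{\psi_k^{(n)}\}$ orthonormal in $L^2(p_0)$; the degeneracy forces $\E\psi_k^{(n)}(X)=0$ whenever $\lambda_k^{(n)}\neq 0$. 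Plugging this expansion into $nT_n=\frac{1}{n-1}\sum_{i\neq j}\tilde H_n(X_i,X_j)$ and using the identity $\sum_{i\neq j}u(X_i)u(X_j)=(\sum_i u(X_i))^2-\sum_i u(X_i)^2$ rewrites the statistic as
\[
nT_n=\sum_k\lambda_k^{(n)}\Bigl[\bigl(\tfrac{1}{\sqrt{n-1}}\textstyle\sum_i\psi_k^{(n)}(X_i)\bigr)^2-\tfrac{1}{n-1}\textstyle\sum_i\psi_k^{(n)}(X_i)^2\Bigr].
\]
By the multivariate CLT applied to the orthonormal system $\{\psi_k^{(n)}\}$ jointly with the law of large numbers, every fixed-$M$ truncation converges in distribution to $\sum_{k=1}^M\lambda_k(Z_k^2-1)$ with $Z_k$ i.i.d.\ standard normals, provided $\lambda_k^{(n)}\to\lambda_k$ and the relevant low-order moments of $\psi_k^{(n)}$ converge to those of $\psi_k$.

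The convergence of the spectra is supplied by the fact that $H_n$ is the partial sum of the reproducing kernel expansion on a hyperbolic cross and hence $\tilde H_n\to\tilde H$ in $L^2(p_0\otimes p_0)$; the residual bound of Proposition \ref{pro:mise_error_bound} (combined with $p_0\in\CalH_{mix}^\alpha$) gives a quantitative rate. This in turn yields $\|K_n-K\|_{HS}\to 0$, and standard perturbation theory for compact self-adjoint operators (Kato) then delivers $\lambda_k^{(n)}\to\lambda_k$ together with convergence of the corresponding spectral projections, which is all that is required to pass to the limit for each fixed $k$.

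Finally I would close the proof with a truncation-tightness argument for the tail. The assumption $\E\tilde H_n^2(X,Y)<\infty$ together with $L^2$ convergence of $\tilde H_n$ gives $\sum_k(\lambda_k^{(n)})^2\to\sum_k\lambda_k^2<\infty$. Using orthogonality of the $\psi_k^{(n)}$ and the fact that each summand has zero mean, the variance of the tail is controlled by $2\sum_{k>M}(\lambda_k^{(n)})^2$, which can be made arbitrarily small uniformly in large $n$ by choosing $M$ large. A Slutsky-type combination of this uniform tail tightness with the finite-$M$ CLT produces the stated weak limit. The main obstacle is precisely this simultaneous control: ensuring that the low-order eigenpairs of the $n$-dependent operator $K_n$ track those of $K$ while the high-order tail contributes negligibly \emph{uniformly in} $n$. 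The Hilbert--Schmidt convergence $\|K_n-K\|_{HS}\to 0$ is the crucial analytic input that makes both tasks feasible at once.
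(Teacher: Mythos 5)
Your proposal takes essentially the same route as the paper's proof: verify that $\tilde H_n$ is $p_0$-degenerate under $H_0$, diagonalize the induced integral operator in $L^2(p_0)$, rewrite $nT_n=\frac{1}{n-1}\sum_{i\neq j}\tilde H_n(X_i,X_j)$ as a weighted sum of centred squares via the identity $\sum_{i\neq j}u(X_i)u(X_j)=(\sum_i u(X_i))^2-\sum_i u(X_i)^2$, and invoke the weighted chi-square limit for degenerate second-order $U$-statistics. Where you genuinely differ is in trying to pin down the $n$-dependence explicitly: you introduce Hilbert--Schmidt convergence $\|K_n-K\|_{HS}\to 0$, spectral perturbation theory, and a truncation/tightness step. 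The paper instead writes the limit in terms of eigenvalues $\lambda_{l,n}$ that still carry the index $n$, and closes with the unquantified assertion ``$\lim_n H_n=H$'' -- your bookkeeping is honest about the load-bearing hypotheses, whereas the paper leaves them implicit.

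That very honesty, however, exposes a gap that is present in both proofs but is fatal to your formulation as written. The key analytic input you need -- $\tilde H_n\to\tilde H$ in $L^2(p_0\otimes p_0)$, hence $\|K_n-K\|_{HS}\to 0$ -- is not available in this setting. The level $l=l(n)$ grows with $n$, so $H_n(\Bx,\By)=\sum_{|\Bk|\le l(n)}\sum_{\Bs\in\rho(\Bk)}\phi_\Bs(\Bx)\phi_\Bs(\By)$ is a growing partial sum of the identity kernel; it converges distributionally to $\delta(\Bx-\By)$, not to any square-integrable kernel, and centring does not repair this. Indeed the paper's own Lemma~\ref{lem:2nd_moment} (proved under the lower-boundedness and $\alpha>1$ hypotheses of Theorem~\ref{thm:asymptotic_normalty}) shows $\E\tilde H_n^2(X_1,X_2)\gtrsim[\log n]^\kappa\to\infty$, so the sequence $\{\tilde H_n\}_n$ is not even bounded in $L^2(p_0\otimes p_0)$, let alone Cauchy. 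This undercuts both your operator-perturbation step ($\lambda^{(n)}_k\to\lambda_k$) and your uniform tail bound $\sup_n\sum_{k>M}(\lambda^{(n)}_k)^2\to 0$ as $M\to\infty$. Also note that Proposition~\ref{pro:mise_error_bound}, which you cite for a quantitative rate, controls $\E\|p-\tilde p_n\|_2$, not $\|\tilde H_n-\tilde H\|_{L^2(p_0\otimes p_0)}$, and does not supply the convergence you need. To make the argument close one would either have to add the explicit hypothesis that $\tilde H_n$ converges in $L^2(p_0\otimes p_0)$ -- which reduces the result to the standard fixed-kernel theorem and contradicts the growth used elsewhere in the paper -- or show directly that $nT_n$ has a non-degenerate weak limit even though the spectrum of $K_n$ drifts off to infinity, which would require a genuinely different technique from the one both you and the paper employ.
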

In practice, it is usually hard to estimate the eigenvalues of the kernel operator. We suggest applying a wild bootstrap technique to estimate the quantiles of the null distribution. The key step of the wild bootstrap is to simulate a simple Markov chain taking values in $\{-1,1\}$. First, we take the initial state $B_{1,n}=1$, then we update the Markov chain by $B_{j,n}=\mathbbm{1}(U_{j}>\alpha)B_{j-1,n}-\mathbbm{1}(U_{j}<\alpha)B_{j-1,n}$, where $U_j\sim \text{Unif}(0,1)$ are i.i.d random variables and we take $\alpha=0.5$. This yields the bootstrap statistics as $A_{n}=\frac{1}{n(n-1)}\sum\limits_{i\neq j} B_{i,n}B_{j,n}H_{n}(X_i, X_j).$ The implement of goodness of fit test procedure is as in Algorithm \ref{Alg1}. 
\begin{algorithm}
\SetKwInOut{Input}{input}
\SetKwBlock{Begin}{begin}{end}
\Input{Samples $\mathcal{X}=\{X_{1}, \cdots, X_{n}\}\subset [0,1]^D$, a null hypothesis $H_0: p=p_{0}$.}
\Begin{
Step 1. Calculate the test statistics $T_n$ using $\mathcal{X}$\;
Step 2. Apply the wild bootstrap samples $\{A_n\}_{i=1}^{B}$ and estimate the empirical threshold  at level $1-\alpha$\;
Step 3. If the test statistic $T_n$ exceeds the threshold, reject the hypothesis; otherwise, do not reject the hypothesis.
}
\caption{Goodness of fitting test based on AHCE}\label{Alg1}
\end{algorithm}
\vspace{-3mm}

Limit theory for degenerate U-statistics when the kernel $H$ is fixed has been well studied.  In that case, the limit distribution is a weighted chi-square as the result in Theorem \ref{thm:wchi}, and cannot be derived using classical Martingale methods.
However, in certain cases, in which the kernel $H$ depends on the sample size $n$, a normal distribution can be obtained asymptotically. We prove asymptotic normality of our degenerate U-statistics $T_n$ under mild conditions on the underlying density in the following theorem. 
\begin{theorem}\label{thm:asymptotic_normalty}  If the true density function is lower bounded and in $\CalH^\alpha_{mix}$ with $\alpha>1$, then  under the null hypothesis, we have $\frac{T_{n}}{\sigma(T_{n})}\xrightarrow[]{d} \mathcal{N}(0,1).$ Moreover, if we use a U-statistics $\hat{\sigma}_{n}$ to estimate $\sigma(T_n)$, we also have 
$\frac{n T_{n}}{\sqrt{2}\hat{\sigma}_{n}}^2\xrightarrow[]{d} \mathcal{N}(0,1),$ where 
\begin{align*}
 \hat{\sigma}_{n}^2&=\frac{1}{n(n-1)}\sum\limits_{1\leq i\neq j\leq n}H_{n}^{2}(X_{i}, X_{j})\\
 &-\frac{2(n-3)!}{n!}\sum\limits_{\substack{1\leq i, j_1, j_2\leq n \\ |\{i, j_1, j_2\}|=3}}H_{n}(X_{i}, X_{j_{1}})H_{n}(X_{i}, X_{j_{2}})\\
 &+\frac{(n-4)!}{n!}\sum\limits_{\substack{1\leq i_1, i_2, j_1, j_2\leq n \\ |\{i_1, i_2, j_1, j_2\}|=4}}H_{n}(X_{i_{1}}, X_{j_{1}})H_{n}(X_{i_{2}}, X_{j_{2}}).
\end{align*}
\end{theorem}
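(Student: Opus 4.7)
The plan is to treat $T_n$ as a second-order degenerate $U$-statistic whose kernel $\tilde H_n$ depends on the sample size and invoke a triangular-array CLT for such statistics (in the spirit of Hall, 1984, and de Jong, 1987). Under the null, the construction of $\tilde H_n$ yields $\mathbb{E}_{X\sim p_0}\tilde H_n(X,y)=0$ for every $y$, so $T_n$ is canonical with
\begin{equation*}
\sigma(T_n)^2=\frac{2}{n(n-1)}\mathbb{E}[\tilde H_n^2(X,Y)].
\end{equation*}
The first assertion then reduces to verifying the Lindeberg-type ratios
\begin{equation*}
\frac{\mathbb{E}[G_n(X,Y)^2]}{\bigl(\mathbb{E}[\tilde H_n^2(X,Y)]\bigr)^2}\to 0,\qquad \frac{n^{-1}\mathbb{E}[\tilde H_n^4(X,Y)]}{\bigl(\mathbb{E}[\tilde H_n^2(X,Y)]\bigr)^2}\to 0,
\end{equation*}
with $G_n(x,y)=\mathbb{E}_{Z\sim p_0}[\tilde H_n(Z,x)\tilde H_n(Z,y)]$.

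I would control these ratios through the hyperbolic-cross structure of $H_n$. Let $N\asymp 2^l l^{D-1}$ denote the number of wavelets retained at level $l$, and let $M$ be the $N\times N$ Gram matrix $M_{\Bs,\Bs'}=\int \phi_{\Bs}\phi_{\Bs'}p_0$. Because $p_0$ is bounded above and below, $M$ and $M^{-1}$ both have operator norm $O(1)$, so $\operatorname{tr}(M^k)\asymp N$ for every fixed $k$. Expanding $\tilde H_n$ in the wavelet basis gives $\mathbb{E}[\tilde H_n^2]\asymp \operatorname{tr}(M^2)\asymp N$ and $\mathbb{E}[G_n^2]\lesssim \operatorname{tr}(M^4)\lesssim N$, making the first ratio $O(1/N)$. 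The pointwise bound $|H_n(x,y)|\leq \sqrt{H_n(x,x)H_n(y,y)}\lesssim N$ combined with $\mathbb{E}[\tilde H_n^2]\asymp N$ yields $\mathbb{E}[\tilde H_n^4]\lesssim N^3$, so the second ratio is of order $N/n$; since $2^l\asymp n^{1/(2\alpha)}(\log n)^{D(\alpha+1/2)/(2\alpha)}$ and $\alpha>1$, $N/n=o(1)$. The triangular-array CLT then delivers $T_n/\sigma(T_n)\Rightarrow\mathcal{N}(0,1)$.

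For the studentised statement I would first check that $\hat\sigma_n^2$ is the unbiased $U$-statistic estimator of $\mathbb{E}[\tilde H_n^2]$: the three sums are unbiased for $\mathbb{E}[H_n^2(X,Y)]$, for $\mathbb{E}[h_n(X)^2]$ with $h_n(x)=\int H_n(x,z)p_0(z)\,dz$, and for $(\mathbb{E}[H_n])^2$, so together they reproduce the identity $\mathbb{E}[\tilde H_n^2]=\mathbb{E}[H_n^2]-2\mathbb{E}[h_n^2]+(\mathbb{E}[H_n])^2$. A variance computation for each of the three $U$-statistics, using the same trace and sup-norm machinery, bounds $\operatorname{Var}(\hat\sigma_n^2)$ by a multiple of $N^2/n$; its ratio with the squared mean $\asymp N^2$ is then $O(1/n)$, so Chebyshev gives $\hat\sigma_n^2/\mathbb{E}[\tilde H_n^2]\xrightarrow{p}1$, and Slutsky combined with the first part yields $nT_n/(\sqrt{2}\hat\sigma_n)\Rightarrow\mathcal{N}(0,1)$. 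The main obstacle is the moment analysis under a general bounded $p_0$: because $X\sim p_0$ rather than Lebesgue, mixed products of $\{\phi_{\Bs}\}$ no longer vanish by orthogonality and must be tamed through the Gram matrix $M$, the diagonal bound $\sum_{\Bs}\phi_{\Bs}(x)^2\lesssim N$, and the wavelet decay of $p_0$ supplied by $p_0\in\CalH^\alpha_{mix}$ with $\alpha>1$; it is precisely these quantitative controls that justify both the lower-boundedness assumption on $p_0$ and the upgraded smoothness requirement in the statement.
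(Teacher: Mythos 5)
Your high-level strategy is the same as the paper's: treat $T_n$ as a second-order degenerate $U$-statistic with a sample-size-dependent kernel and invoke Hall's (1984) CLT, reducing the problem to verifying moment ratios, then handle studentization via Slutsky after establishing $\hat\sigma_n^2/\E[\tilde H_n^2]\to_p 1$. Your unbiasedness decomposition of $\hat\sigma_n^2$ matches the paper's equation \eqref{eq:2m_central} exactly. What differs, and differs genuinely, is the moment machinery. The paper's Lemmas \ref{lem:2nd_moment}--\ref{lem:central_moment} are proved by hand through wavelet-index combinatorics: nested/disjoint supports, counts of overlapping indices of the form $2^{|\Bk_2|-|\Bk_1|}$, and the Sobolev decay $\sum_{\Bs\in\rho(\Bk)}\check p_{\Bk,\Bs}^2\lesssim 2^{(1-2\alpha)|\Bk|}$. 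You instead use the Gram matrix $M$ and the two-sided bound $(\inf p_0)I\preceq M\preceq (\sup p_0)I$ to get $\operatorname{tr}(M^k)\asymp N$ directly, plus the sup-norm bound $|H_n(x,y)|\le\sqrt{H_n(x,x)H_n(y,y)}\lesssim N$ coming from positive semidefiniteness of $H_n$ and bounded wavelet overlap. This is a cleaner and more modular route: it makes transparent exactly where the lower-boundedness of $p_0$ enters (operator-norm control of $M^{-1}$) and gives the sharp order $\E[\tilde H_n^2]\asymp N$, whereas the paper's Lemma \ref{lem:2nd_moment} produces a much weaker lower bound $\gtrsim(\log n)^\kappa$ (it appears to drop a sum over $\Bs_1$ of size $\sim 2^{|\Bk_1|}$), which is what forces the paper to use $\alpha>1$ when verifying Hall's ratio. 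Your Gram-matrix version would in fact get by with $\alpha>1/2$ in the fourth-moment check. Conversely, the paper's direct calculations produce a tighter fourth-moment bound ($\lesssim 2^{2l}\,\mathrm{poly}(\log)$ vs.\ your $\lesssim N^3\asymp 2^{3l}\,\mathrm{poly}(\log)$), so neither set of intermediate estimates dominates the other uniformly; both suffice. One small overclaim in your sketch: a standard Hoeffding decomposition of the three $U$-statistics in $\hat\sigma_n^2$ gives $\operatorname{Var}(\hat\sigma_n^2)\lesssim N^3/n$ rather than $N^2/n$ (the first-projection term $\operatorname{Var}[\E_Y H_n^2(X,Y)]$ is $\lesssim N^3$, not $N^2$), but since $N/n\to 0$ this still gives $\operatorname{Var}(\hat\sigma_n^2)=o(N^2)=o((\E[\tilde H_n^2])^2)$, so the Chebyshev/Slutsky conclusion goes through unchanged.
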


\begin{rem}
Theorem \ref{thm:asymptotic_normalty} requires that $\alpha>1$, which imposes a stronger condition on the underlying density function than that in Propositions \ref{pro:consistent} and \ref{pro:mise_error_bound}. It is necessary for the asymptotic normality results, because otherwise the statistic $T_n$ may have a heavy tail which is caused by the roughness of the density function.
\end{rem}

\section{Numerical Experiments}\label{sec:num}
In this section, we first compare the $L^2$ convergence rates of AHCE to empirical density estimation and two benchmark density estimators under mixed smooth Sobolev IPM loss; then we illustrate how AHCE and smooth density estimators can help improve the performance of GANs; lastly, we apply AHCE in the goodness of fit test and show that the statistics associated with AHCE are more sensitive than statistics based on the commonly used Gaussian kernel. More details of the numerical experiments can be found in the Appendix.

\subsection{Synthetic Data}\label{subsec:syn} 

\textbf{$L^2$ Convergence Rate}\quad 
We compare AHCE to two benchmark density estimators: kernel density estimator with Gaussian kernel (KDE) and B-spline distribution estimator with density (BSDE). We consider two distributions: a 5-dimensional Beta-distribution $\text{Beta}(a,b)$ with $a=[2, 2 ,2 ,5 ,5]$ and $b=[5 ,5 ,2, 2 ,2]$, and a 10-dimensional $\text{Beta}(a,b)$ with $a=[2,\cdots,2]$ and $b=[5,\cdots,5]$. The estimated root mean square errors (RMSE) 
are shown in Panels 1 and 2 of Figure \ref{fig:DensityCompare}. It can be seen that AHCE has the best performance in all experiments with the smallest RMSE. 


\begin{figure}[t!]
\centering
\includegraphics[width=0.235\textwidth]{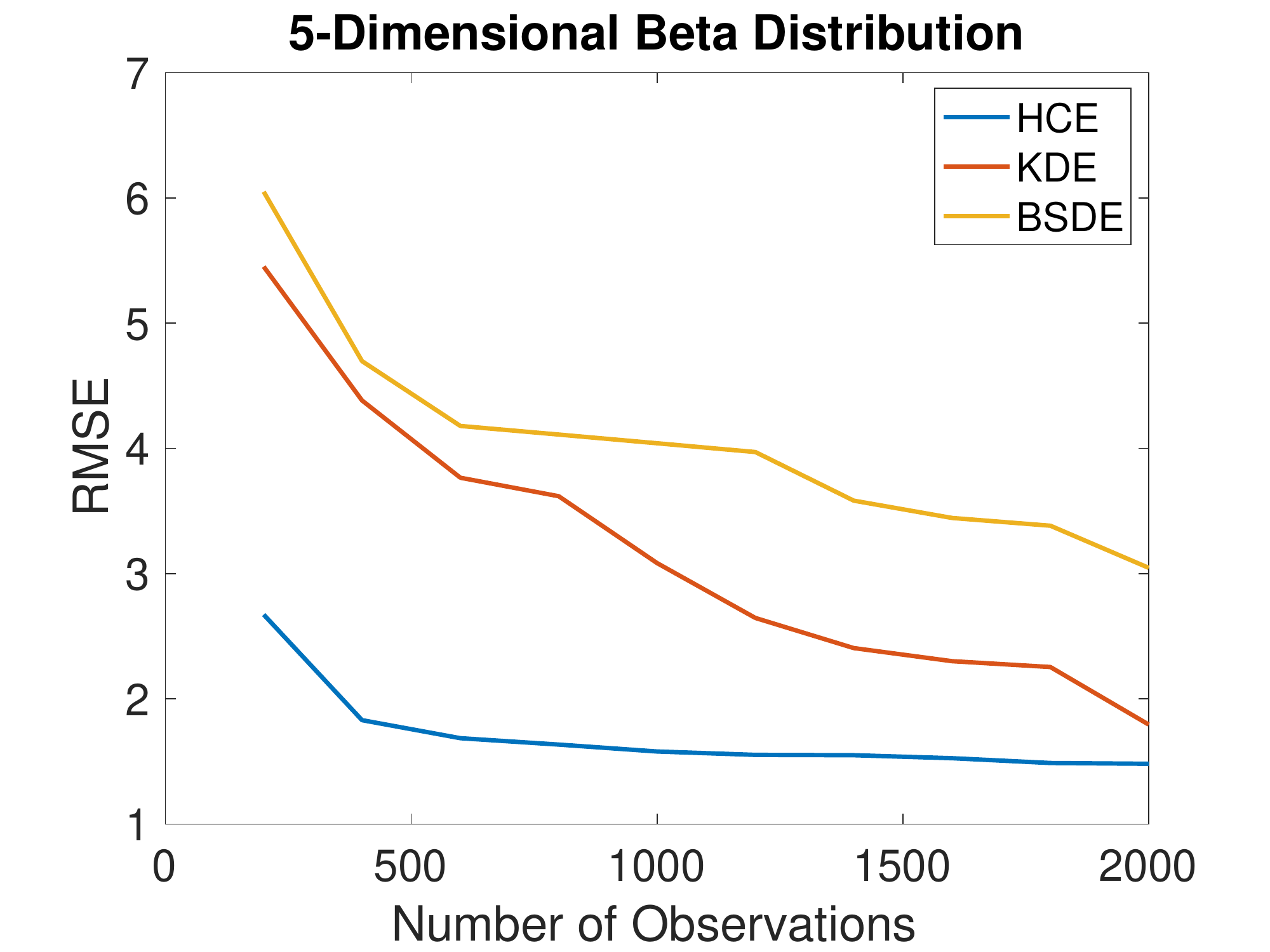}  
\includegraphics[width=0.235\textwidth]{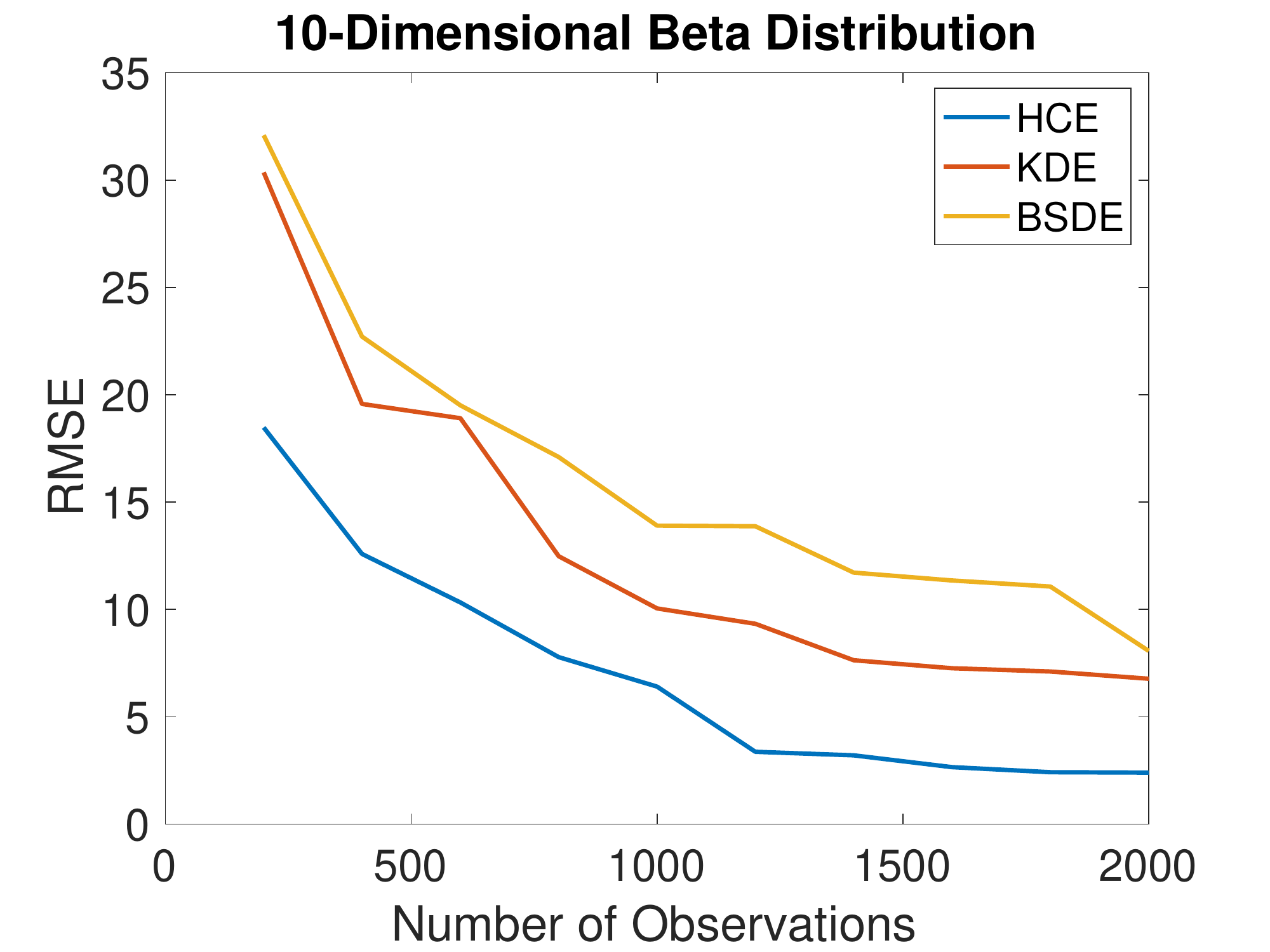} 
\includegraphics[width=0.235\textwidth]{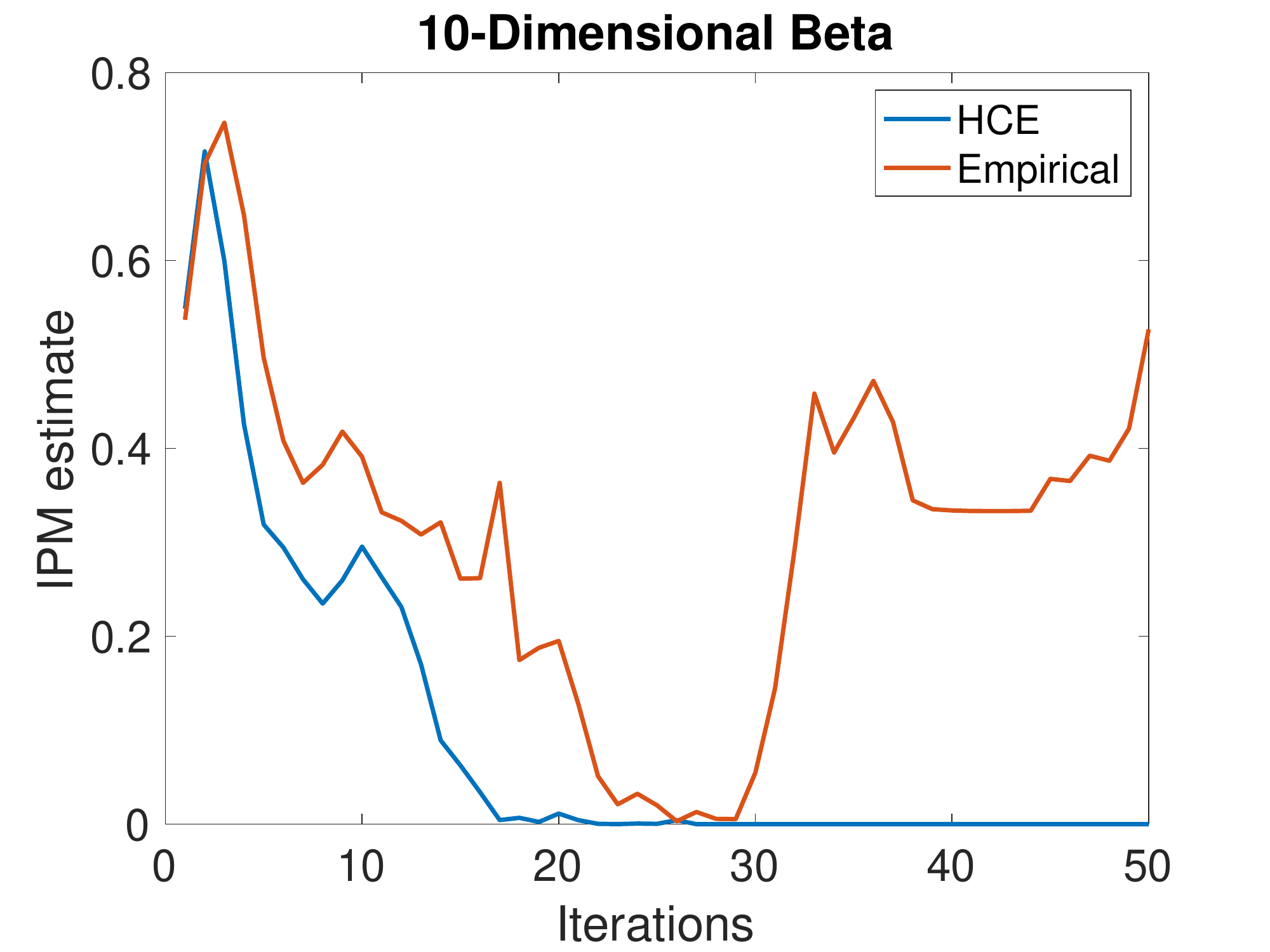} 
\includegraphics[width=0.235\textwidth]{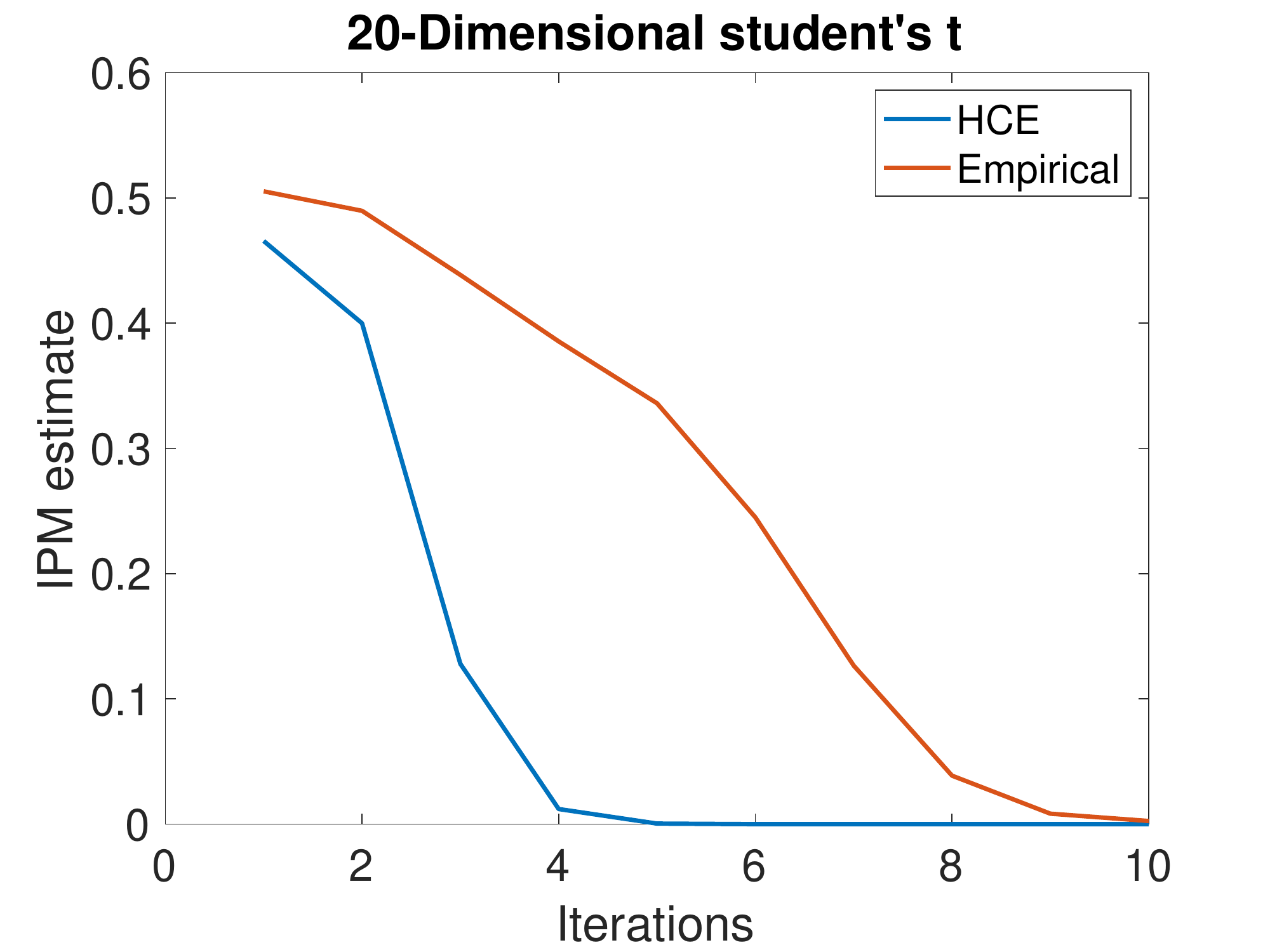}
\caption{Numerical results in Section \ref{subsec:syn}. \textbf{Panels 1-2}: RMSE of the estimators on 5-Dimensional (Panel 1) and 10-dimensional (Panel 2) Beta distribution. \textbf{Panels 3-4}: IPM of GANs applied to learn the 10-Dimensional Beta distribution (Panel 3) and the 20-dimensional student's $t$-distribution (Panel 4).}\label{fig:DensityCompare}
\vspace{-4mm}
\end{figure}



\textbf{Improved Performance for GAN}\quad 
We train two GAN estimators $\hat{g}'$ and $\hat{g}$, with the empirical estimator and the 
GAN adaptive to AHCE $\tilde{p}_n$, respectively (see \eqref{eq:num11} and \eqref{eq:num12}). We consider two distributions: 10-dimensional random vector with each element i.i.d. Beta$(2,5)$, and a 20-dimensional vector with each element i.i.d. $t$-distribution with degree of freedom two.
The loss function is proportional to the IPM \citep{Arjovsky17}, we can use it as an estimate of IPM. We run the training process until the GANs
associated with empirical estimator and AHCE converge. For each run, we record the IPM estimate value at the end of each iteration. We repeat the training process 100 times and plot the mean of the IPM estimates as in Panels 3 and 4 of Figure \ref{fig:DensityCompare}.

In both cases, the GAN associated with AHCE has a faster and stable convergence, which corroborates Theorem \ref{thm:GAN-improved}. The intuition is that the AHCE can be viewed as a smoothed empirical density which leads to a more stable gradient estimate for updating the parameters of GAN. This stable property of AHCE is more obvious in learning the 10-D Beta(2,5) density, which is asymmetric and less smooth.  
We also run experiments to compare the performance of the GANs associated with AHCE and empirical estimator on learning high-dimensional Gaussian densities. However, both of them turn out to have good performance in learning the Gaussian densities so we omit the comparison on Gaussian distributed data. Nevertheless, via the experiments on Beta, student's $t$- and Gaussian distributions, we can have a conclusion that the smoother the true underlying density and its estimator, the more stable the training process. 

\subsection{MNIST Data Set}\label{sub:mnist}
In this experiment, we conduct a numerical experiment to illustrate Theorem \ref{thm:smoothing}, which states that a GAN with a smoother discriminator tend to achieve a better performance. We compare the performances of two GANs: one with a ReLU empirical discriminator and the other one with a Sigmoid empirical discriminator. Both GANs have exactly the same structure and the only difference is the activation functions adopted in their empirical discriminators. We randomly select 20000 samples from the MNIST \citep{lecun-mnisthandwrittendigit-2010} data set and train both GANs. We then record down the outputs of both GANs at the $20000^{\rm th}$, $40000^{\rm th}$ and $60000^{\rm th}$ iteration, respectively. The results are shown in Figure \ref{fig:MNISTCompare}.

\begin{figure}[t!]
\centering
\includegraphics[width=0.47\textwidth]{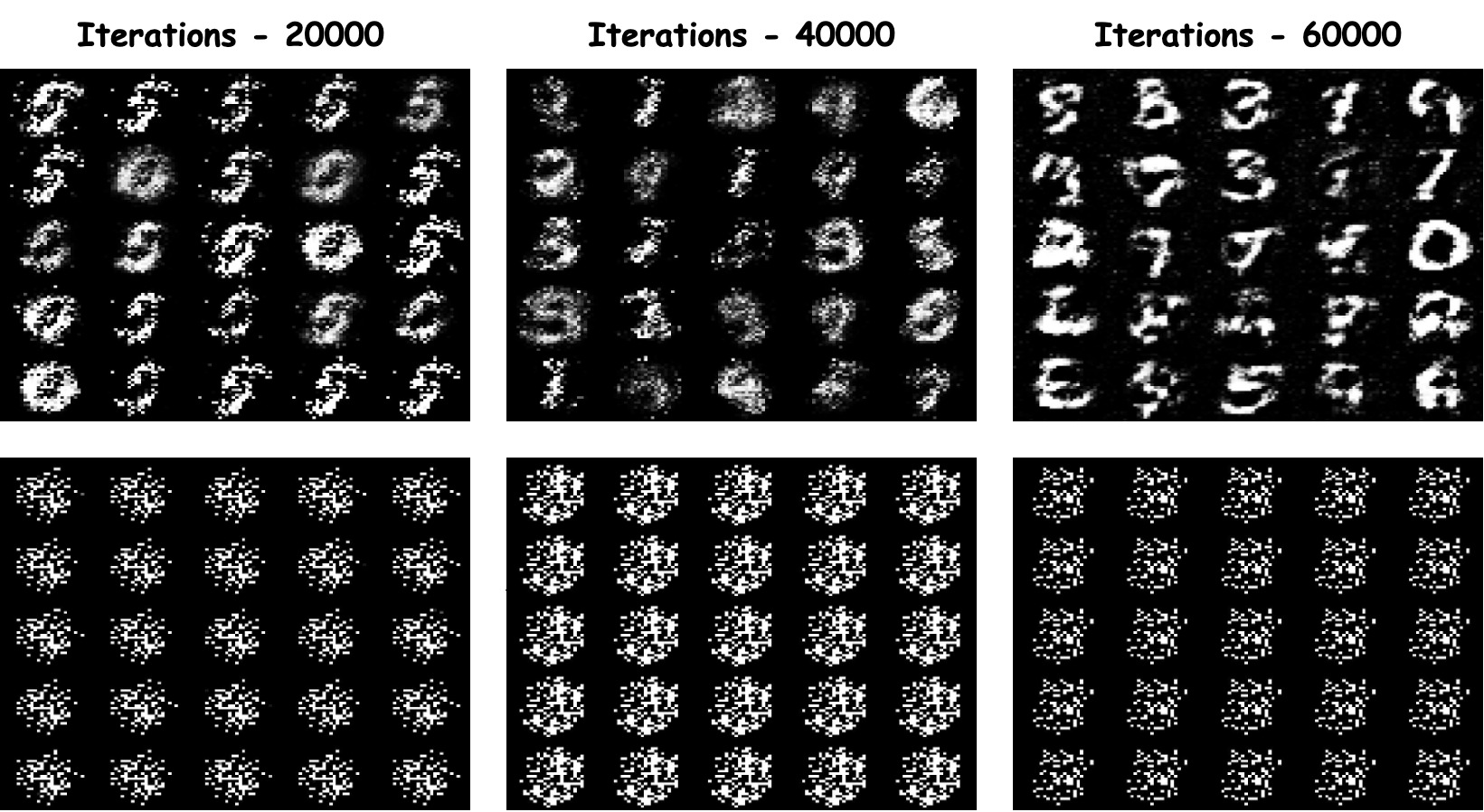}
\caption{ GAN with Sigmoid discriminator (top) and GAN  with ReLU discriminator (botton)}\label{fig:MNISTCompare}
\end{figure}

Obviously, the GAN with Sigmoid discriminator successfully captures  features of the handwritten digits while the GAN with ReLU discriminator fails to learn any features. This result coincides with Theorem \ref{thm:smoothing}. The Sigmoid net and the ReLU net have similar approximation capacity because they have the same structure. However, if we apply the smoothing operation \eqref{eq:smoothing} on the Sigmoid net, the smoothing and the original function yielded by the Sigmoid net should be very close, while the smoothing operation acting on the ReLU net leads to a smoothing function far away from its original output. This is because any function yielded by the Sigmoid net is smooth and a smoothing operation acting on a smooth function does not dramatically change the function. Intuitively, if the underlying target function is smooth enough, a discriminator adaptive to the sample size and the smoothness can learn more features.

\subsection{Goodness of Fit Test}
We conducted two numerical experiments on the goodness of fit test using Algorithm \ref{Alg1}. We consider the goodness of fit test with 
the underlying distribution 
\begin{align}\label{eq:num2}
    X_0\sim \text{Beta}(a, b),\ \ \  X\sim X_0+0.2U.
\end{align}
In the first example, $X_0$ has dimension two, $a=[2,2]$, $b=[2,5]$, and $U\sim {\rm Unif}([0,1]^2)$. 
In the second experiment, 
$X_0$ has dimension six, $a=[2,2,2,2,2,2]$, $b=[2,5,2,5,2,5]$ and $U\sim \text{Unif}([0,1]^6)$.
The results are summarized in Tables \ref{table_test1} and \ref{table_test2}.

Tables \ref{table_test1} and \ref{table_test2} show the powers with different sample sizes $n$ and levels $l$. We observe that the increasing value of level $l$ leads to loss of power in the goodness of fit test problem. The intuition behind this phenomenon is that by choosing a higher level $l$, we may incorporate many redundant bases. Thereby, the level $l$ should be chosen carefully to insure the test power when the sample size is relatively small.
\begin{table}[h!]
\centering
\begin{tabular}{ |p{0.93cm}||p{0.93cm}|p{0.93cm}|p{0.93cm}|p{0.93cm}|p{0.93cm}| }
 \hline
 level      &n=50 &n=100& n=150 & n=200 & n=500\\
 \hline
l=4   & 0.69  & 0.92& 1.00& 1.00& 1.00\\
\hline
l=10 & 0.53& 0.85& 0.94& 0.99& 1.00\\
 \hline
l=16 & 0.55& 0.83& 0.96& 0.98& 1.00\\
\hline
\end{tabular} 
\caption{Test powers under different sample sizes and levels for the first experiment.} \label{table_test1}
\end{table}
\begin{table}[h!]
\centering
\begin{tabular}{ |p{0.93cm}||p{0.93cm}|p{0.93cm}|p{0.93cm}|p{0.93cm}|p{0.93cm}| }
 \hline
 level      &n=50 &n=100& n=150 & n=200 & n=500\\
 \hline
l=10   & 0.62  & 0.86& 0.90 & 1.00& 1.00\\
\hline
l=15 & 0.53& 0.72& 0.85& 0.91& 1.00\\
 \hline
l=20 & 0.51& 0.70& 0.82& 0.89& 1.00\\
\hline
\end{tabular}
\caption{Test powers under different sample sizes and levels for the second experiment.}
\label{table_test2}
\end{table}

\section{Conclusions}\label{sec:conclu}
In this work, we propose a class of non-parametric density estimators AHCE for estimating a probability density in the mixed smooth Sobolev space, where the dimension can be high. We prove that the AHCE is almost asymptotically optimal among all estimators in the mixed smooth Sobolev space. More importantly, the convergence rate \textit{weakly} depends on the dimension, where the dimension only appears in the $\log$ term. This implies that, the convergence rate is almost dimension-free, which partially break the ``curse of dimensionality''. We apply the AHCE to two high-dimensional problems: GANs and the goodness of fit test. Using our density estimator, we can have a faster rate of convergence to the true density function, compared to other estimators which do not make assumptions on the mixed smoothness property of the underlying density function. The numerical experiments show that the convergence of GANs associated with AHCE for density estimation is much more stable than the GANs associated with the original purely empirical density estimator, and our proposed testing statistics based on AHCE are accurate for goodness of fit test problems. 
\bibliography{ref}

\begin{thebibliography}{}

\bibitem[Arjovsky et~al., 2017]{Arjovsky17}
Arjovsky, M., Chintala, S., and Bottou, L. (2017).
\newblock Wasserstein generative adversarial networks.
\newblock In {\em International Conference on Machine Learning}, pages
  214--223. PMLR.

\bibitem[Balasubramanian et~al., 2017]{ming2017}
Balasubramanian, K., Li, T., and Yuan, M. (2017).
\newblock On the optimality of kernel-embedding based goodness-of-fit tests.

\bibitem[Bungartz and Griebel, 1999]{bungartz1999note}
Bungartz, H.-J. and Griebel, M. (1999).
\newblock A note on the complexity of solving {P}oisson's equation for spaces
  of bounded mixed derivatives.
\newblock {\em Journal of Complexity}, 15(2):167--199.

\bibitem[Caffarelli, 1992]{caffarelli1992regularity}
Caffarelli, L.~A. (1992).
\newblock The regularity of mappings with a convex potential.
\newblock {\em Journal of the American Mathematical Society}, 5(1):99--104.

\bibitem[Chen et~al., 2019]{chen2019efficient}
Chen, M., Jiang, H., Liao, W., and Zhao, T. (2019).
\newblock Efficient approximation of deep relu networks for functions on low
  dimensional manifolds.
\newblock {\em Advances in neural information processing systems},
  32:8174--8184.

\bibitem[Chen et~al., 2020]{chen2020statistical}
Chen, M., Liao, W., Zha, H., and Zhao, T. (2020).
\newblock Statistical guarantees of generative adversarial networks for
  distribution estimation.

\bibitem[Devore and Popov, 1988]{Ronald88}
Devore, R.~A. and Popov, V.~A. (1988).
\newblock Interpolation of besov spaces.
\newblock {\em Transactions of the American Mathematical Society},
  305(1):397--414.

\bibitem[Donoho et~al., 1996]{Donoho96}
Donoho, D.~L., Johnstone, I.~M., Kerkyacharian, G., and Picard, D. (1996).
\newblock Density estimation by wavelet thresholding.
\newblock {\em The Annals of Statistics}, 24(2):508--539.

\bibitem[Dung, 2016]{Dung16}
Dung, D. (2016).
\newblock B-spline quasi-interpolation sampling representation and sampling
  recovery in {S}obolev spaces of mixed smoothness.
\newblock {\em Acta Mathematica Vietnamica}, 43.

\bibitem[D{\~u}ng, 2021]{dung2021deep}
D{\~u}ng, D. (2021).
\newblock Deep {ReLU} neural networks in high-dimensional approximation.
\newblock {\em Neural Networks}, 142:619--635.

\bibitem[D{\~u}ng et~al., 2018]{HyperbolicCross}
D{\~u}ng, D., Temlyakov, V., and Ullrich, T. (2018).
\newblock {\em Hyperbolic Cross Approximation}.
\newblock Springer.

\bibitem[Evans, 2010]{evans10}
Evans, L.~C. (2010).
\newblock {\em Partial differential equations}.
\newblock American Mathematical Society, Providence, R.I.

\bibitem[Friedman et~al., 2001]{friedman2001elements}
Friedman, J., Hastie, T., and Tibshirani, R. (2001).
\newblock {\em The Elements of Statistical Learning}, volume~1.
\newblock Springer series in statistics New York, NY, USA:.

\bibitem[Garcke et~al., 2001]{garcke2001data}
Garcke, J., Griebel, M., and Thess, M. (2001).
\newblock Data mining with sparse grids.
\newblock {\em Computing}, 67(3):225--253.

\bibitem[Geer, 2000]{geer2000empirical}
Geer, S.~A. (2000).
\newblock {\em Empirical Processes in M-estimation}, volume~6.
\newblock Cambridge university press.

\bibitem[Genovese and Wasserman, 2000]{genovese2000}
Genovese, C.~R. and Wasserman, L. (2000).
\newblock Rates of convergence for the {G}aussian mixture sieve.
\newblock {\em The Annals of Statistics}, 28(4):1105--1127.

\bibitem[Ghosal et~al., 2000]{Ghosal2000}
Ghosal, S., Ghosh, J.~K., and Van Der~Vaart, A.~W. (2000).
\newblock Convergence rates of posterior distributions.
\newblock {\em The Annals of Statistics}, pages 500--531.

\bibitem[Goodfellow et~al., 2014]{Goodfellow14}
Goodfellow, I., Pouget-Abadie, J., Mirza, M., Xu, B., Warde-Farley, D., Ozair,
  S., Courville, A., and Bengio, Y. (2014).
\newblock Generative adversarial nets.
\newblock {\em Advances in Neural Information Processing Systems}, 27.

\bibitem[Hall, 1981]{Hall1981}
Hall, P. (1981).
\newblock On trigonometric series estimates of densities.
\newblock {\em The Anuals of Statistics}, 9(3):683–685.

\bibitem[Hall, 1984]{hall1984central}
Hall, P. (1984).
\newblock Central limit theorem for integrated square error of multivariate
  nonparametric density estimators.
\newblock {\em Journal of multivariate analysis}, 14(1):1--16.

\bibitem[Hall, 1986]{Hall86}
Hall, P. (1986).
\newblock On the rate of convergence of orthogonal series density estimators.
\newblock {\em Journal of the Royal Statistical Society. Series B
  (Methodological)}, 48(1):115--122.

\bibitem[Hall, 1987]{Hall1987}
Hall, P. (1987).
\newblock Cross-validation and the smoothing of orthogonal series density
  estimators.
\newblock {\em Journal of Multivariate Analysis}, 21(2):189--206.

\bibitem[Hathaway, 1985]{hathaway1985}
Hathaway, R. (1985).
\newblock A constrained formulation of maximum-likelihood estimation for normal
  mixture distributions.
\newblock {\em The Annals of Statistics}, 13(2):795--800.

\bibitem[K{\"u}hn et~al., 2015]{kuhn2015approximation}
K{\"u}hn, T., Sickel, W., and Ullrich, T. (2015).
\newblock Approximation of mixed order {S}obolev functions on the d-torus:
  {A}symptotics, preasymptotics, and d-dependence.
\newblock {\em Constructive Approximation}, 42(3):353--398.

\bibitem[LeCun and Cortes, 2010]{lecun-mnisthandwrittendigit-2010}
LeCun, Y. and Cortes, C. (2010).
\newblock {MNIST} handwritten digit database.

\bibitem[Lei, 2020]{Lei_2020}
Lei, J. (2020).
\newblock Convergence and concentration of empirical measures under
  {W}asserstein distance in unbounded functional spaces.
\newblock {\em Bernoulli}, 26(1):767–798.

\bibitem[Li et~al., 2017]{li2017mmd}
Li, C.-L., Chang, W.-C., Cheng, Y., Yang, Y., and P{\'o}czos, B. (2017).
\newblock {MMD GAN}: {T}owards deeper understanding of moment matching network.
\newblock In {\em Advances in Neural Information Processing Systems}, pages
  2203--2213.

\bibitem[Li and Yuan, 2019]{ming2019}
Li, T. and Yuan, M. (2019).
\newblock On the optimality of {G}aussian kernel based nonparametric tests
  against smooth alternatives.
\newblock {\em arXiv preprint arXiv:1909.03302}.

\bibitem[Liang, 2021]{liang2018generative}
Liang, T. (2021).
\newblock How well generative adversarial networks learn distributions.
\newblock {\em Journal of Machine Learning Research, to appear}.

\bibitem[Liu et~al., 2020]{Liu20}
Liu, F., Xu, W., Lu, J., Zhang, G., Gretton, A., and Sutherland, D.~J. (2020).
\newblock Learning deep kernels for non-parametric two-sample tests.
\newblock In {\em International Conference on Machine Learning}, pages
  6316--6326. PMLR.

\bibitem[Liu et~al., 2007]{han2007}
Liu, H., Lafferty, J., and Wasserman, L. (2007).
\newblock Sparse nonparametric density estimation in high dimensions using the
  rodeo.
\newblock In {\em Artificial Intelligence and Statistics}, pages 283--290.
  PMLR.

\bibitem[Liu et~al., 2016]{jordan2016}
Liu, Q., Lee, J., and Jordan, M. (2016).
\newblock A kernelized {S}tein discrepancy for goodness-of-fit tests.
\newblock In {\em International Conference on Machine Learning}, pages
  276--284. PMLR.

\bibitem[Liu et~al., 2017]{liu2017approximation}
Liu, S., Bousquet, O., and Chaudhuri, K. (2017).
\newblock Approximation and convergence properties of generative adversarial
  learning.
\newblock {\em arXiv preprint arXiv:1705.08991}.

\bibitem[Loader, 1999]{Loader1999}
Loader, C. (1999).
\newblock Bandwidth selection: classical or plug-in?
\newblock {\em The Annals of Statistics}, 27(2):415--438.

\bibitem[Nemirovski, 2000]{Nemirovski00}
Nemirovski, A. (2000).
\newblock Topics in non-parametric statistics.
\newblock 28.

\bibitem[Rahimi and Recht, 2008]{rahimi2008random}
Rahimi, A. and Recht, B. (2008).
\newblock Random features for large-scale kernel machines.
\newblock In {\em Advances in Neural Information Processing Systems}, pages
  1177--1184.

\bibitem[Reed and Simon, 1980]{reed1980methods}
Reed, M. and Simon, B. (1980).
\newblock {\em Methods of modern mathematical physics. vol. 1. Functional
  analysis}.
\newblock Academic New York.

\bibitem[Singh and P{\'o}czos, 2018]{Singh18}
Singh, S. and P{\'o}czos, B. (2018).
\newblock Minimax distribution estimation in {W}asserstein distance.
\newblock {\em arXiv preprint arXiv:1802.08855}.

\bibitem[Suzuki, 2018]{suzuki2018adaptivity}
Suzuki, T. (2018).
\newblock Adaptivity of deep {ReLU} network for learning in {B}esov and mixed
  smooth {B}esov spaces: {O}ptimal rate and curse of dimensionality.
\newblock {\em arXiv preprint arXiv:1810.08033}.

\bibitem[Tsybakov, 2008]{Tsybakov08}
Tsybakov, A.~B. (2008).
\newblock {\em Introduction to Nonparametric Estimation}.
\newblock Springer Publishing Company, Incorporated, 1st edition.

\bibitem[Uppal et~al., 2019]{Uppal19}
Uppal, A., Singh, S., and Póczos, B. (2019).
\newblock Nonparametric density estimation under {B}esov {IPM} losses.
\newblock In {\em Advances in Neural Information Processing Systems (NIPS)},
  pages 9089--9100.

\bibitem[van~de Geer, 2014]{van2014uniform}
van~de Geer, S. (2014).
\newblock On the uniform convergence of empirical norms and inner products,
  with application to causal inference.
\newblock {\em Electronic Journal of Statistics}, 8(1):543--574.

\bibitem[Wahba, 1981a]{wahba1981}
Wahba, G. (1981a).
\newblock Data-based optimal smoothing of orthogonal sereis density estimates.
\newblock {\em The Annals of Statistics}, 9(1):146--156.

\bibitem[Wahba, 1981b]{wahba1981data}
Wahba, G. (1981b).
\newblock Data-based optimal smoothing of orthogonal series density estimates.
\newblock {\em The annals of statistics}, pages 146--156.

\bibitem[Wang and Wang, 2015]{wang2015}
Wang, X. and Wang, Y. (2015).
\newblock Nonparametric multivariate density estimation using mixtures.
\newblock {\em Statistics and Computing}, 25(2):349--364.

\bibitem[Weed and Bach, 2019]{weed2017sharp}
Weed, J. and Bach, F. (2019).
\newblock Sharp asymptotic and finite-sample rates of convergence of empirical
  measures in wasserstein distance.
\newblock {\em Bernoulli}, 25(4A):2620--2648.

\bibitem[Xue and Wang, 2010]{Lan10}
Xue, L. and Wang, J. (2010).
\newblock Distribution function estimation by constrained polynomial spline
  regression.
\newblock {\em Journal of Nonparametric Statistics}, 22(4):443--457.

\end{thebibliography}

\newpage
\numberwithin{equation}{section}
\appendix
\aistatstitle{Supplementary}

\addcontentsline{toc}{section}{Appendix} 
\part{ } 
\parttoc 

\section{Assumptions}

\begin{assumption}\label{A1}
\begin{enumerate}[label=(\roman*)]
    \item the true underlying density $p$ is lower bounded $\inf_{\Bx\in\Omega}p(\Bx)>0$ and in $\CalH_{mix}^\alpha(L_\alpha)$ with $\alpha>\frac{1}{2}$;
    \item the predetermined density $\mu$ is compactly supported on a convex domain and infinitely differentiable.
\end{enumerate}
\end{assumption}
\begin{assumption}\label{A2}
Suppose the empirical discriminator $\Phi_d$ (neural net or other approximator) satisfies
\begin{enumerate}[label=(\roman*)]
    \item $\sup_{f^*\in \CalW^\beta_\infty}\inf_{{f}\in\Phi_d}\|f-{f}^*\|_\infty\lesssim \varepsilon$
    \item $\forall f\in\Phi_d$, $\|\tilde{f}_n-f\|_2\lesssim \varepsilon $
    \item $\sup_{f\in\Phi_d}\|f\|_2<\infty$ and  the metric entropy of $\Phi_d$ under $L^\infty$ norm $\mathcal{E}_\infty(\varepsilon,\Phi_d)$ \citep{geer2000empirical} satisfies:
    \[ \inf_{\delta>0}\{K\int_{\delta/4}^1\sqrt{\mathcal{E}_\infty(K u/2,\Phi_d)}du+\sqrt{n}\delta K\}<\infty\]
\end{enumerate}
where $\varepsilon=n^{-\frac{\alpha+\beta/D}{2\alpha+1}}\big[\log n\big]^{D\frac{\alpha+\beta/D+1}{2\alpha+1}}+n^{-\frac{1}{2}}$, $\tilde{f}_n$ is the smoothing of $f$ defined in \eqref{eq:smoothing}, and $K=\sup_{f\in\Phi_d}\|f\|_\infty$.
\end{assumption}

\section{Details of Numerical Experiments}\label{apdix:expriment}
All the experiments are implemented in \textsc{Matlab} (version 2021a) on a laptop computer with macOS, 3.3 GHz Intel Core i5 CPU, and 8 GB of RAM (2133Mhz). 
\subsection{$L^2$ Convergence Rate}
Settings of two benchmark density estimators:
\begin{enumerate}
    \item \textbf{KDE}: kernel density estimator with Gaussian kernel and wavelength tuned in the order of $\CalO(n^{4/D})$ \citep{ming2017}, where $n$ is the number of observations and $D$ is the dimension;
    \item \textbf{BSDE}: B-spline distribution estimator with density treated as the derivative of the estimated distribution \citep{Lan10} with knot number chosen as $\CalO(n^{1/2})$, where $n$ is the number of observations.
\end{enumerate}

In the first experiment, we generated $n=200k, k=1,...,10$ observations of a 5-dimensional random vector follows a Beta-distribution $\text{Beta}(a,b)$ with $a=[2, 2 ,2 ,5 ,5]$ and $b=[5 ,5 ,2, 2 ,2]$. For a specific $n$, we ran three estimators on the same data set to estimate the true density function. We randomly sampled 1000 points $\{\Bx_i\}_{i=1}^{1000}$ in $[0,1]^5$ and used the following root mean squared error as an estimation for the $L^2$ error
$$RMSE=\left(\frac{1}{n}\sum_{i=1}^n[\hat{p}(\Bx_i)-p(\Bx_i)]^2\right)^{\frac{1}{2}}.$$

In the second experiment, we generated $n=200k, k=1,...,10$ observations of a 10-dimensional random vector follows $\text{Beta}(a,b)$ with $a=[2,\cdots,2]$ and $b=[5,\cdots,5]$.  For a specific $n$, we ran three estimators on the same data set to approximate the true density function and randomly sampled 10,000 points $\{\Bx_i\}_{i=1}^{10,000}$ in $[0,1]^{10}$ and used the RMSE as an estimation for the $L^2$ error.

In both experiments, the smoothing policy is set to be $b_{\Bs,n}=\CalO\bigl((1+2^{2|\Bs|}/n)^{-1}\bigr)$.

We first train a GAN estimator $\hat{g}'$ by solving the optimization problem
\begin{equation}\label{eq:num11}
\hat{g}'=\argmin_{q\in Q_g}\sup_{f\in F_d}\E_{X\sim q}[f(X)]-\frac{1}{n}\sum_{i=1}^nf(\BX_i)
\end{equation}
with generator class $Q_g$ and discriminator class $F_d$ encoded by ReLU neural networks. We then replace the empirical estimator in \eqref{eq:num11} by the expectation adaptive to AHCE $\tilde{p}_n$ and train the GAN estimator $\hat{g}$ by
\begin{equation}\label{eq:num12}
\hat{g}=\argmin_{q\in Q_g}\sup_{f\in F_d}\E_{X\sim q}[f(X)]-\E_{Y\sim \tilde{p}_n}[f(Y)].
\end{equation}

\subsection{Improved Performance for GAN}

In the first experiment, we let $\{X_i\}_{i=1}^{n}$ with $n=500$ be a set of 10-dimensional random vectors sampled from the Beta$(2,5)$ distribution. We sampled 5000 observations $\{Y_i\}_{i=1}^{5000}$ from the AHCE $\tilde{p}_n$ and use the mean of $\{f(Y_i)\}_{i=1}^{5000}$ as an estimate of $\E_{Y\sim \tilde{p}_n}[f(Y)]$. In the second experiment, we let $\{X_i\}_{i=1}^{n}$ with $n=1000$ be a set of 20-dimensional random vectors sampled from a $t$-distribution with degree of freedom two. We sampled 10,000 observations $\{Y_i\}_{i=1}^{10,000}$ from the AHCE $\tilde{p}_n$ and use the mean of $\{f(Y_i)\}_{i=1}^{10,000}$ to estimate $\E_{Y\sim \tilde{p}_n}[f(Y)]$. 

\subsection{MNIST Data Set}
We  randomly drew 20000 samples from the total 60000 training samples in the MNIST data set. Two GANs were trained based on these 20000 samples. One GAN is with smooth discriminator and the other one is with piecewise linear discriminator. Settings of two deep neural net empirical discriminators:\\ 

 \quad 1. \textbf{Sigmoid Net}: a neural net with 3 hidden layers where each layer is with 1024, 512 and 256 neurons, respectively. Dropout is applied on each hidden layer. All activations are set as  the Sigmoid function.
 
 \quad 2. \textbf{ReLU Net}: same structure as the Sigmoid net except that all activations are set as the ReLU function.\\[2ex]
The generator of both GANs is set as a neural net with 3 hidden layers where each layer is with 256, 512 and 1024 neurons, respectively. The activations for each hidden layer are set as the leaky ReLU function with scale equals 0.2.

\subsection{Goodness of Fit Test}

In both experiments, we set the observed sample size to $n=50, 100, 150, 200, 500$.

In both experiments, we fixed the significance level at 0.05, and ran $B=1000$ wild bootstrap to estimate the corresponding threshold. Each experiment was repeated for 100 times and we recorded the mean of the observed power for different choices of the level $l$. The wild bootstrap procedure is presented as Algorithm \ref{Alg1}.

\section{Proof of Theorem \ref{thm:densityIPM}}\label{sec:proof_thm1}
We split the proof into two cases -- $F_d=\CalH^\beta_{mix}$ and $F_d=\CalW^\beta_\infty$. We only prove the convergence rate of truncation smoothing policy with Fourier basis function $\phi_\Bs=\phi^F_\Bs$ because the proof of other smooth policies and wavelet basis can be completed in a same manner.

We first prove the following Lemmas, which will be used frequently for estimation in the later proofs.
\\[2ex]
\begin{lem}
\label{lem:sparsegrid_sum}
For any $x\in\Real$ and integer $l\geq 2$:
\begin{align*}
    \sum_{i=0}^{l-1}x^{ i}{i+D-1\choose D-1}&=\sum_{i=0}^{D-1}{D-1\choose i}(\frac{x}{1-x})^{D-1-i}\frac{1}{1-x} -\frac{x^l}{1-x}\sum_{j=0}^{D-1}{l+D-1\choose j}(\frac{x}{1-x})^{D-1-j}.
\end{align*}
\end{lem}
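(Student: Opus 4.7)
My plan is to treat this as a generating-function identity and reduce it, via the classical negative-binomial sum $\sum_{i\ge 0} x^i\binom{i+D-1}{D-1}=(1-x)^{-D}$, to the two closed-form expressions on the right-hand side. I will first prove the identity for $|x|<1$ (where absolute convergence is unproblematic) and then extend it to all $x\in\Real$ by observing that, after clearing $(1-x)^D$, both sides become polynomials in $x$; an equality of polynomials valid on an open interval is valid everywhere.

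The first thing I would do is identify the leading term on the right-hand side with $(1-x)^{-D}$. Setting $y=x/(1-x)$, we have $1+y=1/(1-x)$, so $(1-x)^{-D}=(1-x)^{-1}(1+y)^{D-1}$. Expanding $(1+y)^{D-1}$ by the binomial theorem yields exactly
\[
 \frac{1}{(1-x)^D}=\sum_{i=0}^{D-1}\binom{D-1}{i}\Bigl(\frac{x}{1-x}\Bigr)^{D-1-i}\frac{1}{1-x},
\]
which matches the first summand on the RHS. Hence the lemma reduces to showing that the tail equals the second summand:
\[
\sum_{i=l}^{\infty} x^{i}\binom{i+D-1}{D-1}=\frac{x^l}{1-x}\sum_{j=0}^{D-1}\binom{l+D-1}{j}\Bigl(\frac{x}{1-x}\Bigr)^{D-1-j}.
\]

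To handle the tail I would substitute $i=k+l$ and apply Vandermonde's convolution
\[
\binom{k+l+D-1}{D-1}=\sum_{j=0}^{D-1}\binom{l+D-1}{D-1-j}\binom{k}{j},
\]
interchange the order of summation, and then use the standard identity $\sum_{k\ge 0}\binom{k}{j}x^{k}=x^{j}/(1-x)^{j+1}$. After the reindexing $j\mapsto D-1-j$ this produces the desired second summand. The Vandermonde step is really the only combinatorial ingredient; everything else is bookkeeping.

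The main obstacle I anticipate is justifying the identity for general $x\in\Real$, since the generating-function derivation only works for $|x|<1$. I would finish by noting that the LHS is visibly a polynomial of degree $l-1$, while the RHS, after multiplying through by $(1-x)^D$, becomes a polynomial in $x$ as well (the negative powers of $1-x$ cancel in the way dictated by the binomial expansion). Two polynomials that agree on an open interval agree identically, so the identity holds for all $x\neq 1$, with the case $x=1$ (if needed) following by continuity of both sides, or by directly computing both sides as $\binom{l+D-1}{D}$ via Hockey-stick.
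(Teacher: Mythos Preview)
Your argument is correct and takes a genuinely different route from the paper's proof. The paper proceeds by recognizing $x^{i}\binom{i+D-1}{D-1}=\frac{1}{(D-1)!}\bigl(x^{i+D-1}\bigr)^{(D-1)}$, summing to obtain $\frac{1}{(D-1)!}\bigl(x^{D-1}\frac{1-x^{l}}{1-x}\bigr)^{(D-1)}$, and then expanding this $(D-1)$-th derivative via the Leibniz product rule applied to $(x^{D-1}-x^{l+D-1})\cdot\frac{1}{1-x}$; the two binomial sums on the right-hand side drop out of the two pieces of the product. You instead split the partial sum as the full negative-binomial series $(1-x)^{-D}$ minus its tail, identify the first RHS term with $(1-x)^{-D}$ via $(1+y)^{D-1}$ at $y=x/(1-x)$, and evaluate the tail through Vandermonde's convolution combined with $\sum_{k\ge 0}\binom{k}{j}x^{k}=x^{j}/(1-x)^{j+1}$. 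The derivative approach has the minor advantage of working directly for all $x\neq 1$ without the polynomial-identity extension step; your generating-function approach, on the other hand, makes the structure of the two RHS terms transparent---one is literally $(1-x)^{-D}$ and the other a shifted instance of the same expansion---whereas in the Leibniz computation this interpretation is hidden.
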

\begin{proof}
The result is from direct calculations. Let $f^{(i)}$ denote the $i^{\rm th}$ time derivative of any function $f$ of $x$, then
\begin{align*}
    \sum_{i=0}^{l-1}x^{ i}{i+D-1\choose D-1}&=\frac{1}{(D-1)!}\sum_{i=0}^{l-1}(x^{i+D-1})^{(D-1)}\\
    &=\frac{1}{(D-1)!}(x^{D-1}\frac{1-x^l}{1-x})^{(D-1)}\\
    &=\frac{1}{(D-1)!}\sum_{i=0}^{D-1}{D-1\choose i}(x^{D-1}-x^{l+D-1})^{(i)}(\frac{1}{1-x})^{(D-1-i)}\\
    &=\sum_{i=0}^{D-1}{D-1\choose i}\frac{(D-1)!}{(D-1-i)!} x^{D-1-i}\frac{(D-1-i)!}{(D-1)!}(\frac{1}{1-x})^{D-1-i+1}\\
    &\quad - \sum_{j=0}^{D-1}{D-1\choose j}\frac{(l+D-1)!}{(l+D-1-j)!} x^{l+D-1-j}\frac{(D-1-j)!}{(D-1)!}(\frac{1}{1-x})^{D-1-j+1}\\
    &=\sum_{i=0}^{D-1}{D-1\choose i}(\frac{x}{1-x})^{D-1-i}\frac{1}{1-x}-\frac{x^l}{1-x}\sum_{j=0}^{D-1}{l+D-1\choose j}(\frac{x}{1-x})^{D-1-j}.
\end{align*}
\end{proof}
\begin{lem}
\label{lem:bungartz-Lemma-3-7}
For any $s>0$, $l\in\NatInt$ and index $\Bk\in\NatInt^D$:
\begin{align*}
   \sum_{|{\Bk}|>l+D-1}2^{-s|\Bk|}\lesssim 2^{-sl}l^{D-1}.
\end{align*}
\\[8ex]
\end{lem}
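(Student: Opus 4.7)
The plan is to collapse the $D$-dimensional lattice sum into a one-dimensional series indexed by the level $m := |\Bk|$. Grouping terms with a common value of $|\Bk|$ gives
$$\sum_{|\Bk|>l+D-1}2^{-s|\Bk|}\;=\;\sum_{m=l+D}^{\infty}N(m,D)\,2^{-sm},$$
where $N(m,D):=\#\{\Bk\in\NatInt^D:|\Bk|=m\}$ is the stars-and-bars count, a polynomial in $m$ of degree $D-1$. In the paper's convention (matching Lemma \ref{lem:sparsegrid_sum}), $N(m,D)=\binom{m+D-1}{D-1}$.

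The cleanest next step is to invoke Lemma \ref{lem:sparsegrid_sum} directly with $x=2^{-s}\in(0,1)$ and with $l$ in that lemma replaced by $l+D$. First sending the upper index to infinity in Lemma \ref{lem:sparsegrid_sum} shows that the ``second block'' vanishes (because $2^{-sL}\cdot\mathrm{poly}(L)\to 0$), so the full series $\sum_{i\geq 0}2^{-si}\binom{i+D-1}{D-1}$ equals the ``first block.'' Subtracting the identity at $l+D$ from the full-sum identity leaves exactly a closed form for the tail,
$$\sum_{i=l+D}^{\infty}2^{-si}\binom{i+D-1}{D-1}\;=\;\frac{2^{-s(l+D)}}{1-2^{-s}}\sum_{j=0}^{D-1}\binom{l+2D-1}{j}\left(\frac{2^{-s}}{1-2^{-s}}\right)^{D-1-j}.$$
The leading $l$-dependence sits in the $j=D-1$ summand, where $\binom{l+2D-1}{D-1}=\Theta(l^{D-1})$; all lower-$j$ terms contribute only lower-order polynomials in $l$. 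Absorbing all $s$- and $D$-dependent prefactors into $\lesssim$ yields the claimed bound $\lesssim 2^{-sl}l^{D-1}$.

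If one prefers to avoid citing Lemma \ref{lem:sparsegrid_sum}, there is an equally short elementary route: use $\binom{m+D-1}{D-1}\leq C_D(m+1)^{D-1}$ and then factor the tail as
$$\sum_{m\geq M}(m+1)^{D-1}2^{-sm}=2^{-sM}(M+1)^{D-1}\sum_{k\geq 0}\Bigl(1+\tfrac{k}{M+1}\Bigr)^{D-1}2^{-sk}\leq 2^{-sM}(M+1)^{D-1}\sum_{k\geq 0}(1+k)^{D-1}2^{-sk},$$
where the final series converges since $s>0$. Setting $M=l+D$ and using $(l+D+1)^{D-1}\lesssim l^{D-1}$ for $l\geq 1$ again recovers the bound.

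There is no genuine obstacle; the argument is pure bookkeeping. The only point to keep in mind is that all implicit constants may depend on $s$ and $D$ but not on $l$, which is precisely the content of the $\lesssim$ notation used throughout the paper. A small technicality worth flagging in writing is the choice of convention for $\NatInt$ (whether $0\in\NatInt$): under the alternative convention one instead has $N(m,D)=\binom{m-1}{D-1}$, and a shift $m=n+D$ reduces it to the previous case, leaving the final bound unchanged.
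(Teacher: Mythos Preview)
Your proof is correct and follows essentially the same route as the paper: both collapse the lattice sum to a one-dimensional tail series in the level $m=|\Bk|$, apply the generating-function identity from Lemma~\ref{lem:sparsegrid_sum} with $x=2^{-s}$, and read off the $2^{-sl}l^{D-1}$ asymptotics from the dominant binomial term. The only discrepancy is that the paper's counting convention gives $N(m,D)=\binom{m-1}{D-1}$ (positive-integer indices) rather than your $\binom{m+D-1}{D-1}$, but you correctly flag this at the end and it only affects constants, not the bound.
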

\begin{proof}
From a direct calculation, we have
\begin{align*}
    \sum_{|\Bk|>l+D-1}2^{-s|\Bk|}={} &\sum_{i=l+D}^\infty 2^{-s i}\sum_{|\Bk|=i}1
    ={}  \sum_{i=l+D}^\infty 2^{-s i}{\binom{i-1}{D-1}}
    ={} 2^{-sl}\cdot 2^{-sD}\sum_{i=0}^{\infty}2^{-si}{\binom{l +i + D- 1}{ D-1}}.
\end{align*}
 For any $x\in\Real$, calculations similar to Lemma \ref{lem:sparsegrid_sum} show
\begin{align*}
\sum_{i=0}^{\infty}x^i{\binom{\tau +i + d- 1}{d-1}}&=\frac{x^{-l}}{(D-1)!}\bigg(\sum_{i=0}^\infty x^{l+i+D-1}\bigg)^{(D-1)}\\
&=\frac{x^{-l}}{(D-1)!}\bigg(x^{l+D-1}\frac{1}{1-x}\bigg)^{(D-1)}\\
&=\sum_{i=0}^{d-1}{\binom{l+D-1}{ i}}\biggl(\frac{x}{1-x}\biggr)^{D-1-i}\frac{1}{1-x}.
\end{align*}
Let $x=2^{-s}$ and make the substitution, we can get
\begin{align*}
    \sum_{|\Bk|>l+D-1}2^{-s|\Bk|}={} &
    2^{-sl}2^{-sD}\sum_{i=0}^{\infty}2^{-si}{\binom{l +i + D- 1}{ D-1}}\\
    {}\leq &  2^{-s\tau}\max_{j=0,\ldots,D-1}\Biggl\{\frac{2^{-sD}}{1-2^{-s}}\left(\frac{2^{-s}}{1-2^{-s}}\right)^{D-1-j}\Biggr\}\sum_{j=0}^{D-1}{\binom{l+D-1}{ j}}\\
    \lesssim & 2^{-sl}l^{D-1}
\end{align*}
where the last equality is from the fact that
$\sum_{i=0}^{D-1}{\binom{l+D-1}{i}}\lesssim Dl^{D-1}$.
\end{proof}

\subsection{Mixed Smooth Sobolev Discriminator Class:  $F_d=\CalH^\beta_{mix}$}\label{sec:UpperBound_Mix}
According to definition \ref{dfn:mixSobolevNorm}, for any $f\in\CalH^\beta_{mix}(L_{\beta})$, we can write its  expansion as
$$f=\sum_{\bold{k}\in\NatInt^D}\sum_{\bold{s}\in\rho(\bold{k})}\hat{f}_{\bold{s}}\phi_{\Bs},$$
where the convergence is in $L^2$. This representation
can also be applied to any density function $p\in\CalH^\alpha_{mix}(L_{\alpha})$.

Based on the definition of $\tilde{p}_n$, the IPM loss has the following decomposition
\begin{align*}
    \E d_{F_d}(p,\tilde{p}_n)&=\E \sup_{f\in\CalH^\beta_{mix}(L_{\beta})}\langle f, p-\tilde{p}_n\rangle_{L^2}\\
    &=\E \sup_{f\in\CalH^\beta_{mix}(L_{\beta})} \sum_{\bold{k}\in\NatInt^D}\sum_{\bold{s}\in\rho(\bold{k})}\hat{f}_{\bold{s}}\big[\hat{p}_{\bold{s}}-\tilde{p}_{\bold{s}}\big]\\
    &= \E \sup_{f\in\CalH^\beta_{mix}(L_{\beta})} \big\{\sum_{|\bold{k}|\leq l}\sum_{\bold{s}\in\rho(\bold{k})} \hat{f}_{\bold{s}}\big[\hat{p}_{\bold{s}}-\tilde{p}_{\bold{s}}\big]+\sum_{|\bold{k}|> l}\sum_{\bold{s}\in\rho(\bold{k})} \hat{f}_{\bold{s}}\hat{p}_{\bold{s}}\big\}\\
    &\leq \underbrace{\E \sup_{f\in\CalH^\beta_{mix}(L_{\beta})} \sum_{|\bold{k}|\leq l}\sum_{\bold{s}\in\rho(\bold{k})} \hat{f}_{\bold{s}}\big[\hat{p}_{\bold{s}}-\tilde{p}_{\bold{s}}\big]}_{\text{Variance}}+\underbrace{\sup_{f\in\CalH^\beta_{mix}(L_{\beta})}\sum_{|\bold{k}|> l}\sum_{\bold{s}\in\rho(\bold{k})} \hat{f}_{\bold{s}}\hat{p}_{\bold{s}}}_{\text{Bias}}
\end{align*}
where the first term in the last line is the variance and the second term is the bias.

For the variance, we have:
\begin{align*}
    &\quad \E \sup_{f\in\CalH^\beta_{mix}(L_{\beta})} \sum_{|\bold{k}|\leq l}\sum_{\bold{s}\in\rho(\bold{k})} \hat{f}_{\bold{s}}\big[\hat{p}_{\bold{s}}-\tilde{p}_{\bold{s}}\big]\\
    &\leq \E \sup_{f\in\CalH^\beta_{mix}(L_{\beta})}\bigg[\sum_{|\bold{k}|\leq l}2^{2|\Bk|\beta}\sum_{\bold{s}\in\rho(\bold{k})}\hat{f}_{\bold{s}}^2\bigg]^{\frac{1}{2}}\bigg[\sum_{|\bold{k}|\leq l}2^{-2|\Bk|\beta}\sum_{\bold{s}\in\rho(\bold{k})}[\hat{p}_{\bold{s}}-\tilde{p}_{\bold{s}}\big]^2\bigg]^{\frac{1}{2}}\\
    &\leq \sup_{f\in\CalH^\beta_{mix}(L_{\beta})}\|f\|_{\CalH^\beta_{mix}}\bigg[\sum_{|\bold{k}|\leq l}2^{-2|\Bk|\beta}\sum_{\bold{s}\in\rho(\bold{k})}\E[(\hat{p}_{\bold{s}}-\tilde{p}_{\bold{s}})^2\big]\bigg]^{\frac{1}{2}}\\
    &\leq L_{\beta}C\sqrt{\frac{1}{n}\sum_{i=D}^l2^{(1-2\beta)i}{i-1\choose D-1}}
\end{align*}
where $C=\big(\E|\psi_{\bold{s}}(X_1)|^2\big)^{\frac{1}{2}}\asymp \frac{1}{\pi} $. The second line of the above equations is from H\"older's inequality, the third line is from Jensen's inequality.

When $\beta=1/2$, we have
\begin{equation*}
    \sum_{i=D}^l2^{(1-2\beta)i}{i-1\choose D-1}= \sum_{i=D-1}^{l-1}{i \choose D-1}={l\choose D}\asymp \frac{l^D}{D!}
\end{equation*}
For $\beta\neq 1/2$, we apply Lemma \ref{lem:sparsegrid_sum} with the identity $x=2^{1-2\beta}$ to have the following estimate
\begin{align*}
 \sum_{i=D}^l2^{(1-2\beta)i}{i-1\choose D-1}&=\sum_{i=0}^{l-D+1-1}2^{(1-2\beta)(i+D)}{i+D-1\choose D-1}\\
 &=2^{(1-2\beta)D}\bigg[\underbrace{\sum_{i=0}^{D-1}{D-1\choose i}(\frac{x}{1-x})^{D-1-i}\frac{1}{1-x}}_{A} -\underbrace{\frac{x^{l-D+1}}{1-x}\sum_{j=0}^{D-1}{l\choose j}(\frac{x}{1-x})^{D-1-j}}_B\bigg].
\end{align*}
Notice that term A is independent of $l$. For term B, if $\beta<1$, then the summand for $j=D-1$ is the largest one; if $\beta>1$, then $1/(1-x)$ is bounded by 1. In both cases, we can use Stirling's approximation to get
\begin{align*}
    -\frac{x^{l-D+1}}{1-x}\sum_{j=0}^{D-1}{l\choose j}(\frac{x}{1-x})^{D-1-j}\asymp2^{(1-2\beta)(l-D+1)}\frac{l^{D-1}}{(D-1)!}\lesssim2^{(1-2\beta)l}l^D.
\end{align*}
Therefore, we have the following estimate for the variance:
\begin{align*}
    \E \sup_{f\in\CalH^\beta_{mix}(L_{\beta})} \sum_{|\bold{k}|\leq l}\sum_{\bold{s}\in\rho(\bold{k})} \hat{f}_{\bold{s}}\big[\hat{p}_{\bold{s}}-\tilde{p}_{\bold{s}}\big]
    &\leq L_{\beta}C\sqrt{\frac{1}{n}\sum_{i=D}^l2^{(1-2\beta)i}{i-1\choose D-1}}\\
    &\lesssim n^{-\frac{1}{2}}2^{\frac{(1-2\beta)l}{2}}l^{\frac{D}{2}}.
\end{align*}

For the bias, we have:
\begin{align*}
    &\quad \sup_{f\in\CalH^\beta_{mix}(L_{\beta})}\sum_{|\bold{k}|> l}\sum_{\bold{s}\in\rho(\bold{k})} \hat{f}_{\bold{s}}\hat{p}_{\bold{s}}\\
    &\leq \sup_{f\in\CalH^\beta_{mix}(L_{\beta})}\big\{\sum_{|\bold{k}|> l} \|\sum_{\bold{s}\in\rho(\bold{k})} \hat{f}_{\bold{s}}\|_{l^2}^2\big\}^{\frac{1}{2}}\big\{\sum_{|\bold{k}|> l} \|\sum_{\bold{s}\in\rho(\bold{k})} \hat{p}_{\bold{s}}\|_{l^2}^2\big\}^{\frac{1}{2}}\\
    &\lesssim L_{\alpha}L_{\beta}\big\{\sum_{i\geq l+1}\sum_{|\bold{k}|=i} 2^{-2\beta i}\big\}^{\frac{1}{2}}\big\{\sum_{i\geq l+1}\sum_{|\bold{k}|=i} 2^{-2\alpha i}\big\}^{\frac{1}{2}}\\
    &\lesssim L_{\alpha}L_{\beta}  2^{-(\alpha+\beta)l}l^{D-1}
\end{align*}
where the second line is from H\"older inequality, the third line is from the following identities
\begin{align*}
    &\|p\|_{\CalH_{mix}^\alpha}=\big\{\sum_{\Bk\in\NatInt^D}2^{2|\Bk|\alpha} \|\sum_{\bold{s}\in\rho(\bold{k})} \hat{p}_{\bold{s}}\|_{l^2}^2\big\}^{\frac{1}{2}}\\
    &\|f\|_{\CalH_{mix}^\beta}=\big\{\sum_{\bold{k}\in\NatInt^D}2^{2|\Bk|\beta} \|\sum_{\bold{s}\in\rho(\bold{k})} \hat{f}_{\bold{s}}\|_{l^2}^2\big\}^{\frac{1}{2}}
\end{align*}

and the last line is from Lemma \ref{lem:bungartz-Lemma-3-7}. So we can minimize over all feasible $l$ to have:
\begin{align*}
    \E d_{F_d}(p,\tilde{p}_n)&\lesssim \inf_{l\in\NatInt}\sqrt{\frac{2^{(1-2\beta)l}l^{D}}{n}}+  2^{-(\alpha+\beta)l}l^{D}
\end{align*}
which give $2^l \asymp n^{\frac{1}{2\alpha+1}}\big[\log n\big]^{D\frac{(\alpha+\beta+1)}{2\alpha+1}}$. We then can plug this identity in the above equation to get the result.
\subsection{H\"older Discriminator Class: $F_d=\CalW^\beta_\infty$}\label{sec:upperBound_Holder}
In order to prove the statement, we first need to define a function norm equivalent to the H\"older norm so that we can apply the method in \ref{sec:UpperBound_Mix}.  The following theorem can help
\begin{theorem}[The Littlewood-Paley theorem.]\label{thm:Littlewood_Paley}
Let $1<p<\infty$. There exists positive number $C_1$ and $C_2$, which only depend on $p$ and $D$, such that for each function  $f\in L^p$,
\[C_1\|f\|_p\leq \bigg\|\bigg(\sum_{\Bk\in\NatInt^d}\big(\sum_{\Bs\in\rho(\Bk)}\hat{f}_\Bs\phi_\Bs\big)^2\bigg)^{\frac{1}{2}}\bigg\|_p\leq C_2\|f\|_p.\]
\end{theorem}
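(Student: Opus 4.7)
The plan is to prove this classical Littlewood--Paley inequality by the standard harmonic-analytic route, first settling the easy Hilbert-space case and then bootstrapping to general $p$ by a randomization argument combined with a multiplier theorem. Let $\Delta_{\Bk} f := \sum_{\Bs\in\rho(\Bk)}\hat f_{\Bs}\phi_{\Bs}$ denote the dyadic frequency block, so that the square function in the statement is $S f := (\sum_{\Bk}|\Delta_{\Bk}f|^2)^{1/2}$. For $p=2$ the claim is immediate: since the Fourier basis $\{\phi_{\Bs}\}$ is orthonormal on the torus and the index sets $\rho(\Bk)$ partition $\mathbb Z^D$, Plancherel gives $\|Sf\|_2^2=\sum_{\Bk}\|\Delta_{\Bk}f\|_2^2=\|f\|_2^2$, so one may take $C_1=C_2=1$ there.

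For general $1<p<\infty$ I would reduce the two-sided bound on the square function to uniform $L^p$-boundedness of a family of randomized Fourier multipliers. By Khinchin's inequality, for any scalars $a_{\Bk}$ one has $(\sum_{\Bk}|a_{\Bk}|^2)^{1/2}\asymp_p (\mathbb E_{\epsilon}|\sum_{\Bk}\epsilon_{\Bk}a_{\Bk}|^p)^{1/p}$, where $\{\epsilon_{\Bk}\}$ are independent Rademacher signs. Applied pointwise in the spatial variable and integrated, this gives
\[\|Sf\|_p^p\asymp_p \mathbb E_{\epsilon}\bigl\|T_{\epsilon}f\bigr\|_p^p, \qquad T_{\epsilon}f := \sum_{\Bk}\epsilon_{\Bk}\Delta_{\Bk}f.\]
Thus it suffices to prove that $T_{\epsilon}$ is bounded on $L^p$ with an operator norm independent of the choice of signs. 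A matching lower bound for $\|f\|_p$ in terms of $\|Sf\|_p$ then follows by duality, since $T_{\epsilon}^{*}=T_{\epsilon}$ and $\sum_{\Bk}\Delta_{\Bk}$ equals the identity.

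The boundedness of $T_{\epsilon}$ reduces to a multiplier statement: $T_{\epsilon}$ is the Fourier multiplier with symbol $m_{\epsilon}(\Bs)=\epsilon_{\Bk(\Bs)}$, where $\Bk(\Bs)$ denotes the unique $\Bk$ with $\Bs\in\rho(\Bk)$. Because $\rho(\Bk)$ is a product of one-dimensional dyadic blocks, $m_{\epsilon}$ is constant on each dyadic rectangle in $\mathbb Z^D$, is bounded by $1$ pointwise, and has uniformly bounded variation along every coordinate direction in each dyadic rectangle. These are precisely the hypotheses of the multi-parameter Marcinkiewicz multiplier theorem on the torus $\mathbb T^D$, which then yields $\|T_{\epsilon}f\|_p\le C_p\|f\|_p$ with $C_p$ depending only on $p$ and $D$. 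Inserting this bound into the Khinchin step gives the upper inequality $\|Sf\|_p\le C_2\|f\|_p$; the lower inequality $C_1\|f\|_p\le\|Sf\|_p$ follows by the duality argument sketched above, using that $\langle f,g\rangle=\sum_{\Bk}\langle\Delta_{\Bk}f,\Delta_{\Bk}g\rangle$ and Cauchy--Schwarz in $\Bk$.

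The main obstacle is the multi-parameter Marcinkiewicz step: one-parameter Littlewood--Paley theory is classical, but in dimension $D$ the multiplier $m_{\epsilon}$ is only a product-type (not radial) symbol, so one must use the tensor-product form of Marcinkiewicz (equivalently, iterate the one-dimensional theorem slice by slice while controlling the intermediate vector-valued $L^p$ norms). This iteration is what ultimately produces a constant depending on $D$, and verifying that the slice-wise bounds compose correctly is the technically delicate part of the argument.
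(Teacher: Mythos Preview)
The paper does not prove this theorem; it is quoted as the classical Littlewood--Paley theorem and used as a black box in the proof of Lemma~\ref{lem:Sobolev_equivalent_norm}. So there is no ``paper's own proof'' to compare against.

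Your sketch is the standard harmonic-analysis argument and is correct in outline: Plancherel handles $p=2$; Khinchin's inequality converts the square function estimate into uniform $L^p$-boundedness of the randomized multiplier operators $T_\epsilon$; the symbol $m_\epsilon$ is constant on the dyadic rectangles $\rho(\Bk)=\prod_{j=1}^D\{s_j:2^{k_j-1}\le|s_j|<2^{k_j}\}$, which is exactly the hypothesis needed for the multi-parameter Marcinkiewicz multiplier theorem (or, equivalently, for iterating the one-dimensional Littlewood--Paley inequality with the vector-valued extension at each step); and the lower bound follows from the duality computation $|\langle f,g\rangle|=|\sum_{\Bk}\langle\Delta_{\Bk}f,\Delta_{\Bk}g\rangle|\le\int Sf\cdot Sg\le\|Sf\|_p\|Sg\|_{p'}\le C_2(p')\|Sf\|_p\|g\|_{p'}$. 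The only place to be careful is exactly the one you flagged: the iteration of the one-dimensional result requires the vector-valued (i.e.\ $\ell^2$-valued) boundedness of the one-dimensional multipliers at each intermediate step, which is what makes the final constants depend on $D$. This is classical and your proposal correctly identifies it as the crux.
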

 H\"older spaces can be generalized to  Sobolev spaces defined as follows
\begin{defn}[Sobolev Space]
\label{dfn:SobolevSpace}
Let $\alpha\in\NatInt$ and $1\leq p\leq \infty$. For any function $f$, we define the following Sobolev $\CalW^\alpha_p$-norm:
\[\|f\|_{\CalW^\alpha_p}\coloneqq\sum_{|\Balpha|\leq \alpha}\|D^{\Balpha} f\|_p.\]
Then the Sobolev space $\CalW^\alpha_p$ is the set of all functions with bounded $\CalW^\alpha_p$-norm and the Sobolev ellipsoid $\CalW^\alpha_p(L)$ is the set of all functions with $\CalW^\alpha_p$-norm less than $L$.
\end{defn}
According to section 5.8.2 (b) in \cite{evans10} and the definition of H\"older space, we can easily check that H\"older norm \eqref{eq:Holder_norm} conincides with definition \ref{dfn:SobolevSpace}:
\[\max_{ |\Balpha|<\alpha}\|D^{\Balpha}f\|_\infty
    + \max_{ |\Balpha|=\alpha-1}\max_{\Bx\neq \By\in\Omega}\frac{|D^{\Balpha}f(\Bx)-D^{\Balpha}f(\By)|}{\|\Bx-\By\| }\asymp \sum_{|\Balpha|\leq\alpha}\|D^{\Balpha}f\|_\infty\]
    for any integer $\alpha$.
    
Now our goal is to define an equivalent norm of $\CalW^\alpha_p$  in the form similar to the mixed smooth Sobolev norm in definition \ref{dfn:mixSobolevNorm}:
\begin{lem}
\label{lem:Sobolev_equivalent_norm}
Let $1<p<\infty$. For any $\alpha\in\NatInt$,
\[\|f\|_{\CalW^\alpha_p}\asymp\bigg\|\bigg(\sum_{\Bk\in\NatInt^d}\big(\sum_{j=1}^D2^{\alpha k_j}\big)^2\big(\sum_{\Bs\in\rho(\Bk)}\hat{f}_\Bs\phi_\Bs\big)^2\bigg)^{\frac{1}{2}}\bigg\|_p.\]
\end{lem}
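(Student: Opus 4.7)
The plan is to reduce $\|f\|_{\CalW^\alpha_p}$ to a Littlewood--Paley square function block by block via Theorem \ref{thm:Littlewood_Paley}, exploit the near-constancy of the Fourier symbol $(i\Bs)^\Balpha$ on each block $\rho(\Bk)$, and finally compare the weights $\prod_j 2^{\alpha_j k_j}$ with $\sum_{j=1}^D 2^{\alpha k_j}$ through elementary combinatorial inequalities.

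First, for each multi-index $\Balpha$ with $|\Balpha|\leq\alpha$, termwise differentiation multiplies the Fourier coefficient $\hat{f}_\Bs$ by $(i\Bs)^\Balpha$. Applying Theorem \ref{thm:Littlewood_Paley} to $D^\Balpha f$ yields
\begin{equation*}
\|D^\Balpha f\|_p\asymp \bigg\|\bigg(\sum_{\Bk\in\NatInt^D}\bigg|\sum_{\Bs\in\rho(\Bk)}(i\Bs)^\Balpha\hat{f}_\Bs\phi_\Bs\bigg|^2\bigg)^{1/2}\bigg\|_p.
\end{equation*}
Because $2^{k_j-1}\leq |s_j|<2^{k_j}$ on $\rho(\Bk)$, the normalized symbol $m_\Bk(\Bs)\coloneqq\prod_{j=1}^D(is_j/2^{k_j})^{\alpha_j}\ind_{\rho(\Bk)}(\Bs)$ is bounded from below and above, uniformly in $\Bk$, by constants depending only on $D$ and $\alpha$. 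A Mikhlin/Marcinkiewicz Fourier-multiplier argument applied to the vector-valued square function then replaces $(i\Bs)^\Balpha$ by the constant $\prod_j 2^{\alpha_j k_j}$ on each block without changing the $L^p$ norm, giving
\begin{equation*}
\|D^\Balpha f\|_p\asymp \bigg\|\bigg(\sum_{\Bk\in\NatInt^D}\prod_{j=1}^D 2^{2\alpha_j k_j}\bigg|\sum_{\Bs\in\rho(\Bk)}\hat{f}_\Bs\phi_\Bs\bigg|^2\bigg)^{1/2}\bigg\|_p.
\end{equation*}

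Next, for the direction $\|f\|_{\CalW^\alpha_p}\lesssim$ RHS, I use $\prod_j 2^{\alpha_j k_j}\leq 2^{\alpha\max_j k_j}\leq\sum_{j=1}^D 2^{\alpha k_j}$ whenever $|\Balpha|\leq\alpha$, so each $\|D^\Balpha f\|_p$ is dominated by the RHS of the lemma, and summing over the finitely many admissible $\Balpha$ preserves the bound. For the reverse direction I specialize to $\Balpha=\alpha e_j$, which gives $\prod_i 2^{\alpha_i k_i}=2^{\alpha k_j}$ and hence $\|D^{\alpha e_j}f\|_p\asymp\|(\sum_\Bk 2^{2\alpha k_j}|f_\Bk|^2)^{1/2}\|_p$, where $f_\Bk\coloneqq\sum_{\Bs\in\rho(\Bk)}\hat{f}_\Bs\phi_\Bs$. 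Combining the $D$ resulting estimates via $\big(\sum_j 2^{\alpha k_j}\big)^2\leq D\sum_j 2^{2\alpha k_j}$ and the triangle inequality $\|(\sum_j a_j^2)^{1/2}\|_p\leq \sum_j\|a_j\|_p$ produces the bound RHS $\lesssim\sum_{j=1}^D\|D^{\alpha e_j}f\|_p\leq\|f\|_{\CalW^\alpha_p}$.

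The main obstacle is the symbol-replacement step: one must verify that the family of multipliers $\{m_\Bk\}_\Bk$ acts uniformly boundedly on the vector-valued $L^p(\ell^2)$ in which the square function lives. I would handle it either by checking a Mikhlin-type smoothness condition on the frequency annuli and invoking a standard vector-valued multiplier theorem, or, exploiting the tensor-product structure of $\rho(\Bk)$, by iterating the one-dimensional Littlewood--Paley characterization of $\CalW^{\alpha_j}_p$ in each coordinate. Once this is in hand, the rest of the argument is elementary dyadic bookkeeping.
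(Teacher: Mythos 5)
Your proof is correct and follows essentially the same strategy as the paper: reduce each $\|D^{\Balpha}f\|_p$ to a Littlewood--Paley square function via Theorem~\ref{thm:Littlewood_Paley}, then compare the block weights $2^{\Balpha\cdot\Bk}$ with $\sum_j 2^{\alpha k_j}$ by elementary inequalities. The one place where you diverge is worth noting. The paper simply asserts the identity $D^{\Balpha}F_{\Bk}[f]=2^{\Balpha\cdot\Bk}F_{\Bk}[f]$, which is not literally true for the Fourier basis (the actual multiplier is $(2\pi i\Bs)^{\Balpha}$, which varies across the block $\rho(\Bk)$), and silently treats the comparison $|(2\pi i\Bs)^{\Balpha}|\asymp 2^{\Balpha\cdot\Bk}$ as if it could be moved inside the $L^p$ norm of the square function. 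You make this step explicit and honest: you isolate the normalized symbol $m_{\Bk}(\Bs)=\prod_j(is_j/2^{k_j})^{\alpha_j}\ind_{\rho(\Bk)}(\Bs)$ and invoke a vector-valued Mikhlin/Marcinkiewicz multiplier theorem (or the tensorized one-dimensional Littlewood--Paley characterization) to justify swapping $(i\Bs)^{\Balpha}$ for the block constant $\prod_j 2^{\alpha_j k_j}$. That is exactly the detail the paper glosses over, and your treatment is the more rigorous one; either of your two suggested routes works, since the symbols are smooth tensor products that are essentially constant on dyadic annuli. Your lower bound also uses slightly cleaner bookkeeping --- $(\sum_j 2^{\alpha k_j})^2\leq D\sum_j 2^{2\alpha k_j}$ together with the pointwise $\ell^2\leq\ell^1$ inequality and the triangle inequality in $L^p$ --- whereas the paper raises everything to the $p$-th power, integrates, and applies a pointwise concavity/Jensen-type bound to pull the sum over $j$ inside the power. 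Both yield the same conclusion up to dimension-dependent constants, so this is a cosmetic difference; the substantive improvement in your writeup is making the multiplier step explicit.
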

\begin{proof}
For notation simplicity, we let $F_{\Bk}[f](\cdot)\coloneqq \sum_{\Bs\in\rho(\Bk)}\hat{f}_\Bs\phi_\Bs(\cdot)$ and without loss of generality, we assume $\{\phi_\Bs\}$ are the Fourier features. Proof for other kind of wavelet system is similar. We first show 
\[\|f\|_{\CalW^\alpha_p}\lesssim\bigg\|\bigg(\sum_{\Bk\in\NatInt^d}\big(\sum_{j=1}^D2^{\alpha k_j}\big)^2\big(\sum_{\Bs\in\rho(\Bk)}\hat{f}_\Bs\phi_\Bs\big)^2\bigg)^{\frac{1}{2}}\bigg\|_p.\]
From the definition of $\|f\|_{\CalW^{\alpha}_p}$, theorem \ref{thm:Littlewood_Paley} and the fact that $D^{\Balpha}F_{\Bk}[f]=2^{\Balpha\cdot\Bk}F_{\Bk}[f]$, we can get:
\begin{align*}
    \|f\|_{\CalW^\alpha_p}&=\sum_{|\Balpha|\leq \alpha}\|D^{\Balpha} f\|_p\\
    &\asymp \sum_{|\Balpha|\leq \alpha}\bigg\|\bigg(\sum_{\Bk\in\NatInt^d}2^{2\Balpha\cdot\Bk}F^2_\Bk[f]\bigg)^{\frac{1}{2}}\bigg\|_p\\
    &\lesssim \bigg\|\bigg(\sum_{\Bk\in\NatInt^d}(\sum_{j=1}^D2^{\alpha k_j})^2F^2_\Bk[f]\bigg)^{\frac{1}{2}}\bigg\|_p.
\end{align*}
where the last line is because $\Balpha\cdot\Bk\leq |\Balpha|\max_{j=1,\ldots,D}k_j\leq \alpha \max_{j=1,\ldots,D}k_j $ and, hence,
\[2^{\Balpha\cdot\Bk}\leq \sum_{j=1}^D2^{\alpha k_j}.\]
Now we show
\[\|f\|_{\CalW^\alpha_p}\gtrsim\bigg\|\bigg(\sum_{\Bk\in\NatInt^d}\big(\sum_{j=1}^D2^{\alpha k_j}\big)^2\big(\sum_{\Bs\in\rho(\Bk)}\hat{f}_\Bs\phi_\Bs\big)^2\bigg)^{\frac{1}{2}}\bigg\|_p.\]
We need another equivalent norm of $\|\cdot\|_{\CalW^\alpha_p}$:
\[\|f\|_{\CalW^\alpha_p}^p\asymp \sum_{|\Balpha|\leq \alpha}\|D^{\Balpha} f\|_p^p.\]
Again, from theorem \ref{thm:Littlewood_Paley}, we have
\begin{align*}
    \sum_{|\Balpha|\leq \alpha}\|D^{\Balpha} f\|_p^p&\geq \sum_{j=1}^D\|\frac{\partial^\alpha}{\partial x_j^\alpha}f\|_p^p\\
    &\asymp  \sum_{j=1}^D\bigg\|\bigg(\sum_{\Bk\in\NatInt^d}2^{2\alpha k_j}F^2_\Bk[f]\bigg)^{\frac{1}{2}}\bigg\|_p^p\\
    &=\int_\Omega \sum_{j=1}^D\bigg(\sum_{\Bk\in\NatInt^d}2^{2\alpha k_j}F^2_\Bk[f](\Bx)\bigg)^{\frac{p}{2}}d \Bx\\
    &\gtrsim \int_\Omega \bigg(\sum_{\Bk\in\NatInt^d}\sum_{j=1}^D2^{2\alpha k_j}F^2_\Bk[f](\Bx)\bigg)^{\frac{p}{2}}d \Bx\\
    &\gtrsim \bigg\|\bigg(\sum_{\Bk\in\NatInt^d}(\sum_{j=1}^D2^{\alpha k_j})^2F^2_\Bk[f]\bigg)^{\frac{1}{2}}\bigg\|_p^p.
\end{align*}
\end{proof}
Now we can continue the proof of theorem \ref{thm:densityIPM} for the case $F_d=\CalW_\infty^\beta$. Because any function $f\in\CalW^\beta_\infty$, $f\in\CalW^\beta_2$, we have
\[d_{\CalW^\beta_\infty(L_\beta)}(p,q)\leq d_{\CalW^\beta_2(L_\beta)}(p,q)\]
for any density function $p,q$. Therefore, calculations similar to section \ref{sec:UpperBound_Mix} show
\begin{align*}
    \E[d_{\CalW^\beta_\infty(L_\beta)}(p,\tilde{p}_n)]&\leq\E[d_{\CalW^\beta_2(L_\beta)}(p,\tilde{p}_n)]\\
    &\leq \underbrace{\E \sup_{f\in\CalW^\beta_2(L_\beta)} \sum_{|\bold{k}|\leq l}\sum_{\bold{s}\in\rho(\bold{k})} \hat{f}_{\bold{s}}\big[\hat{p}_{\bold{s}}-\tilde{p}_{\bold{s}}\big]}_{\text{Variance}}+\underbrace{\sup_{f\in\CalW^\beta_2(L_\beta)}\sum_{|\bold{k}|> l}\sum_{\bold{s}\in\rho(\bold{k})} \hat{f}_{\bold{s}}\hat{p}_{\bold{s}}}_{\text{Bias}}.
\end{align*}
For the variance, we can use Lemma \ref{lem:Sobolev_equivalent_norm} to get
\begin{align*}
    &\quad \E \sup_{f\in\CalW^\beta_2(L_\beta)} \sum_{|\bold{k}|\leq l}\sum_{\bold{s}\in\rho(\bold{k})} \hat{f}_{\bold{s}}\big[\hat{p}_{\bold{s}}-\tilde{p}_{\bold{s}}\big]\\
    &\leq \E \sup_{f\in\CalW^\beta_2(L_\beta)}\bigg[\sum_{|\bold{k}|\leq l}(\sum_{j=1}^D2^{\beta k_j})^2\sum_{\bold{s}\in\rho(\bold{k})}\hat{f}_{\bold{s}}^2\bigg]^{\frac{1}{2}}\bigg[\sum_{|\bold{k}|\leq l}(\sum_{j=1}^D2^{\beta k_j})^{-2}\sum_{\bold{s}\in\rho(\bold{k})}[\hat{p}_{\bold{s}}-\tilde{p}_{\bold{s}}\big]^2\bigg]^{\frac{1}{2}}\\
    &\lesssim \sup_{f\in\CalW^\beta_2(L_\beta)}\|f\|_{\CalW^\beta_2(L_\beta)}\bigg[\sum_{|\bold{k}|\leq l}(\sum_{j=1}^D2^{\beta k_j})^{-2}\sum_{\bold{s}\in\rho(\bold{k})}\E[(\hat{p}_{\bold{s}}-\tilde{p}_{\bold{s}})^2\big]\bigg]^{\frac{1}{2}}\\
    &\leq \sup_{f\in\CalW^\beta_2(L_\beta)}\|f\|_{\CalW^\beta_2(L_\beta)}\bigg[\sum_{|\bold{k}|\leq l}D^{-2}2^{-2|\Bk|\frac{\beta}{D}}\sum_{\bold{s}\in\rho(\bold{k})}\E[(\hat{p}_{\bold{s}}-\tilde{p}_{\bold{s}})^2\big]\bigg]^{\frac{1}{2}}
\end{align*}
where the last line is from the inequality of arithmetic and geometric means:
\[\frac{1}{D}\sum_{j=1}^D2^{\beta k_j}\geq \sqrt[D]{2^{k_1\beta}2^{k_2\beta}\cdots 2^{k_D\beta}}=2
^{|\Bk|\frac{\beta}{D}}.\]

Similarly, for the bias, we have
\begin{align*}
    &\quad \sup_{f\in\CalW^\beta_2(L_\beta)}\sum_{|\bold{k}|> l}\sum_{\bold{s}\in\rho(\bold{k})} \hat{f}_{\bold{s}}\hat{p}_{\bold{s}}\\
    &\leq \sup_{f\in\CalW^\beta_2(L_\beta)}\big\{\sum_{|\bold{k}|> l} \|\sum_{\bold{s}\in\rho(\bold{k})} \hat{f}_{\bold{s}}\|_{l^2}^2\big\}^{\frac{1}{2}}\big\{\sum_{|\bold{k}|> l} \|\sum_{\bold{s}\in\rho(\bold{k})} \hat{p}_{\bold{s}}\|_{l^2}^2\big\}^{\frac{1}{2}}\\
    &\lesssim L_{\alpha}L_{\beta}\big\{\sum_{i\geq l+1}\sum_{|\bold{k}|=i} (\sum_{j=1}^D2^{\beta k_j})^{-2}\big\}^{\frac{1}{2}}\big\{\sum_{i\geq l+1}\sum_{|\bold{k}|=i} 2^{-2\alpha i}\big\}^{\frac{1}{2}}\\
    &\leq L_{\alpha}L_{\beta}\big\{\sum_{i\geq l+1}\sum_{|\bold{k}|=i} D^{-2}2^{-2\frac{\beta}{D}i}\big\}^{\frac{1}{2}}\big\{\sum_{i\geq l+1}\sum_{|\bold{k}|=i} 2^{-2\alpha i}\big\}^{\frac{1}{2}}.
\end{align*}
Notice that the variance and bias are in forms exactly the same as those in the analysis for the case $F_d=\CalH_{mix}^\beta$ except that we need to replace the $\beta$ in section \ref{sec:UpperBound_Mix} by $\beta/D$. However, the conclusion in section \ref{sec:UpperBound_Mix} holds for any $\beta>0$ and, therefore, it also holds for the case $\beta\to\beta/D>0$. We can finish the proof.

\section{Proof of Theorem \ref{thm:minmax_lower}}
Similar to section \ref{sec:proof_thm1}, we split the proof for lower bounds into two cases -- $F_d=\CalH_{mix}^\beta$ and $F_d=\CalW^\beta_\infty$. Firstly, we need to introduce two useful lemmas, which play important in this proof.

\begin{lem}[Fano's Lemma]
\label{lem:Fano}
Let $D_{KL}(P,Q)$ denotes the KL-divergence between probability measue $P$ and $Q$. Assume $H\geq 2$ and suppose the set $\Theta$ contains $\{\theta_i\}^H_{i=1}$ such that:
\begin{enumerate}
    \item There exists metric $d(\cdot,\cdot)$  on $\Theta$ with $d(\theta_i,\theta_j)\geq 2s>0$   for any $i,j \in[H]$ with $i\neq j$.
    \item Let $P_i$ be probability measures parametrized by $\theta_i$: $P_i=P(\cdot|\theta_i)$. $\frac{1}{H}\sum_{i=1}^HD_{KL}(P_i,P_0)\leq a\log H$ for some $0<a<1/8$ and some specific density $P_0$.
\end{enumerate}
Then for any estimator $\hat{\theta}$ for $\theta\in\{\theta_i\}_{i=1}^H$, we have:
$$\sup_{\theta\in\Theta}\mathbb{P}(d(\theta,\hat{\theta})\geq s)\geq \frac{\sqrt{H}}{1+\sqrt{H}}\bigg(1-2a-\sqrt{\frac{2a}{\log H}}\bigg)>0.$$
\end{lem}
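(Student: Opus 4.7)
The plan is a standard two-move argument: first \emph{reduce} the estimation problem to a multiple hypothesis testing problem using the separation condition $d(\theta_i,\theta_j)\geq 2s$, and then \emph{apply} an information-theoretic (Fano--Birg\'e) lower bound on the testing error driven by the KL budget.

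\textbf{Step 1 (Estimation $\Rightarrow$ Testing).} Given any estimator $\hat{\theta}$, I would define the minimum-distance test $\psi(\hat{\theta}) := \argmin_{i\in [H]} d(\hat{\theta},\theta_i)$, with ties broken arbitrarily. If under the true parameter $\theta_i$ we have $d(\hat{\theta},\theta_i) < s$, then for every $j \neq i$ the triangle inequality combined with $d(\theta_i,\theta_j)\geq 2s$ forces $d(\hat{\theta},\theta_j) \geq s > d(\hat{\theta},\theta_i)$, so $\psi = i$. Taking the contrapositive, under $P_i$ one has $\{\psi \neq i\}\subseteq \{d(\hat{\theta},\theta_i)\geq s\}$, and therefore
\[
\sup_{\theta\in\Theta} P_\theta\bigl(d(\theta,\hat{\theta})\geq s\bigr) \;\geq\; \max_{i\in[H]} P_i(\psi\neq i).
\]

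\textbf{Step 2 (Fano--Birg\'e bound).} The remaining task is purely information-theoretic: lower bound $\max_i P_i(\psi\neq i)$ for an arbitrary test $\psi:\mathcal{X}\to[H]$. I would invoke the Birg\'e--Tsybakov sharpening of Fano's inequality (Theorem~2.5 of Tsybakov, \emph{Introduction to Nonparametric Estimation}), which under the KL budget $\frac{1}{H}\sum_i D_{\mathrm{KL}}(P_i,P_0)\leq a\log H$ with $a\in(0,1/8)$ yields exactly
\[
\max_{i\in[H]} P_i(\psi\neq i) \;\geq\; \frac{\sqrt{H}}{1+\sqrt{H}}\left(1 - 2a - \sqrt{\frac{2a}{\log H}}\right).
\]
Combining this with Step~1 delivers the claimed bound; strict positivity for $a<1/8$ then follows from the elementary check that $1-2a>3/4$ while $\sqrt{2a/\log H}$ stays well below $3/4$ once $H\geq 2$.

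\textbf{Main obstacle.} The substantive difficulty lies entirely in Step~2. Classical Fano would supply only a prefactor $1$ with a looser additive correction $(I+\log 2)/\log H$, where $I$ is the mutual information between a uniform index $J\in[H]$ and the observation; this does not directly produce the stated form. Obtaining the sharper prefactor $\sqrt{H}/(1+\sqrt{H})$ together with the square-root correction requires Birg\'e's refinement, in which one introduces a free scaling parameter $\tau>0$ into a likelihood-ratio randomized test and optimizes $\tau$ against the KL budget via the log-sum inequality. Reproducing this optimization while keeping the constants tight enough to land on exactly $2a$ and $\sqrt{2a/\log H}$ is the delicate bookkeeping step; once it is in place, the reduction in Step~1 converts it immediately into the stated minimax lower bound.
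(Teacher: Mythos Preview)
Your proposal is correct and follows the standard route. Note, however, that the paper does not actually prove this lemma: it simply states it and cites Tsybakov (2008) and Nemirovski (2000) as the source, remarking that it is a ``commonly-seen tool.'' Your Step~1/Step~2 outline is precisely the argument behind Theorem~2.5 in Tsybakov's book, which is the reference the paper defers to, so there is nothing to compare.
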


\begin{lem}[Varshamov-Gilbert Bound]
\label{lem:VGbound}
 Among the subsets of $\{0,1\}^h$, there exists a set $\{w_i\}_{i=0}^H$ with $w_0=(0,\cdots,0)$ such that:
\begin{align*}
    &d(w_i,w_j)\geq \frac{h}{8}\quad \forall i,j\in [H],i\neq j\\
    & \log H \geq h\frac{log 2}{8}
\end{align*}
where $d(\cdot,\cdot)$ is some metric defined on $\{0,1\}^h$.
\end{lem}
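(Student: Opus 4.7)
The plan is to prove this by a standard greedy sphere-packing argument with respect to the Hamming distance $d(w,w') = \sum_{j=1}^{h} \mathbf{1}[w_j \neq w'_j]$ on $\{0,1\}^h$, which is the natural metric for which this lemma is classical. The argument has no randomness; it is a pure counting/covering argument.

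\emph{Step 1 (Greedy construction).} I would set $w_0 = (0,\ldots,0)$ and, inductively, define $w_{i+1}$ to be any element of $\{0,1\}^h$ whose Hamming distance to each of $w_0,\ldots,w_i$ is at least $h/8$. Continue until no such element exists, and let $H$ denote the resulting cardinality of $\{w_0,w_1,\ldots,w_{H-1}\}$. The pairwise separation requirement $d(w_i,w_j) \geq h/8$ holds by construction, so only the lower bound on $\log H$ remains.

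\emph{Step 2 (Termination forces a covering).} When the procedure halts, every $w \in \{0,1\}^h$ lies within Hamming distance strictly less than $h/8$ of some $w_i$; otherwise we could extend the sequence. Writing $B(w_i,r) = \{w \in \{0,1\}^h : d(w,w_i) \leq r\}$ and $V(r) = |B(0,r)| = \sum_{k=0}^{r} \binom{h}{k}$ (which is translation-invariant), this yields
\[
\{0,1\}^h \;=\; \bigcup_{i=0}^{H-1} B(w_i, \lfloor h/8 \rfloor) \quad \Longrightarrow \quad 2^h \;\leq\; H\cdot V(\lfloor h/8 \rfloor) \;\leq\; H \cdot V(h/8).
\]

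\emph{Step 3 (Volume bound via entropy).} To extract the precise constant $\log 2 / 8$, I would invoke the standard entropy bound: for $\alpha \in (0, 1/2]$,
\[
\sum_{k=0}^{\lfloor \alpha h \rfloor} \binom{h}{k} \;\leq\; 2^{h\, H_2(\alpha)}, \qquad H_2(\alpha) = -\alpha\log_2\alpha - (1-\alpha)\log_2(1-\alpha).
\]
At $\alpha = 1/8$, $H_2(1/8) = \tfrac{3}{8} + \tfrac{7}{8}\log_2(8/7) < \tfrac{7}{8}$, so $V(h/8) \leq 2^{7h/8}$.

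\emph{Step 4 (Combine).} Plugging Step 3 into Step 2 gives $H \geq 2^h / 2^{7h/8} = 2^{h/8}$, and taking the natural logarithm yields $\log H \geq (h/8)\log 2$, which is the second bound claimed. The main obstacle is checking the numerical inequality $H_2(1/8) \leq 7/8$ cleanly; if one wishes to bypass the entropy bound entirely, an equivalent route is to use the crude estimate $\binom{h}{k} \leq 2^h$ together with the probabilistic method (pairs of uniformly random codewords have expected Hamming distance $h/2$, and Hoeffding gives $\mathbb{P}[d < h/8] \leq e^{-9h/32}$, so a union bound over $\binom{H}{2}$ pairs succeeds for $H$ of the required order). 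Either route is short; the greedy argument above is the most transparent and gives the constant $\log 2/8$ directly.
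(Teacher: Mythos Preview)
Your proof is correct and follows the classical greedy sphere-packing argument for the Varshamov--Gilbert bound with respect to the Hamming metric. Note that the paper does not supply its own proof of this lemma: it is stated without proof and attributed as a standard tool to \citet{Tsybakov08} and \citet{Nemirovski00}. Your argument is essentially the one found in those references (in particular, Lemma~2.9 of Tsybakov), so there is nothing to compare---you have filled in what the paper leaves to the literature.

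One minor remark: the slack in your entropy bound ($H_2(1/8)\approx 0.544$, well below $7/8$) easily absorbs the off-by-one between your count $H$ (total number of codewords) and the paper's indexing $\{w_i\}_{i=0}^{H}$, so no adjustment is needed.
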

 These two lemmas are commonly-seen tools in proving the asymptotic error lower bounds for estimation problems \citep{Tsybakov08,Nemirovski00}.
 
Our proof is inspired by the method in \cite{Tsybakov08}. Essentially, we  first construct a subspace $\Omega_\alpha$ on $\CalH^\alpha_{mix}(1)$ and a subspace  $\Lambda_\beta$ on $F_d$ such that it is hard to distinguish the estimators for functions in $\Omega_\alpha$ conditioned on $n$ samples, and thus the loss induced by the best discriminator in $\Lambda_\beta$ gives a lower 
bound on the estimation rate.
\subsection{Mixed Smooth Sobolev Discriminator Class:  $F_d=\CalH^\beta_{mix}$}\label{sec:LowerBound_Mix}

Without loss of generality, we assume $L_\alpha=L_\beta=1$. Let
$$\Omega_\alpha=\bigg\{g_\omega: \frac{1}{1+c_\omega}\big(1+c_\alpha\sum_{|\bold{k}|\leq l}\sum_{\bold{s}\in\rho(\bold{k})}\omega_\Bs\psi_{\Bs}\big),\omega_\Bs\in\{0,1\}\bigg\}$$
where
\begin{align*}
    &c_\alpha=\bigg(\sum_{i=D}^{l}2^{(2\alpha+1) i}{i-1\choose D-1}\bigg)^{-\frac{1}{2}}\\
    &c_\omega=\int_{[0,1]^D}c_\alpha\sum_{|\bold{k}|\leq l}\sum_{\bold{s}\in\rho(\bold{k})}\omega_\Bs\psi_{\Bs}(\Bx)d\Bx\leq c_\alpha \sum_{|\Bk|\leq l}1\ll 1
\end{align*}
so that for any $g_\omega\in\Omega_\alpha$, we can have the upper bound:
\begin{align*}
    \|g_\omega\|^2_{\CalH^\alpha_{mix}}&=c_\alpha^2\sum_{|\Bk|\leq l}2^{2\alpha|\Bk|}\|\sum_{\bold{s}\in\rho(\bold{k})}\omega_\Bs\psi_{\Bs}\|_2^2\\
    &\lesssim c_\alpha^2\sum_{|\Bk|\leq l}2^{(2\alpha+1)|\Bk|}\\
    &=c_\alpha^2 \sum_{i=D}^{l}2^{(2\alpha+1) i}{i-1\choose D-1}\leq 1.
\end{align*}
As a result, $\Omega_\alpha\subset\CalH^\alpha_{mix}(1)$. According to Lemma \ref{lem:sparsegrid_sum}, we let $x=2^{2\alpha+1}$ and get
\begin{align*}
    c_\alpha^{-2}&= \sum_{i=D}^{l}2^{(2\alpha+1) i}{i-1\choose D-1}\\
    &=\sum_{i=0}^{l-D+1-1}2^{(2\alpha+1) (i+D)}{i+D-1\choose D-1}\\
    &=2^{(2\alpha+1)D}\bigg\{\sum_{i=0}^{D-1}{D-1\choose i}(\frac{x}{1-x})^{D-1-i}\frac{1}{1-x} -\frac{x^{l-D+1}}{1-x}\sum_{j=0}^{D-1}{l \choose j}(\frac{x}{1-x})^{D-1-j}\bigg\}\\
    &\asymp 2^{(2\alpha+1)l}l^{D-1}
\end{align*}
where the last inequality is from calculations similar to those for  estimating the variance in section \ref{sec:UpperBound_Mix}. 
and 

In a similar way, we can construct a subset $\Lambda_\beta\subset \CalH^\beta_{mix}(1)$:
$$\Lambda_\beta=\bigg\{f_\nu(\cdot): c_\beta\sum_{|\bold{k}|\leq l}\sum_{\bold{s}\in\rho(\bold{k})}\nu_\Bs\psi_{\Bs}(\cdot),\nu_\Bs\in\{-1,1\}\bigg\}$$
where
$$c_\beta=\frac{1}{2^{(2\beta+1)l/2}l^{(D-1)/2}}.$$

Then for any pair of functions $g_\omega,g_{\omega'}\in\Omega_\alpha$, their mixed smooth Sobolev IPM distance is bounded below as follows:
\begin{align*}
    d_{F_d}(g_\omega,g_{\omega'})&=\sup_{f\in\CalH^\beta_{mix}(1)}\bigg|\langle f,g_\omega-g_{\omega'}\rangle_{2}\bigg|\\
    &\gtrsim \sup_{f\in\Lambda_\beta}\bigg|\langle f,g_\omega-g_{\omega'}\rangle_{2}\bigg|\\
    &=\sup_{\nu}c_\alpha c_\beta \sum_{|\bold{k}|\leq l}\sum_{\bold{s}\in\rho(\bold{k})}\nu_\Bs (\omega_\Bs-\omega'_\Bs)\\
    &=c_\alpha c_\beta \sum_{|\bold{k}|\leq l}\sum_{\bold{s}\in\rho(\bold{k})}\mathbbm{1}_{\{\omega_\Bs\neq \omega'_\Bs\}}\coloneqq c_\alpha c_\beta  d(\omega,\omega')
\end{align*}
 where $d(\omega,\omega')$ denotes the $l^1$ distance between $\omega$ and $\omega'$. If we only consider $h=\sum_{|\bold{k}|\leq l}\sum_{\bold{s}\in\rho(\bold{k})} 1\asymp 2^{l}l^{D-1}$ elements in $\Omega_\alpha$, according to Lemma \ref{lem:VGbound}, there exist a subset $\{\omega_i\}_{i=0}^H$ such that
\begin{align*}
    &d(w_i,w_j)\gtrsim 2^{l}l^{D-1}\quad \forall i,j\in [H],i\neq j\\
    & \log H \gtrsim 2^{l}l^{D-1}.
\end{align*}


Given any estimators $\hat{g}_{\omega_j}$   for any $g_{\omega_j}\in\Omega_\alpha$ conditioned on $n$ i.i.d. samples, we have
\begin{align*}
    D_{KL}(\hat{g}_{\omega_j},\hat{g}_{\omega_0})&=nD_{KL}({g}_{\omega_j},{g}_{\omega_0})\\
    &=n\int_{\Omega}-\log\bigg(1+\frac{g_{\omega_0}-g_{\omega_j}}{g_{\omega_j}}\bigg)g_{\omega_j}d\Bx\\
    &\leq n \int_\Omega\frac{[g_{\omega_0}-g_{\omega_j}]^2}{g_{\omega_j}}d\Bx\\
    &\lesssim n c_\alpha^2\|\omega_\Bs- \omega_0\|_{2}^2\\
    &= n c_\alpha^2\sum_{\bold{s}\in\rho(\bold{k})}\mathbbm{1}_{\{\omega_\Bs=1\}}\\
    &\lesssim nc_\alpha^2 h=n2^{-2\alpha l}
\end{align*}
where the third line is from the identity $\log (1+x)\geq x-x^2$ for any $x>-1/2$ and the the following estimation
\begin{align*}
    \frac{g_{\omega_0}-g_{\omega_j}}{g_{\omega_j}}&=\frac{1}{g_{\omega_j}}-1\\
    &=\frac{1+c_{\omega_j}}{1+c_\alpha \sum_{|\Bk|\leq l}\sum_{\Bs\in\rho(\Bk)}\omega_{j,\Bs}\psi_\Bs}-1\\
    &> \bigg({1+c_\alpha\sup_{\Bx\in\Omega,\omega\in\{0,1\}^h}\sum_{|\Bk|\leq l}\sum_{\Bs\in\rho(\Bk)}\omega_{\Bs}\psi_\Bs}\bigg)^{-1}-1\\
    &\geq \bigg(1+2^{-(\alpha +\frac{1}{2})l}\|\sum_{|\Bk|\leq l}\sum_{\Bs\in\rho(\Bk)}\psi_\Bs\|_\infty\bigg)^{-1}-1\\
    &\geq \big(1+2^{-\alpha l+\frac{l}{2}}\big)^{-1}-1\\
    &= -\frac{1}{2}
\end{align*}
where the fifth line is from our assumption $\alpha>1/2$ and Lemma 2.4 in \cite{Dung16}, which states that $\|\sum_{|\Bk|\leq l}\sum_{\Bs\in\rho(\Bk)}\psi_\Bs\|_\infty\asymp 2^{l}$.

In order to use lemma \ref{lem:Fano},  we also need:
\begin{align*}
    \frac{1}{H}\sum_{j=1}^HD_{KL}(\hat{g}_{\omega_j},\hat{g}_{\omega_0})\lesssim h \lesssim \log H.
\end{align*}
The $l$ that satisfies $2^l\asymp n^{\frac{1}{(2\alpha+1)}}[\log n]^{\frac{1-D}{(2\alpha+1)}}$ does the job. Moreover, according to lemma \ref{lem:VGbound}, there exist a subset of $\Omega_H\subset\Omega_\alpha$ such that, for any pair $g_\omega$ and $g_{\omega'}\in \Omega_H$:
$$d_{F_d}(g_\omega,g_{\omega'})\geq c_\alpha c_\beta d(\omega,\omega')\gtrsim c_\alpha c_\beta h\asymp n^{-\frac{\alpha+\beta}{2\alpha+1}}[\log n]^{(D-1)\frac{\alpha+\beta}{2\alpha+1}}.$$
As we have checked all the required conditions in lemma \ref{lem:Fano}, we can use it  to have the final result:
\begin{align*}
    &\quad \inf_{p_n}\sup_{p\in\CalH^\alpha_{mix}(1)}d_{F_d}(p_n,p)\\
    &\geq \inf_{\hat{g}}\sup_{g\in\CalH^\alpha_{mix}(1)}\E\sup_{f\in\CalH^\beta_{mix}(1)}\langle f,\hat{g}-g\rangle_2\\
    &\gtrsim \inf_{\hat{\omega}}\sup_{\omega\in\{\omega_j\}_{j=0}^H}\E d(g_{\hat{\omega}},g_\omega)\\
    &\gtrsim n^{-\frac{\alpha+\beta}{2\alpha+1}}[\log n]^{(D-1)\frac{\alpha+\beta}{2\alpha+1}}\inf_{\hat{g}}\sup_{g\in\CalH^\alpha_{mix}}\mathbb{P}\bigg(d(g_{\hat{\omega}},g_\omega)\geq n^{-\frac{\alpha+\beta+1}{2(\alpha+1)}}[\log n]^{(D-1)\frac{\alpha+\beta+1}{2(\alpha+1)}}\bigg)\\
    &\asymp n^{-\frac{\alpha+\beta}{2\alpha+1}}[\log n]^{(D-1)\frac{\alpha+\beta}{2\alpha+1}}+n^{-\frac{1}{2}}.
\end{align*}
\subsection{H\"older Discriminator Class: $F_d=\CalW^\beta_\infty$}\label{sec:LowerBound_Holder.}
Our proof still relies on lemma \ref{lem:VGbound} and \ref{lem:Fano}. However, in this can, we need to first introduce an expansion for functions in $\CalW^\beta_\infty$ by the  cardinal \textit{B-spline} basis, which is defined as follows:
\begin{defn}[B-Spline]\label{dfn:B_spline}
Let $N(x)=\mathbbm{1}_{\{x\in [0,1]\}}$. Then
the cardinal B-spline of order $\alpha$ is defined by taking $(\alpha+1)$-times convolution of $N$:
\[N_\alpha=\underbrace{N*N*\cdots*N}_{\alpha+1\ \  \text{time}}.\]
For any $\Bk,\Bs\in\NatInt^D$, define
\[M^{D,\alpha}_{\Bk,\Bs}(\Bx)=\prod_{j=1}^DN_\alpha(2^{k_j}x_j-s_j).\]
For any $k\in\NatInt$ and $\Bs\in \NatInt^D$, define
\[M^{D,\alpha}_{k,\Bs}(\Bx)=\prod_{j=1}^DN_\alpha(2^{k}x_j-s_j).\]
\end{defn}
The following two theorems provide equivalent definitions of H\"older spaces and mixed smooth Sobolev spaces, respectively.  
\begin{theorem}[\cite{Ronald88}]
\label{thm:B_spline_Holder}
Let $S_\alpha(k)=\{-\alpha,-\alpha+1,\cdots,2^k\}^D$. Then  any function $f\in\CalW^\alpha_\infty$ can be decomposed as
\[f(\Bx)=\sum_{k\in\NatInt}\sum_{\Bs\in S_\alpha(k)}f_{k,\Bs} M^{D,\alpha}_{k,\Bs}(\Bx)\]
with the following norm equivalence relation:
\[\|f\|_{\CalW^\alpha_\infty}\asymp \sup_{k\in\NatInt}2^{\alpha k}\sup_{\Bs\in S_\alpha(k)}|f_{k,\Bs}|.\]
\end{theorem}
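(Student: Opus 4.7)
The plan is to establish the B-spline decomposition together with the norm equivalence by passing through a multiresolution quasi-interpolation and then proving matching Jackson (approximation) and Bernstein (inverse) estimates. To begin, I would build a tensor-product quasi-interpolation operator $Q_k$ whose output lies in $\mathrm{span}\{M^{D,\alpha}_{k,\Bs}: \Bs\in S_\alpha(k)\}$. The natural choice is a tensor product of a one-dimensional de Boor–Fix/Chui-style quasi-interpolant for $N_\alpha$ that (i) reproduces all polynomials of degree $\leq \alpha$, (ii) is locally defined (the coefficient of $M^{D,\alpha}_{k,\Bs}$ depends only on the values of $f$ in a dyadic neighborhood of the support of $M^{D,\alpha}_{k,\Bs}$), and (iii) is bounded in $L^\infty$ uniformly in $k$. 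Because the $\{M^{D,\alpha}_{k,\Bs}\}_\Bs$ are nested across scales (standard refinability of cardinal B-splines), each difference $Q_k f - Q_{k-1} f$ also lies in the span at level $k$.

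Next, I would prove a Jackson-type estimate $\|f - Q_k f\|_\infty \lesssim 2^{-\alpha k}\|f\|_{\CalW^\alpha_\infty}$ by localizing on a dyadic cell of side $2^{-k}$, using the polynomial reproduction of $Q_k$ to subtract a local Taylor polynomial of degree $\lfloor \alpha\rfloor$, and then bounding the remainder by the H\"older seminorm $\cdot 2^{-\alpha k}$. This yields telescoping convergence $f = Q_0 f + \sum_{k\geq 1}(Q_k f - Q_{k-1} f)$ in $L^\infty$, and hence a decomposition of the claimed form with coefficients $f_{k,\Bs}$ defined by the contribution of $Q_k f - Q_{k-1} f$ to each basis element. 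The locality of the quasi-interpolant and the stability $\|\sum_\Bs c_\Bs M^{D,\alpha}_{k,\Bs}\|_\infty \asymp \sup_\Bs|c_\Bs|$ (a standard $L^\infty$ Riesz-basis property of cardinal B-splines, which follows from the partition-of-unity property together with finite overlap of supports) then give the upper direction $\sup_k 2^{\alpha k}\sup_\Bs|f_{k,\Bs}|\lesssim \|f\|_{\CalW^\alpha_\infty}$.

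For the reverse direction, which I expect to be the main obstacle, I would prove a Bernstein-type inequality: if $g_k := \sum_{\Bs\in S_\alpha(k)} f_{k,\Bs}\,M^{D,\alpha}_{k,\Bs}$ with $\sup_\Bs |f_{k,\Bs}|\leq A\cdot 2^{-\alpha k}$, then $g_k$ contributes at most $A$ to a suitable $\alpha$-modulus of smoothness. Concretely, I would control the $r$-th order difference $\Delta_h^r g_k$ for $r = \lfloor\alpha\rfloor + 1$ by using $\|\Delta_h^r M^{D,\alpha}_{k,\Bs}\|_\infty \lesssim \min\{1,(2^k|h|)^r\}$, coming from the $C^{\alpha-1}$ smoothness of $N_\alpha$ and its compact support. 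Summing over $k$ and splitting at the scale where $2^k|h|\sim 1$ yields $\|\Delta_h^r f\|_\infty \lesssim A\,|h|^\alpha$, which is the H\"older–Zygmund characterization equivalent to $\|f\|_{\CalW^\alpha_\infty}\lesssim A$ (note: for non-integer $\alpha$ this is exactly the definition \eqref{eq:Holder_norm}; for integer $\alpha$ one needs the Zygmund-class equivalence, which is standard). Combining this with the upper bound completes the proof.

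The hard part is precisely the second step of the Bernstein direction: one must be careful because the naive bound $\|\Delta_h^r M^{D,\alpha}_{k,\Bs}\|_\infty\lesssim (2^k|h|)^\alpha$ would only be available if $N_\alpha\in C^\alpha$ with matching H\"older norm, so the split between "low frequencies" ($2^k|h|\leq 1$, use the $r$-th difference bound) and "high frequencies" ($2^k|h|>1$, use $\|\Delta_h^r g_k\|_\infty\leq 2^r\|g_k\|_\infty \lesssim A\cdot 2^{-\alpha k}$) has to be handled with a geometric-series summation that exactly produces the $|h|^\alpha$ rate — this is where non-integer $\alpha$ and the boundary $\alpha\in\NatInt$ cases diverge and require separate bookkeeping.
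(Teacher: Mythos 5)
The paper does not prove this statement: Theorem~\ref{thm:B_spline_Holder} is imported wholesale from \cite{Ronald88} and used as a black box in the lower-bound constructions, so there is no in-paper proof to compare your argument against. Your Jackson half (tensor quasi-interpolation with local polynomial reproduction, telescoping $f=Q_0f+\sum_k(Q_kf-Q_{k-1}f)$, and the $L^\infty$ Riesz-basis stability of the $\{M^{D,\alpha}_{k,\Bs}\}$) is standard and would carry through; it gives $\sup_k 2^{\alpha k}\sup_{\Bs}|f_{k,\Bs}|\lesssim\|f\|_{\CalW^\alpha_\infty}$.

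The Bernstein half as you wrote it has a real gap, located exactly where you flagged the difficulty but not where you thought it was. You claim $\|\Delta_h^r M^{D,\alpha}_{k,\Bs}\|_\infty\lesssim\min\{1,(2^k|h|)^r\}$ with $r=\lfloor\alpha\rfloor+1$; that bound would require $N_\alpha$ to have $r$ bounded derivatives. But $N_\alpha$ (degree $\alpha$, order $\alpha+1$; note $\alpha$ must be an integer here, so your "non-integer $\alpha$" branch does not arise) is only $C^{\alpha-1}$ with a Lipschitz $(\alpha-1)$-th derivative. The correct scale-invariant difference bound is therefore $\|\Delta_h^r N_\alpha(2^k\cdot-\Bs)\|_\infty\lesssim\min\{1,(2^k|h|)^{\alpha}\}$, not $(2^k|h|)^{\alpha+1}$ --- check $\alpha=1$: $\Delta_h^2$ of the hat function is of size $O(|h|)$ near each breakpoint, not $O(|h|^2)$. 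With the corrected exponent, your low-frequency sum becomes $\sum_{2^k|h|\le1}(2^k|h|)^{\alpha}A\,2^{-\alpha k}\asymp A\,|h|^{\alpha}\log(1/|h|)$, i.e.\ the terms are all comparable and you get a logarithmic divergence rather than the convergent geometric series you assert. This is the usual $L^\infty$ endpoint obstruction: B-splines of degree exactly $\alpha$ sit on the boundary of the smoothness range where the coefficient characterization holds automatically. One standard fix is to use B-splines of degree strictly greater than $\alpha$; another is to exploit that the coefficients $f_{k,\Bs}$ are not arbitrary but are produced by successive near-best approximations, which improves the Bernstein step. Your sketch does neither, so as written it does not establish $\|f\|_{\CalW^\alpha_\infty}\lesssim\sup_k 2^{\alpha k}\sup_{\Bs}|f_{k,\Bs}|$.
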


\begin{theorem}[\cite{Dung16}]
\label{thm:B_spline_mixSobolev}
Let $S_\alpha(\Bk)=\times_{j=1}^D\{-\alpha,-\alpha+1,\cdots,2^{k_j}\}$. Then  any function $f\in\CalH^\alpha_{mix}$ can be decomposed as
\[f(\Bx)=\sum_{\Bk\in\NatInt^D}\sum_{\Bs\in S_\alpha(\Bk)}f_{\Bk,\Bs} M^{D,\alpha}_{\Bk,\Bs}(\Bx)\]
with the following norm equivalence relation:
\[\|f\|_{\CalH^\alpha_{mix}}\asymp \bigg(\sum_{\Bk\in\NatInt^D}\sum_{\Bs\in S_\alpha(\Bk)}2^{(2\alpha-1)|\Bk|}f_{\Bk,\Bs}^2\bigg)^{\frac{1}{2}}.\]
\end{theorem}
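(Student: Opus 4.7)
The plan is to derive the B-spline characterization by first treating the univariate case and then exploiting the tensor-product structure of $\CalH^\alpha_{mix}$ built into Definition \ref{dfn:mixSobolevNorm}. Since the statement is attributed to \cite{Dung16}, the original argument is in that reference; the sketch below reconstructs the standard strategy one would follow.

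First I would establish the $D=1$ analogue. The dilated/translated cardinal B-splines $\{N_\alpha(2^k \cdot - s)\}_{s}$ span the approximation spaces $V_k$ of a multiresolution analysis of $L^2$ and, after the normalization $2^{k/2}N_\alpha(2^k\cdot-s)$, form a Riesz basis of $V_k$ with constants independent of $k$. Using a quasi-interpolant projector $Q_k : L^2 \to V_k$ (de Boor--Fix type), one proves Jackson inequalities $\|f - Q_k f\|_2 \lesssim 2^{-\alpha k}\|f\|_{\CalH^\alpha}$ valid for $\alpha$ strictly below the smoothness order of $N_\alpha$, together with Bernstein inequalities $\|g\|_{\CalH^\alpha} \lesssim 2^{\alpha k}\|g\|_2$ for $g \in V_k$. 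Combined with the telescoping $f = \sum_k (Q_k - Q_{k-1})f$, these yield the Littlewood--Paley-type equivalence $\|f\|_{\CalH^\alpha}^2 \asymp \sum_k 2^{2\alpha k}\|(Q_k - Q_{k-1})f\|_2^2$. Expanding the detail $(Q_k - Q_{k-1})f$ in the B-spline basis and invoking Riesz stability together with $\|N_\alpha(2^k \cdot - s)\|_2 \asymp 2^{-k/2}$ converts the $L^2$-norm of the detail into a weighted $\ell^2$-sum of B-spline coefficients, producing the factor $2^{(2\alpha-1)k}$.

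Next I would lift to $D \geq 2$ using the tensor product. Definition \ref{dfn:mixSobolevNorm} makes $\CalH^\alpha_{mix}$ equivalent (up to constants) to the Hilbert tensor product $\bigotimes_{j=1}^D \CalH^\alpha$, and $M^{D,\alpha}_{\Bk,\Bs}$ is by construction the tensor product of univariate B-splines. Applying the one-dimensional Jackson/Bernstein estimates coordinatewise and combining the univariate detail projectors $\Delta_{k_j}$ gives
\[
\|f\|_{\CalH^\alpha_{mix}}^2 \asymp \sum_{\Bk \in \NatInt^D} 2^{2\alpha|\Bk|} \bigl\|(\Delta_{k_1}\otimes\cdots\otimes\Delta_{k_D})f\bigr\|_2^2.
\]
Expanding each mixed detail in the tensor B-spline basis, invoking the tensor Riesz-basis property, and using $\|M^{D,\alpha}_{\Bk,\Bs}\|_2 \asymp 2^{-|\Bk|/2}$ yields the claimed norm equivalence with weight $2^{(2\alpha-1)|\Bk|}$.

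The hardest part is twofold. First, the univariate Jackson and Bernstein estimates are sharp only when the spline order exceeds $\alpha$, so one must take $N_\alpha$ of sufficiently high order and handle fractional $\alpha$ via the full Besov-space machinery. Second, care is required for the domain $[0,1]^D$ rather than $\Real^D$: the index set $S_\alpha(\Bk)$ is extended to negative indices precisely to accommodate B-splines whose support crosses the boundary, and one must verify that a boundary-adapted modification of the quasi-interpolant preserves both Jackson/Bernstein bounds and the Riesz-basis constants uniformly in $\Bk$. Once this uniform control is secured, the remaining steps are essentially bookkeeping.
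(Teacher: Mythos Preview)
The paper does not supply a proof of this theorem: it is quoted from \cite{Dung16} and used as a black box in the lower-bound argument of Section~\ref{sec:LowerBound_Holder.}, so there is no in-paper proof to compare against. Your sketch---univariate Jackson/Bernstein inequalities for the B-spline MRA, Littlewood--Paley decomposition via detail projectors, then tensorization---is the standard route and matches what one finds in \cite{Dung16} (see also the related discussion in Section~\ref{sec:wavelet}); your identification of the boundary-adapted quasi-interpolant and the spline-order restriction as the delicate points is accurate.
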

Now we can continue our proof. Similar to section \ref{sec:LowerBound_Mix}, we need to build $\Omega_\alpha$ and $\Lambda_\beta$ such that, conditioned on $n$ samples, it is hard to distinguish the estimators for functions in $\Omega_\alpha$ under the loss induced by the discriminator $\Lambda_\beta$. Unlike the proof in \ref{sec:LowerBound_Mix}, in this subsection, both $\Omega_\alpha$ and $\Lambda_\beta$ consist of B-splines.

We notice that the support of $M^{D,\alpha}_{k,\Bs}$ is $\times_{j=1}^D[2^{-k}s_j,2^{-k}s_j+2^{-k}(\alpha+1)]$. Therefore, given any $k\in\NatInt$, there must be at lest $\lfloor2 ^{kD}/(\alpha+1)^D\rfloor$ functions $\{M^{D,\alpha}_{k,\Bs}\}$ such that the supports of this functions are mutually disjoint and form a partition of $[0,1]^D$. So we define the following subset of $S_\alpha(k)$:
\begin{equation}\label{eq:B_spline_partition}
    \tilde{S}_\alpha(k)=\{\forall \Bs,\Bs'\in \tilde{S}_\alpha(k), {\rm supt}[M^{D,\alpha}_{k,\Bs}]\bigcap {\rm supt}[M^{D,\alpha}_{k,\Bs'}]=\emptyset, \bigcup_{\Bs\in\tilde{S}_\alpha(k)} {\rm supt}[M^{D,\alpha}_{k,\Bs}]\supseteq [0,1]^D  \}
\end{equation}
and  $|\tilde{S}_\alpha(k)|=\lfloor 2^{kD}/(\alpha+1)^D\rfloor$. Then the required $\Omega_\alpha$ and $\Lambda_\beta$ are constructed as follows:

\begin{align*}
    &\Omega_\alpha=\bigg\{g_\omega: \frac{1}{1+c_\omega}\big(1+2^{-\alpha kD}\sum_{\Bs\in \tilde{S}_\alpha(k)}\omega_{\Bs}M^{d,\alpha}_{k,\Bs}\big), \ \omega_\Bs\in\{0,1\}\bigg\}\\
    &\Lambda_\beta=\bigg\{f_\nu: 2^{-k\beta}\sum_{\Bs\in\tilde{S}_{\alpha}(k)}\nu_\Bs M^{d,\alpha}_{k,\Bs},\ \nu_\Bs\in\{0,1\}\bigg\}
\end{align*}
where 
\begin{align*}
    c_\omega&=\int_{[0,1]^D}2^{-\alpha kD}\sum_{\Bs\in \tilde{S}_\alpha(k)}\omega_{\Bs}M^{d,\alpha}_{k,\Bs}(\Bx) d\Bx\\
    &=(\alpha+1)^D2^{-(\alpha+1) kD}\sum_{\Bs\in \tilde{S}_\alpha(k)}\omega_{\Bs}\\
    &\leq (\alpha+1)^D2^{-(\alpha+1) kD}|\tilde{S}_\alpha(k)|\lesssim 2^{-\alpha kD}.
\end{align*}
Obviously, $\Lambda_\beta$ is a subset of $\CalW^\beta_\infty$ according to theorem \ref{thm:B_spline_Holder}. For any function $g_\omega\in\Omega_\alpha$, we have
\begin{align*}
    \|g\|_{\CalH^\alpha_{mix}}&=\frac{1}{1+c_\omega}\bigg(1+2^{-2\alpha kD}\sum_{\Bs\in\tilde{S}_\alpha(k)}2^{(2\alpha-1) kD}\omega_\Bs^2\bigg)^{\frac{1}{2}}\\
    &\leq \frac{1}{1+c_\omega}\bigg(1+2^{-kD}\sum_{\Bs\in\tilde{S}_\alpha(k)}1\bigg)^{\frac{1}{2}}\leq 1.
\end{align*}
Therefore, $\Omega_\alpha$ is a subset of $\CalH^\alpha_{mix}.$

Then for any pair of functions $g_\omega,g_{\omega'}\in\Omega_\alpha$, their H\"older IPM distance is bounded below as follows:
\begin{align*}
    d_{F_d}(g_\omega,g_{\omega'})&=\sup_{f\in\CalW^\beta_\infty}(1)\bigg|\langle f,g_\omega-g_{\omega'}\rangle_{2}\bigg|\\
    &\geq \sup_{f\in\Lambda_\beta}\bigg|\langle f,g_\omega-g_{\omega'}\rangle_{2}\bigg|\\
    &\asymp \sup_{\nu}2^{-k\beta-\alpha kD} \sum_{\Bs\in\tilde{S}_\alpha(k)}\nu_\Bs (\omega_\Bs-\omega'_\Bs)\min_{\Bs\in\tilde{S}_\alpha(k)}\int\big|M^{D,\alpha}_{k,\Bs}(\Bx)\big|^2d\Bx\\
    &=2^{-(\beta+\alpha D+D)k} \sum_{\Bs\in\tilde{S}_\alpha(k)}\mathbbm{1}_{\{\omega_\Bs\neq \omega'_\Bs\}}\coloneqq 2^{-(\beta+\alpha D+D)k} d(\omega,\omega')
\end{align*}
 If we consider $h=|\tilde{S}_\alpha(k)|=\lfloor 2^{kD}/(\alpha+1)^D\rfloor$ elements in $\Omega_\alpha$, according to Lemma \ref{lem:VGbound}, there exist a subset $\{\omega_i\}_{i=0}^H$ such that
\begin{align*}
    &d(w_i,w_j)\geq \lfloor 2^{kD}/(\alpha+1)^D\rfloor/8\quad \forall i,j\in [H],i\neq j\\
    & \log H \geq \lfloor 2^{kD}/(\alpha+1)^D\rfloor\frac{\log 2}{8}.
\end{align*}
Given any estimators $\hat{g}_{\omega_j}$   for any $g_{\omega_j}\in\Omega_\alpha$ conditioned on $n$ i.i.d. samples, we have
\begin{align*}
    D_{KL}(\hat{g}_{\omega_j},\hat{g}_{\omega_0})&=nD_{KL}({g}_{\omega_j},{g}_{\omega_0})\\
    &=n\int_{\Omega}-\log\bigg(1+\frac{g_{\omega_0}-g_{\omega_j}}{g_{\omega_j}}\bigg)g_{\omega_j}d\Bx\\
    &\leq n \int_\Omega\frac{[g_{\omega_0}-g_{\omega_j}]^2}{g_{\omega_j}}d\Bx\\
    &\leq n 2^{-2\alpha kD}\|\omega_\Bs- \omega_0\|_{2}^2\max_{\Bs\in\tilde{S}_\alpha(k)}\int\big|M^{D,\alpha}_{k,\Bs}(\Bx)\big|^2d\Bx\\
    &= n  2^{-2\alpha kD-kD}\sum_{\bold{s}\in \tilde{S}_\alpha(k)}\mathbbm{1}_{\{\omega_\Bs=1\}}\\
    &\leq n2^{-2\alpha kD-kD} h\leq n2^{-2\alpha kD}/(\alpha+1)^D
\end{align*}
where the third line is from the identity $\log (1+x)\geq x-x^2$ for any $x>-1/2$ and the fact that
\[\frac{g_{\omega_0}-g_{\omega_j}}{g_{\omega_j}}=\frac{1}{g_{\omega_j}}-1\geq \frac{1+c_{\omega_j}}{1+2^{-\alpha kD}\max_{\Bs\in\tilde{S}_\alpha(k)}\|M^{d,\alpha}_{k,\Bs}\|_\infty}-1>-\frac{1}{2}.\]
In order to use lemma \ref{lem:Fano},  we also need:
\begin{align*}
    \frac{1}{H}\sum_{j=1}^HD_{KL}(\hat{g}_{\omega_j},\hat{g}_{\omega_0})\lesssim h \lesssim \log H.
\end{align*}
We can choose $2^k=n^{\frac{1}{2\alpha D+D}}$ to acchieve this requirement.  Moreover, according to lemma \ref{lem:VGbound}, there exist a subset of $\Omega_H\subset\Omega_\alpha$ such that, for any pair $g_\omega$ and $g_{\omega'}\in \Omega_H$:
$$d_{F_d}(g_\omega,g_{\omega'})\geq 2^{-(\beta+\alpha D+D)k} d(\omega,\omega')\geq  2^{-(\beta+\alpha D+D)k} h\geq 2^{-(\beta+\alpha D)k}.$$
As we have checked all the required conditions in lemma \ref{lem:Fano}, we can use it  to have the final result:
\begin{align*}
    &\quad \inf_{p_n}\sup_{p\in\CalH^\alpha_{mix}(1)}d_{F_d}(p_n,p)\\
    &\geq \inf_{\hat{g}}\sup_{g\in\CalH^\alpha_{mix}(1)}\E\sup_{f\in\CalW^\beta_\infty(1)}\langle f,\hat{g}-g\rangle_2\\
    &\gtrsim \inf_{\hat{\omega}}\sup_{\omega\in\{\omega_j\}_{j=0}^H}\E d(g_{\hat{\omega}},g_\omega)\\
    &\gtrsim n^{-\frac{\alpha D+\beta}{2\alpha D+D}}+n^{-\frac{1}{2}}= n^{-\frac{\alpha +\beta/D}{2\alpha +1}}+n^{-\frac{1}{2}}.
\end{align*}
\section{Proof of Theorem \ref{thm:empEst}}
Main idea of the proof is to select a true density $p^*$ from mixed smooth Space $\CalH^\alpha_{mix}$ and  a function $f^*$ from the discriminator class $F_d$ such that $\E_{X\sim p^*}[f^*(X)]-\E_{Y\sim \hat{p}}[f^*(Y)]$ is bounded below by the target rate.

For $p^*$, we choose the density of uniform distribution $p^*(\Bx)=1$ for any $\Bx\in\Omega$. Obviously, $p^*\in\CalH^\alpha_{mix}$ for any $\alpha>0$ because it is a constant. For candidate function $f^*\in F_d$, we need to split our proof into two cases, namely $F_d=\CalW^\beta_\infty$ and $F_d=\Phi(H,W,S,B)$.
\subsection{H\"older Discriminator Class: $F_d=\CalW^\beta_\infty$}
\label{sec:empirical_lowBound_Holder}
For any $n$, we select an integer  $k^*\in\NatInt$ such that $2^{(k^*-1)D}\leq 2n<2^{k^*D}$. Let $\{M^{D,\beta}_{k,\Bs}\}$ be the B-spline functions defined in \ref{dfn:B_spline} and let $\tilde{S}_{\beta}(k^*)$ be the index set defined in \eqref{eq:B_spline_partition} such that any $\Bs,\Bs'\in\tilde{S}_\beta(k^*)$, the supports of $M^{D,\beta}_{k^*,\Bs}$ and $M^{D,\beta}_{k^*,\Bs'}$ are mutually disjoint and
\[\bigcup_{\Bs\in\tilde{S}_{\beta}(k^*)} {\rm supt}[M^{D,\beta}_{k^*,\Bs}]=[0,1]^D.\]

Because the set $\mathcal{M}=\{M^{D,\beta}_{k^*,\Bs}:\Bs\in\tilde{S}_{\beta}(k^*) \}$ has at least $2n$ functions and all the functions in $\mathcal{M}$ are mutually disjoint, there must be at half of the functions in $\mathcal{M}$ that satisfy:
\[M^{D,\beta}_{k^*,\Bs}(X_i)=0,\quad i=1,\cdots,n\]
for any samples $X_i\in\BX$. Let $\tilde{S}_{\beta,\BX}(k^*)$ denote the set of indices of all these functions:
\begin{equation}
    \label{eq:empirical_0Spline}
    \tilde{S}_{\beta,\BX}(k^*)=\{\Bs\in\tilde{S}_\beta(k^*):M^{D,\beta}_{k^*,\Bs}(\BX)=0 \}
\end{equation}
so $|\tilde{S}_{\beta,\BX}(k^*)|\geq n$ and let
\begin{equation}
    \label{eq:empirical_star_f}
    f^*(\Bx)=2^{-\beta k^*}\sum_{\Bs\in\tilde{S}_{\beta,\BX}(k^*)}M^{D,\beta}_{k^*,\Bs}(\Bx).
\end{equation}
According to theorem \ref{thm:B_spline_Holder}, $f^*\in\CalW^\beta_\infty$ because
\[\|f^*\|_{\CalW^\beta_\infty}\asymp2^{\beta k^*}\sup_{\Bs\in\tilde{S}_{\beta,\BX}(k^*)}2^{-\beta k^*}=1.\]

Direct calculations then show
\begin{align*}
    \E_{X\sim p^*}[f^*(X)]-\E_{Y\sim \hat{p}}[f^*(Y)]&=\int_\Omega f^*(\Bx)d\Bx-\frac{1}{n}\sum_{i=1}^nf^*(X_i)\\
    &=2^{-\beta k^*}\sum_{\Bs\in\tilde{S}_{\beta,\BX}(k^*)}\int M^{D,\beta}_{k^*,\Bs}(\Bx)d\Bx\\
    &\gtrsim 2^{-\beta k^*} \asymp n^{-\frac{\beta}{D}}.
\end{align*}

\subsection{ReLU Net Discriminator Class: $F_d=\Phi(H,W,S,B)$}
\label{sec:empirical_lowBound_ReLU}
The idea is that we first construct the target function
\begin{equation}
    f^*(\Bx)=2^{- k^*}\sum_{\Bs\in\tilde{S}_{1,\BX}(k^*)}M^{D,1}_{k^*,\Bs}(\Bx),
\end{equation}
which is a special case of \eqref{eq:empirical_star_f} with $\beta=1$ and explore the structure requirement of ReLU net $\Phi(H,W,S,B)$ for approximating $f^*$. The following lemma, which is a special case of  Lemma 1 in \cite{suzuki2018adaptivity}, can help 
\begin{lem}[\cite{suzuki2018adaptivity}]\label{lem:suzuki_Bspline}
There exists a constant $c$  depending only on dimension $D$ such that, for all $\varepsilon>0$, there exists a ReLU net $\hat{M}^{D,1}_{0,0}\in\Phi(H,W,S,B)$ with
    \[H=3+2\lceil log_2\big(\frac{3^D}{\varepsilon c}\big)+5\rceil\lceil\log_2 D\rceil,\quad W=20D,\quad S=HW^2,\quad B=4\]
that satisfies 
\[\|\hat{M}^{D,1}_{0,0}-{M}^{D,1}_{0,0}\|_\infty\leq \varepsilon\]
and $\hat{M}^{D,1}_{0,0}(\Bx)=0$ for all $\Bx\not\in (0,2)^D$.
\end{lem}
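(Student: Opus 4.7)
My plan is to build $\hat{M}^{D,1}_{0,0}$ as the composition of two subnetworks: an exact univariate stage that realizes each tent factor, followed by a Yarotsky-style tree of approximate multiplications that combines them into a $D$-fold product.

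For the univariate stage, I would use the identity $N_1(x) = \eta(x) - 2\eta(x-1) + \eta(x-2)$ for the one-dimensional hat function (since $N_1 = N_0 \ast N_0$ is the tent with support $[0,2]$). Applying this in parallel across the $D$ coordinates realizes the map $\Bx \mapsto (N_1(x_1),\ldots,N_1(x_D))$ exactly in two ReLU layers of width $3D$, with weights and biases lying in $\{-2,-1,0,1,2\}$. Crucially, this stage outputs $0$ on any coordinate that leaves $(0,2)$, and it incurs no approximation error.

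For the product stage, I would use Yarotsky's two standard gadgets. First, a ReLU net $\widetilde{\mathrm{sq}}_\delta$ approximates $x\mapsto x^2$ on $[0,1]$ uniformly to precision $\delta$, has depth $O(\log_2(1/\delta))$, constant width, weights bounded by $4$, and satisfies $\widetilde{\mathrm{sq}}_\delta(0)=0$; this is built from iterates of the tent function. Second, the polarization formula $\widetilde{\times}_\delta(y,z) = \tfrac12[\widetilde{\mathrm{sq}}_\delta(y+z) - \widetilde{\mathrm{sq}}_\delta(y) - \widetilde{\mathrm{sq}}_\delta(z)]$ yields an approximate multiplier on $[0,1]^2$ with error $\leq 3\delta$ that also vanishes whenever one argument is $0$. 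I would then arrange these multipliers in a balanced binary tree of depth $\lceil\log_2 D\rceil$, clipping intermediate values back into $[0,1]$ by $t \mapsto \eta(t) - \eta(t-1)$ after each product to keep the inductive error control valid. The usual telescoping $|\widetilde{\times}_\delta(a',b') - ab| \leq \varepsilon_a + \varepsilon_b + 3\delta$ at each node gives a root error bounded by $c_D\,\delta$ for some constant depending only on $D$; setting $\delta$ so that $c_D\delta \leq \varepsilon$ requires $\log_2(1/\delta) \asymp \log_2(3^D/(\varepsilon c))$, which, combined with the $\lceil\log_2 D\rceil$ tree levels and the two exact univariate layers, reproduces the stated depth $H = 3 + 2\lceil\log_2(3^D/(\varepsilon c))+5\rceil\lceil\log_2 D\rceil$. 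The width stays at $20D$ because each tree level has at most $D/2$ active multipliers of constant width, padded with a bounded number of identity (skip) channels; the sparsity $S = HW^2$ and weight bound $B = 4$ are then automatic from the layer-wise layout.

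Finally, the vanishing condition is inherited from the construction: if $x_{j_0}\notin(0,2)$ then the univariate stage already returns $N_1(x_{j_0}) = 0$ exactly, and since $\widetilde{\times}_\delta(0,\cdot) \equiv 0$ (using $\widetilde{\mathrm{sq}}_\delta(0)=0$), this zero is propagated through every ancestor node of the tree, so $\hat{M}^{D,1}_{0,0}(\Bx) = 0$ outside $(0,2)^D$. The main obstacle is not conceptual but bookkeeping: one has to simultaneously track the width bound $20D$, the sparsity, the weight bound, and the error recursion through both the tree and the inner squaring nets so that all constants match. This is precisely the careful layerwise accounting carried out in \citet{suzuki2018adaptivity}; the underlying approximation scheme is the standard tent-iterate / product-tree construction.
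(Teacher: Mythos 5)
The paper does not prove this lemma itself; it imports it verbatim as a special case of Lemma~1 in \citet{suzuki2018adaptivity}, so there is no internal proof to compare against. Your sketch---an exact ReLU realization of the univariate tent $N_1(x)=\eta(x)-2\eta(x-1)+\eta(x-2)$ feeding a Yarotsky-style binary tree of approximate multipliers built from the iterated-tent squaring gadget, with the zero-propagation argument for the support condition---is exactly the construction used in that cited reference, and the points you flag as ``bookkeeping'' (rescaling the polarization so that $y+z$ stays in the domain of $\widetilde{\mathrm{sq}}_\delta$, and tracking width, sparsity, and the weight bound $B=4$ through the layers) are indeed the only nontrivial remaining work.
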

Based on Lemma \ref{lem:suzuki_Bspline}, our goal become constructing the following ReLU net
\[\hat{f^*}=2^{-k^*}\sum_{\Bs\in\tilde{S}_{1,\BX}(k^*)}\hat{M}^{D,1}_{k^*,\Bs}(\Bx)\]
where $\hat{M}^{D,1}_{k^*,\Bs}(\Bx)=\hat{M}^{D,1}_{0,0}(2^{k^*}\Bx+\Bs)$ such that
\[\|f^*-\hat{f^*}\|_\infty\lesssim n^{-\frac{1}{D}}.\]
Notice that, for any $\Bs$, $\hat{M}^{D,1}_{k^*,\Bs}$ is a translation of $\hat{M}^{D,1}_{k^*,0}$ with  $\hat{M}^{D,1}_{k^*,0}= \hat{M}^{D,1}_{0,0}(2^{k^*}\Bx)$, we can immediately derive via lemma \ref{lem:suzuki_Bspline} that for any $\Bs$, if we let
\[H_\Bs=3+2\lceil log_2\big(\frac{3^D}{ c}\big)+5\rceil\lceil\log_2 D\rceil,\quad W_\Bs=20D,\quad S_\Bs=H_\Bs W_\Bs^2,\quad B_\Bs=4\cdot2^{k^*}\leq 4(2n)^{\frac{1}{D}}\]
then we can get  $\hat{M}^{D,1}_{k^*,\Bs}$ from $\Phi(H_\Bs,W_\Bs,S_\Bs,B_\Bs)$ for any $\Bs\in\tilde{S}_{1,\BX}(k^*)$ with
\[\| \hat{M}^{D,1}_{k^*,\Bs}-{M}^{D,1}_{k^*,\Bs}\|_\infty\leq 1.\]
To construct $\hat{f^*}$, we needs $|\tilde{S}_{1,\BX}(k^*)|$ such ReLU net to build their linear combination. Therefore, if
\[H=3+2\lceil log_2\big(\frac{3^D}{ c}\big)+5\rceil\lceil\log_2 D\rceil,\quad W=20D\vee |\tilde{S}_{1,\BX}(k^*)|\leq 20D\vee 2n,\quad S=H W^2,\quad B=4\cdot2^{k^*}\leq 4(2n)^{\frac{1}{D}}\]
then we can get a $\hat{f}^*\in\Phi(H,W,S,B)$ such that
\begin{align*}
    \|\hat{f^*}-f^*\|_\infty&=\|2^{-k^*}\sum_{\Bs\in\tilde{S}_{1,\BX}(k^*)}\hat{M}^{D,1}_{k^*,\Bs}-t{M}^{D,1}_{k^*,\Bs}\|_\infty\\
    &=2^{-k^*}\sup_{\Bs\in\tilde{S}_{1,\BX}(k^*)}\|\hat{M}^{D,1}_{k^*,\Bs}-t{M}^{D,1}_{k^*,\Bs}\|_\infty\\
    &\lesssim n^{-\frac{1}{D}}
\end{align*}
where the second line is because supports of $\hat{M}^{D,1}_{k^*,\Bs}$ and $\hat{M}^{D,1}_{k^*,\Bs'}$ are disjoint for any $\Bs,\Bs'\in\tilde{S}_{1,\BX}(k^*)$.

At last we can compute the distance between $p^*$ and its empirical density $\hat{p}$ with loss induced by $\hat{f^*}$:
\begin{align*}
    \E_{X\sim p^*}[\hat{f^*}(X)]-\E_{X\sim \hat{p}}[\hat{f^*}(X)]&= \E_{X\sim p^*}[\hat{f^*}(X)]\\
    &=\int_{\omega} f^*(\Bx)+ \hat{f^*}(\Bx)-f^*(\Bx) d\Bx\\
    &\gtrsim n^{-\frac{1}{D}}- \|\hat{f^*}-f^*\|_\infty\\
    &\gtrsim n^{-\frac{1}{D}}.
\end{align*}

\section{Proof of Theorem \ref{thm:GAN-improved}}
Main idea of the proof follows theorem 1 and theorem 2 in \cite{chen2020statistical}. The difference is that we need to replace the empirical density in \cite{chen2020statistical} by AHCE and, so, estimations of statistical quantities are different. We also consider the case the AHCE is constructed via Fourier series only for the wavelet case can be proved in the same manner.

The first step is to decompose the H\"older metric as follows:
\begin{align*}
    d_{\CalW^\beta_\infty(L_\beta)}(\hat{g}_n\#\mu,p)&\leq  d_{\CalW^\beta_\infty(L_\beta)}(\hat{g}_n\#\mu,\tilde{p}_n)+d_{\CalW^\beta_\infty(L_\beta)}(\tilde{p}_n,p)\\
    &\leq \underbrace{d_{\CalW^\beta_\infty(L_\beta)}(\hat{g}_n\#\mu,\tilde{p}_n)-d_{\Phi_d}(\hat{g}_n\#\mu,\tilde{p}_n)}_{A}+\underbrace{d_{\Phi_d}(\hat{g}_n\#\mu,\tilde{p}_n)}_{B}+\underbrace{d_{\CalW^\beta_\infty(L_\beta)}(\tilde{p}_n,p)}_C.
\end{align*}
For term $A$, we have
\begin{align*}
    d_{\CalW^\beta_\infty(L_\beta)}(\hat{g}_n\#\mu,\tilde{p}_n)-d_{\Phi_d}(\hat{g}_n\#\mu,\tilde{p}_n)&\leq \sup_{f\in\CalW^\beta_\infty}\inf_{f'\in\Phi_d}\int_{\Omega}f(\hat{g}_n\#\mu-\tilde{p}_n)-f'(\hat{g}_n\#\mu-\tilde{p}_n)d\Bx\\
    &=\sup_{f\in\CalW^\beta_\infty}\inf_{f'\in\Phi_d}\int_{\Omega}(f-f')(\hat{g}_n\#\mu-\tilde{p}_n)d\Bx\\
    &\leq 2\sup_{f\in\CalW^\beta_\infty}\inf_{f'\in\Phi_d}\|f-f'\|_\infty \coloneqq  2\varepsilon_1.
\end{align*}
For term $B$, we have
\begin{align*}
    d_{\Phi_d}(\hat{g}_n\#\mu,\tilde{p}_n)&=\inf_{g\in\Phi_g}d_{\Phi_d}(g\#\mu,\tilde{p}_n)\\
    &=\inf_{g\in \Phi_g}\bigg\{d_{\Phi_d}(g\#\mu,\tilde{p}_n)-d_{\CalW^\beta_\infty}(g\#\mu,\tilde{p}_n)+d_{\CalW^\beta_\infty}(g\#\mu,\tilde{p}_n)\bigg\}\\
    &\leq 2\sup_{f\in\CalW^\beta_\infty}\inf_{f'\in\Phi_d}\|f-f'\|_\infty+\inf_{g\in \Phi_g}d_{\CalW^\beta_\infty}(g\#\mu,\tilde{p}_n)\\
    &\coloneqq 2\varepsilon_1 +\varepsilon_2.
\end{align*}
For term $C$, we have proved in theorem \ref{thm:densityIPM} that $\E d_{\CalW^\beta_\infty(L_\beta)}(\tilde{p}_n,p)\lesssim n^{-\frac{\alpha+\beta/D}{2\alpha+1}}\big[\log n\big]^{D\frac{\alpha+\beta/D+1}{2\alpha+1}}+n^{-\frac{1}{2}}$.

The rest of the proof is to estimate $\varepsilon_1$ and $\varepsilon_2$. We call $\varepsilon=\sup_{f\in\CalW^\beta_\infty}\inf_{f'\in\Phi_d}\|f-f'\|_\infty$  the discriminator error because it only depends on the complexity of ReLU net $\Phi_d$; we call $\inf_{g\in \Phi_g} d_{\CalW^\beta_\infty}(g\#\mu,\tilde{p}_n)$ the generator error because it only depends on the complexity of ReLU net $\Phi_g$.

\subsection{Discriminator Error}\label{sec:improved_GAN_discriminator_err}
The following theorem from \cite{chen2019efficient} immediately implies the target result:
\begin{theorem}[\cite{chen2019efficient}]\label{thm:chen_efficient}
Given any $\delta\in(0,1)$, there exists a ReLU network
architecture such that, for any $f\in\CalW^\beta_\infty$ with $\beta\geq 1$, if the parameters of the network is properly chosen, then the network yields a function $\hat{f}$ for the approximation of $f$ with $\|f-\hat{f}\|_\infty\leq \delta$. Such a network has (i) no more than $c(\log\frac{1}{\delta}+1)$ layers; (ii) at most $c' \delta^{-\frac{D}{\beta}}(\log\frac{1}{\delta}+1)$ activation functions and weights where $c,c'$ only depend on $D$, $\beta$ and $\|f\|_{\CalW^\beta_\infty}$.
\end{theorem}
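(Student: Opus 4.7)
My plan is to follow the now-classical Yarotsky-style construction of ReLU approximators for H\"older functions, with a careful tolerance calibration so that depth stays logarithmic and the total number of weights matches the claimed $\delta^{-D/\beta}(\log(1/\delta)+1)$ rate. The high-level strategy is: (i) approximate $f$ on $[0,1]^D$ by a smoothly localised sum of Taylor polynomials on a regular grid, and (ii) emulate this sum by a deep ReLU network using a logarithmic-depth multiplication gadget.

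For step (i), fix a resolution $N\asymp \delta^{-1/\beta}$ and cover $[0,1]^D$ with $M\asymp N^D$ overlapping cubes $\{Q_m\}$ of side length $\sim 1/N$. On each $Q_m$, let $P_m$ be the Taylor polynomial of $f$ of degree $k=\lfloor\beta\rfloor$ centered at the center of $Q_m$; the H\"older hypothesis $\|f\|_{\CalW^\beta_\infty}\le L$ gives $|f(\Bx)-P_m(\Bx)|\lesssim L N^{-\beta}\lesssim \delta$ for $\Bx\in Q_m$. Build a tensor-product trapezoidal partition of unity $\{\phi_m\}$ subordinate to the cover: each $\phi_m$ is a product of $D$ one-dimensional trapezoids and is piecewise linear, hence exactly realised by a small ReLU subnetwork. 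Then $\tilde f(\Bx)=\sum_{m=1}^{M}\phi_m(\Bx)P_m(\Bx)$ satisfies $\|\tilde f-f\|_\infty\lesssim \delta$ because $\sum_m\phi_m\equiv 1$ and each summand locally inherits the Taylor error on its support.

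For step (ii), the only non-ReLU operations in $\tilde f$ are the scalar multiplications used to form each monomial of $P_m$ and the product $\phi_m P_m$. Invoke Yarotsky's square gadget: a ReLU network approximating $x\mapsto x^2$ on $[0,1]$ to accuracy $\eta$ using $O(\log 1/\eta)$ layers and $O(\log 1/\eta)$ parameters. The polarisation $xy=\tfrac12[(x+y)^2-x^2-y^2]$ converts this into a multiplication gadget of the same complexity, and composing at most $k+D$ such gadgets per cube yields an approximation $\widehat{\phi_m P_m}$ of error $\lesssim \eta$ per cube (using the uniform boundedness of $\phi_m, P_m$ on $[0,1]^D$ to control error propagation across the compositions). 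Summing over cubes produces a ReLU realisation of $\tilde f$ with total error $\lesssim M\eta$; choosing $\eta\asymp \delta/M=\delta^{1+D/\beta}$ keeps the overall error at $O(\delta)$. The depth is dominated by a single multiplication gadget at tolerance $\eta$, giving $O(\log(1/\eta))=O(\log(1/\delta)+\log M)=O(\log(1/\delta)+1)$ with constants depending on $D,\beta,L$. The weight count is $M\cdot O(\log(1/\eta))\lesssim \delta^{-D/\beta}(\log(1/\delta)+1)$, matching (ii).

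The main obstacle is the error bookkeeping: because we sum $\sim M=\delta^{-D/\beta}$ approximated products, each product must be computed to a much tighter tolerance $\delta/M$ than the target $\delta$. This is exactly what the Yarotsky construction is designed for---its logarithmic depth means tightening the tolerance by a polynomial factor in $1/\delta$ only adds an additive $O(\log M)=O(\tfrac{D}{\beta}\log(1/\delta))$ to the depth, which is absorbed into $O(\log(1/\delta)+1)$. Secondary routine issues are: boundary cubes, where the partition of unity is truncated (handled by a slightly enlarged cover intersected with $[0,1]^D$); fractional $\beta$, which requires writing the Taylor remainder via the H\"older seminorm of $D^{\Bs}f$ for $|\Bs|=\lfloor\beta\rfloor$ rather than an extra derivative; and ensuring weights and biases stay bounded by an absolute constant, which follows from the construction since all intermediate quantities live in a fixed interval determined by $L$.
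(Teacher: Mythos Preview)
The paper does not prove this statement at all: it is quoted verbatim as a black-box result from \cite{chen2019efficient} (itself an adaptation of Yarotsky's construction) and then applied directly to bound the discriminator error in Section~\ref{sec:improved_GAN_discriminator_err}. There is therefore no ``paper's own proof'' to compare against.

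That said, your sketch is the standard Yarotsky argument and is essentially correct. Two minor remarks. First, your error budget is slightly looser than necessary: in Yarotsky's construction the approximate product gadget satisfies $\widetilde\times(0,y)=0$ exactly, so $\widehat{\phi_m P_m}$ vanishes wherever $\phi_m$ does, and at any fixed $\Bx$ only $O(2^D)$ summands contribute. This lets one take $\eta\asymp\delta$ rather than $\eta\asymp\delta/M$. Your more conservative choice still yields the claimed depth and weight bounds, since $\log(M/\delta)=O(\log(1/\delta))$, so this does not affect the conclusion. Second, the product $\phi_m=\prod_{j=1}^{D}\varphi_j$ of $D$ one-dimensional trapezoids is \emph{not} piecewise linear for $D\ge 2$, so it is not ``exactly realised by a small ReLU subnetwork''; rather, each one-dimensional trapezoid $\varphi_j$ is exactly ReLU-realisable and the $D$-fold product is then formed using $D-1$ calls to the multiplication gadget, which you do account for in the phrase ``composing at most $k+D$ such gadgets per cube.'' With these clarifications your outline matches the proof underlying the cited theorem.
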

Theorem \ref{thm:chen_efficient} essentially states that if we choose $H=c\log(\frac{1}{\delta}+1)$ and $W+S = c' \delta^{-\frac{D}{\beta}}(\log\frac{1}{\delta}+1)$ then the discriminator ReLU net $\Phi_d(H,W,S)$ (no requirement of $B$) can yields a function $\hat{f}$ close to $f$.

We let $\delta= n^{-\frac{\alpha+\beta/D}{2\alpha+1}}\big[\log n\big]^{D\frac{\alpha+\beta/D+1}{2\alpha+1}}+n^{-\frac{1}{2}}$ so  if
\[H\asymp (\log n^{\frac{\alpha+\beta/D}{2\alpha+1}}+1)\asymp \log n+1,\quad W+S\asymp \delta^{-\frac{D}{\beta}}H\asymp (n^{\frac{\alpha D+\beta}{2\alpha\beta+\beta}}+n^{\frac{D}{2\beta}}) H\]
the $\varepsilon_1$ is bounded by
\[\sup_{f\in\CalW^\beta_\infty}\inf_{f'\in\Phi_d(H,W,S)}\|f-f'\|_\infty\leq  n^{-\frac{\alpha+\beta/D}{2\alpha+1}}\big[\log n\big]^{D\frac{\alpha+\beta/D+1}{2\alpha+1}}+n^{-\frac{1}{2}} .\]
\subsection{Generator Error}\label{sec:improved_GAN_generator_err}
The proof for generator error involves the following lemma regarding optimal transport between two spaces:
\begin{lem}[\cite{caffarelli1992regularity}]\label{lem:optimal_transport}
Let $\CalX$ and $\mathcal{Z}$ be two convex compact subset of $\Real^D$ equipped with probability measure $\mu$ and $p$ respectively: $(\CalX,\mu)$ and $(\mathcal{Z},p)$. Suppose
\begin{enumerate}[label=(\roman*)]
    \item density $p$ is lower bounded $\inf_{\Bx\in\Omega}p(\Bx)>0$ and  $p\in\CalW^\alpha_\infty$ with $\alpha>0$;
    \item density $\mu$ is compactly supported on a convex domain and infinitely differentiable.
\end{enumerate}
Then there exists a transformation $T:\CalX\to\mathcal{Z}$ such that $T\#\mu =p$ with $T\in\CalW^{\alpha+1}_\infty$.
\end{lem}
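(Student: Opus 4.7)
The proof plan rests on two classical pillars: Brenier's polar factorization theorem to produce a candidate transport map, and Caffarelli's regularity theory for the Monge-Amp\`ere equation to upgrade its smoothness. In outline, Brenier gives us $T=\nabla\varphi$ with $\varphi$ convex, the push-forward condition translates into a Monge-Amp\`ere equation for $\varphi$, and Caffarelli's work then propagates the regularity of the densities to $\varphi$ with a loss of two derivatives, so that $T=\nabla\varphi$ gains exactly one derivative over the target density $p$.

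First I would invoke Brenier's theorem: under (ii), $\mu$ is absolutely continuous on its convex compact support, and under (i), $p$ is absolutely continuous on $\mathcal{Z}$, so there exists a unique (up to $\mu$-null sets) map $T=\nabla\varphi$, with $\varphi:\CalX\to\Real$ convex, satisfying $T\#\mu=p$. The push-forward condition is equivalent a.e. to the Monge-Amp\`ere equation
\[ \det\bigl(D^2\varphi(\Bx)\bigr)=\frac{\mu(\Bx)}{p(\nabla\varphi(\Bx))}, \]
whose right-hand side is bounded above and bounded away from zero by (i)--(ii). I then invoke Caffarelli's foundational regularity result, which uses precisely the convexity of the target $\mathcal{Z}$ to conclude that $\varphi\in C^{1,\alpha'}_{\mathrm{loc}}$ for some $\alpha'\in(0,1)$; this convexity hypothesis is essential, since without it optimal transport maps can fail to even be continuous. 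Once $\varphi\in C^{1,\alpha'}$, the composition $\Bx\mapsto p(\nabla\varphi(\Bx))$ is H\"older continuous, the linearization of the Monge-Amp\`ere operator about $\varphi$ is uniformly elliptic with H\"older coefficients, and a standard Schauder bootstrap yields $\varphi\in C^{k+2,\alpha'}$ whenever the data are $C^{k,\alpha'}$. Iterating with $p\in\CalW^\alpha_\infty$ and $\mu$ smooth gives $\varphi\in\CalW^{\alpha+2}_\infty$, hence $T=\nabla\varphi\in\CalW^{\alpha+1}_\infty$.

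The main obstacle will be passing from interior to global regularity, since the statement asserts $T\in\CalW^{\alpha+1}_\infty$ on all of $\CalX$ rather than on compact subsets of its interior. The interior version of Caffarelli's theorem is the cleanest input; upgrading to the boundary requires Caffarelli's boundary regularity theory for Monge-Amp\`ere, which typically demands uniform convexity of both $\CalX$ and $\mathcal{Z}$ and smoothness of the densities up to the boundary. The infinite differentiability of $\mu$ and the strict lower bound on $p$ stated in (i)--(ii), combined with the convex compact supports, fit this framework, so Caffarelli's global $C^{k+2,\alpha'}$ estimate closes the argument and delivers the claimed $\CalW^{\alpha+1}_\infty$ bound on all of $\CalX$; fractional $\alpha$ is handled by interpolation between consecutive integer bootstraps.
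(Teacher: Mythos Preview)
The paper does not supply a proof of this lemma: it is quoted as a known result attributed to \cite{caffarelli1992regularity} and used as a black box in the proof of Theorem~\ref{thm:GAN-improved}. So there is no ``paper's own proof'' to compare against.

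Your sketch is the standard and correct route to this statement. Brenier's theorem produces $T=\nabla\varphi$ with $\varphi$ convex; the push-forward identity becomes the Monge--Amp\`ere equation $\det D^2\varphi=\mu/(p\circ\nabla\varphi)$; Caffarelli's interior $C^{1,\alpha'}$ regularity (which crucially uses convexity of the target) makes the equation uniformly elliptic; and a Schauder bootstrap then yields $\varphi\in\CalW^{\alpha+2}_\infty$, hence $T\in\CalW^{\alpha+1}_\infty$. You are also right that the passage from interior to global estimates is the delicate step and requires Caffarelli's boundary regularity theory. One caveat worth flagging: the sharpest boundary results in that theory typically ask for \emph{uniform} convexity and some boundary smoothness of both domains, whereas the lemma as stated assumes only convexity and compactness; the statement in the paper should be read as invoking whatever geometric hypotheses Caffarelli's global theory needs, and your proposal correctly isolates that this is where the real work lies.
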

Our goal is to construct a generator ReLU net $\Phi_g$ so that 
\[\E \inf_{g\in \Phi_g}d_{\CalW^\beta_\infty}(g\#\mu,\tilde{p}_n)\lesssim n^{-\frac{\alpha+\beta/D}{2\alpha+1}}\big[\log n\big]^{D\frac{\alpha+\beta/D+1}{2\alpha+1}}+n^{-\frac{1}{2}} .\]
From lemma \ref{lem:optimal_transport}, we can see that we need to prove the existence of the mapping $T$ such that $T\#\mu=\tilde{p}_n$ and then construct a generator $\Phi_g$ to approximate $T$. To this end, we first need to prove that our AHCE $\tilde{p}_n$ is H\"older continuous. The following lemma is one of the components for estimating the H\"older condition of $\tilde{p}_n$: 
\begin{lem}\label{lem:Hoeffding_AHCE}
 Let  $p$ be any density in $\CalH_{mix}^\alpha$. Let $\zeta(\Bk)$ be any non-negative function of  $\Bk\in\NatInt^D$ which satisfies
 \[E^2\coloneqq 2\sum_{|\Bk|\leq l}\zeta^2(\Bk)2^{|\Bk|}\sum_{\Bs\in\rho(\Bk)}\hat{p}_\Bs^2<\infty.\]
Let $\tilde{p}_n$ be the AHCE of $p$. Then for any $1<p<\infty$ and $S$ large, we have
\[\pr\bigg(\big\|\big[\sum_{|\Bk|\leq l}\zeta^2(\Bk)\big(\sum_{\Bs\in\rho(\Bk)}\tilde{p}_\Bs\phi_\Bs(\cdot)\big)^2\big]^{\frac{1}{2}}\big\|_p>S\bigg)\leq \sum_{|\Bk|\leq l}\exp\bigg\{-C_{D,\alpha}\frac{2^{(1+2\alpha)l}\sqrt{(S^2-E^2)/2}}{2^{|\Bk|}l^{2D}{l\choose D}\zeta(\Bk)}+|\Bk|\bigg\}\]
for some constant $C_{D,\alpha}$ only depends on $D$ and $\alpha$.
\end{lem}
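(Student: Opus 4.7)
The plan is to view the random functional
\[
F_n := \Bigl\|\bigl[\sum_{|\Bk|\leq l}\zeta^2(\Bk)\bigl(\sum_{\Bs\in\rho(\Bk)}\tilde p_\Bs\phi_\Bs\bigr)^2\bigr]^{1/2}\Bigr\|_p
\]
as a norm in the vector-valued Lebesgue space $L^p(\ell^2)$ indexed by the frequency blocks $\Bk$. First I would write $f_\Bk(x) := \sum_{\Bs\in\rho(\Bk)}\tilde p_\Bs \phi_\Bs(x) = \tfrac{1}{n}\sum_{i=1}^n K_\Bk(X_i,x)$ with reproducing kernel $K_\Bk(y,x)=\sum_{\Bs\in\rho(\Bk)}\phi_\Bs(y)\phi_\Bs(x)$, and $g_\Bk := \E[f_\Bk]=\sum_{\Bs\in\rho(\Bk)}\hat p_\Bs\phi_\Bs$. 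Minkowski's inequality in $L^p(\ell^2)$ (valid since we may take $p\geq 2$ without loss of generality by monotonicity of $L^p$ norms on $[0,1]^D$) yields $F_n^2 \leq \sum_{|\Bk|\leq l}\zeta^2(\Bk)\|f_\Bk\|_p^2$, and the triangle inequality then splits this into a stochastic part $2\sum_\Bk\zeta^2(\Bk)\|f_\Bk-g_\Bk\|_p^2$ and a deterministic part $2\sum_\Bk\zeta^2(\Bk)\|g_\Bk\|_p^2$.

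The deterministic part is controlled as follows: $\|g_\Bk\|_p\leq\|g_\Bk\|_\infty\leq\sum_{\Bs\in\rho(\Bk)}|\hat p_\Bs|\cdot\|\phi_\Bs\|_\infty$, so by Cauchy--Schwarz and $|\rho(\Bk)|\asymp 2^{|\Bk|}$ we get $\|g_\Bk\|_p^2\lesssim 2^{|\Bk|}\sum_{\Bs\in\rho(\Bk)}\hat p_\Bs^2$, which is exactly the summand defining $E^2$. Hence it suffices to bound
\[
\pr\Bigl(\sum_{|\Bk|\leq l}\zeta^2(\Bk)\|f_\Bk-g_\Bk\|_p^2 > (S^2-E^2)/2\Bigr).
\]
Since all summands are non-negative, this event forces at least one block to account for its share: a union bound over the $\binom{l+D}{D}\lesssim l^D$ admissible $\Bk$'s reduces the problem to bounding $\pr\bigl(\zeta(\Bk)\|f_\Bk-g_\Bk\|_p > T_\Bk\bigr)$ with a carefully chosen partition $\{T_\Bk\}$ of $\sqrt{(S^2-E^2)/2}$.

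For each block $\Bk$, I would apply a Bernstein-type concentration inequality to the $L^p$-valued empirical process $f_\Bk-g_\Bk=\tfrac{1}{n}\sum_i(K_\Bk(X_i,\cdot)-\E K_\Bk(X_1,\cdot))$. The sup-norm bound $\|K_\Bk(y,\cdot)\|_p\leq\|K_\Bk(y,\cdot)\|_\infty\lesssim 2^{|\Bk|}$ and the variance bound coming from $p\in\CalH^\alpha_{mix}$, which gives $\sum_{\Bs\in\rho(\Bk)}\hat p_\Bs^2\lesssim 2^{-2\alpha|\Bk|}\|p\|_{\CalH^\alpha_{mix}}^2$, enter the Bernstein numerator and denominator respectively. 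In the large-deviation regime where the Poisson/bounded-sup term dominates the sub-Gaussian term, the tail behaves like $\exp\bigl(-cn T_\Bk/(\zeta(\Bk)2^{|\Bk|})\bigr)$, which after plugging in $T_\Bk\asymp \sqrt{(S^2-E^2)/2}$ and reabsorbing an $n$-dependent factor in terms of the truncation level $2^l\asymp n^{1/(2\alpha+1)}$ used in the AHCE policy produces the claimed scaling $2^{(1+2\alpha)l}\sqrt{(S^2-E^2)/2}/(2^{|\Bk|}\zeta(\Bk))$ in the exponent.

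The main obstacle will be the combinatorial bookkeeping that gives the precise factors $l^{2D}\binom{l}{D}$ in the denominator and the $e^{|\Bk|}$ slack, together with upgrading the pointwise Bernstein bound to an $L^p$-norm concentration uniformly in $x$. The $\binom{l}{D}$ is the union-bound count of blocks, one factor $l^D$ arises from the partition of the budget $\sqrt{(S^2-E^2)/2}$ across the $\lesssim l^D$ blocks, and the remaining $l^D$ appears from either a covering/chaining argument controlling $\|\cdot\|_p$ by a suitable discretization, or from iteratively applying a Hoeffding-type bound along each coordinate of the multi-index; carefully combining these with the Bernstein-type inequality so that the exponent is genuinely linear in $\sqrt{S^2-E^2}$ (rather than quadratic, which is what a naive McDiarmid would give) is the technical heart of the argument.
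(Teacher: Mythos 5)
Your high-level architecture matches the paper's: split the squared norm into a deterministic piece (which becomes $E^2$) and a stochastic piece, bound the stochastic piece by a union bound, and apply a Bernstein-type tail estimate. There are two genuine points of divergence worth noting, one a stylistic simplification the paper makes, and one a real gap in your accounting.

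\emph{Where the routes differ but both work.} You keep the $L^p$ structure via Minkowski in $L^p(\ell^2)$ (after reducing to $p\geq 2$) and then contemplate an $L^p$-valued Bernstein inequality block-by-block. The paper is much blunter: it immediately bounds $\|\cdot\|_p\leq\|\cdot\|_\infty$ on the probability space $[0,1]^D$, then bounds each block pointwise by an $\ell^1$ sum $\big|\sum_{\Bs\in\rho(\Bk)}\tilde p_\Bs\phi_\Bs(\Bx)\big|\leq\sum_{\Bs\in\rho(\Bk)}|\tilde p_\Bs|$ using $\|\phi_\Bs\|_\infty\leq 1$, splits $|\tilde p_\Bs|\leq|\hat p_\Bs|+|\tilde p_\Bs-\hat p_\Bs|$, identifies the first half with $E^2$ via Cauchy--Schwarz, and union-bounds over \emph{individual coefficients} $(\Bk,\Bs)$ rather than over blocks. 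That reduces the concentration step to a scalar tail bound for $|\tilde p_\Bs-\hat p_\Bs|=|\frac1n\sum_i\phi_\Bs(X_i)-\E\phi_\Bs(X)|$, which is elementary. Your block-wise route would require a genuine $L^p$-valued concentration theorem (Talagrand, Nemirovski, or an explicit chaining bound), which you flag but do not supply; the paper avoids this by taking the sup norm up front. Your plan is not wrong, just heavier than needed.

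\emph{The actual gap.} The one place your proposal would stall is in producing the $l^{2D}$ in the denominator of the exponent. You attribute it either to a covering/chaining step or to an iterated Hoeffding bound across coordinates, and you are clearly unsure which. Neither is the source. The paper gets $l^{2D}$ purely from the relation between the sample size $n$ and the truncation level $l$ built into the AHCE policy. The policy is not $2^l\asymp n^{1/(2\alpha+1)}$, as you wrote, but $2^l\asymp n^{1/(2\alpha+1)}[\log n]^{D(\alpha+\nu+1)/(2\alpha+1)}$; solving for $n$ and using $\log n\asymp l$ gives $n\asymp 2^{(2\alpha+1)l}/l^{2D}$ (for the parameter choice used in the lemma). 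Substituting this $n$ into the Bernstein exponent $\exp(-c\,n\,\varepsilon/\text{(sup bound)})$ is what produces $2^{(1+2\alpha)l}/l^{2D}$. The remaining $\binom{l}{D}$ is exactly the number of blocks $\sum_{|\Bk|\leq l}1=\binom{l}{D}$ used to partition the budget $\sqrt{(S^2-E^2)/2}$, and the $+|\Bk|$ slack absorbs the $2^{|\Bk|}$ coefficients per block when you collapse the inner sum over $\Bs\in\rho(\Bk)$. There is no chaining or per-coordinate iteration anywhere. If you keep dropping the logarithmic factor from the truncation policy, your exponent will be off by exactly $l^{2D}$ and you will not be able to close the argument.

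Two smaller points: your per-block budget $T_\Bk$ should be $\asymp\sqrt{(S^2-E^2)/2}\big/\binom{l}{D}$ (shared across $\binom{l}{D}$ blocks), not $\asymp\sqrt{(S^2-E^2)/2}$ as written; and the variance bound $\sum_{\Bs\in\rho(\Bk)}\hat p_\Bs^2\lesssim 2^{-2\alpha|\Bk|}$ that you invoke is not actually needed in the paper's proof of this lemma (the smoothness of $p$ enters only through the truncation-level relation, not as a per-block variance control).
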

\begin{proof}
For notation simplicity, let $F_{\Bk}[\tilde{p}_n]=\sum_{\Bs\in\rho(\Bk)}\tilde{p}_{\Bs}\phi_{\Bs}$. Then
\begin{align*}
    \big\|\big(\sum_{|\Bk|\leq l}\zeta^2(\Bk)F^2_\Bk[p]\big)^{\frac{1}{2}}\big\|_p^2&\leq \sum_{|\Bk|\leq l}\zeta^2(\Bk)\sup_{\Bx\in\Omega}F^2_{\Bk}(\Bx)\\
    &\leq \sum_{|\Bk|\leq l}\zeta^2(\Bk)\bigg(\sum_{\Bs\in\rho(\Bk)}\big|\tilde{p}_\Bs\big|\bigg)^2\\
    &\leq \sum_{|\Bk|\leq l}\zeta^2(\Bk)\bigg(\sum_{\Bs\in\rho(\Bk)}\big|\hat{p}_\Bs\big|+\big|\tilde{p}_\Bs-\hat{p}_\Bs\big|\bigg)^2\\
    &\leq 2\sum_{|\Bk|\leq l}\zeta^2(\Bk)\bigg(\sum_{\Bs\in\rho(\Bk)}\big|\hat{p}_\Bs\big|\bigg)^2+2\sum_{|\Bk|\leq l}\zeta^2(\Bk)\bigg(\sum_{\Bs\in\rho(\Bk)}\big|\tilde{p}_\Bs-\hat{p}_\Bs\big|\bigg)^2\\
    &\leq  \underbrace{2\sum_{|\Bk|\leq l}\zeta^2(\Bk)2^{|\Bk|}\sum_{\Bs\in\rho(\Bk)}\hat{p}_\Bs^2}_{E^2}+2\bigg(\sum_{|\Bk|\leq l}\sum_{\Bs\in\rho(\Bk)\zeta(\Bk)}\zeta(\Bk)\big|\tilde{p}_\Bs-\hat{p}_\Bs\big|\bigg)^2
\end{align*}
where the second line is from the fact that $\|\phi_\Bs\|_\infty\leq 1$ for any $\Bs$ and the last line is from $l^p$ inequality: $\sum_{i}^n|x_i|\leq \sqrt{n} (\sum_{i=1}^nx_i^2)^{1/2}$ and $(\sum_{i=1}^nx_i^2)^{1/2}\leq  \sum_{i}^n|x_i|$.
So, we can have the following upper bound:
\begin{align}
    \pr\bigg(\big\|\big[\sum_{|\Bk|\leq l}\zeta^2(\Bk)\big(\sum_{\Bs\in\rho(\Bk)}\tilde{p}_\Bs\phi_\Bs(\cdot)\big)^2\big]^{\frac{1}{2}}\big\|_p^2>S^2\bigg)&\leq \pr\bigg(\bigg(\sum_{|\Bk|\leq l}\sum_{\Bs\in\rho(\Bk)}\zeta(\Bk)\big|\tilde{p}_\Bs-\hat{p}_\Bs\big|\bigg)^2>(S^2-E^2)/2\bigg)\nonumber\\
    &\leq \sum_{|\Bk|\leq l}\sum_{\Bs\in\rho(\Bk)}\pr\bigg(\big|\tilde{p}_\Bs-\hat{p}_\Bs\big|>\frac{\sqrt{(S^2-E^2)/2}}{\zeta(\Bk)2^{|\Bk|}{l\choose D}}\bigg)\label{eq:thm4_generr_probsum}
\end{align}
where the last line is from :
\begin{align*}
    &\sum_{\Bs\in\rho(\Bk)}1=2^{|\Bk|}\\
    &\sum_{|\Bk|\leq l}1=\sum_{i=D}^l{i-1\choose D-1}=\sum_{i=D-1}^{l-1}{i\choose D-1}={l \choose D}.
\end{align*}
Because for any $|\Bk|\leq l$  and $\Bs\in\rho(\Bk)$, $-1\leq \phi_\Bs(X_i)\leq 1$ for $i=1,\cdots, D$, $\{X_i\}_{i=1}^n$ are i.i.d. distributed and $n\asymp 2^{(1=2\alpha)l}/l^{2D}$, we can use Hoeffding's inequality \citep{geer2000empirical} to get
\[\pr\bigg(\big|\tilde{p}_\Bs-\hat{p}_\Bs\big|> \varepsilon\bigg)=\pr\bigg(\big|\E_{X\sim p}[\Phi_\Bs(X)]-\frac{1}{n}\sum\phi_\Bs(X_i)\big|\geq \varepsilon\bigg)\lesssim\exp\bigg\{-C_{D,\alpha}\frac{2^{(1+2\alpha)l}}{l^{2D}}\varepsilon\bigg\}\]
for any $\varepsilon>0$. Let $\varepsilon= \frac{\sqrt{(S^2-E^2)/2}}{\zeta(\Bk)2^{|\Bk|}{l\choose D}}$ for each $\Bs$, we can use the peeling technique to get:
\begin{align*}
        \pr\bigg(\big\|\big[\sum_{|\Bk|\leq l}\zeta^2(\Bk)\big(\sum_{\Bs\in\rho(\Bk)}\tilde{p}_\Bs\phi_\Bs(\cdot)\big)^2\big]^{\frac{1}{2}}\big\|_p^2>S^2\bigg)&\leq \sum_{|\Bk|\leq l}\sum_{\Bs\in\rho(\Bk)}\pr\bigg(\big|\tilde{p}_\Bs-\hat{p}_\Bs\big|>\frac{\sqrt{(S^2-E^2)/2}}{\zeta(\Bk)2^{|\Bk|}{l\choose D}}\bigg)\\
        &\leq \sum_{|\Bk|\leq l}2^{|\Bk|}\exp\bigg\{-C_{D,\alpha}\frac{2^{(1+2\alpha)l}}{l^{2D}}\frac{\sqrt{(S^2-E^2)/2}}{\zeta(\Bk)2^{|\Bk|}{l\choose D}}\bigg\}\\
        &\leq \sum_{|\Bk|\leq l}\exp\bigg\{-C_{D,\alpha}\frac{2^{(1+2\alpha)l}\sqrt{(S^2-E^2)/2}}{2^{|\Bk|}l^{2D}{l\choose D}\zeta(\Bk)}+|\Bk|\bigg\}.
\end{align*}
\end{proof}

In lemma \ref{lem:Hoeffding_AHCE}, we let 
\[\zeta(\Bk)=\sum_{j=1}^D2^{(\alpha-\frac{1}{2})k_j}\]
then for any $p\in\CalH^\alpha_{mix}(L_\alpha)$ with $\alpha>1/2$,
\begin{align*}
    E^2&=2\sum_{|\Bk|\leq l}\zeta^2(\Bk)2^{|\Bk|}\sum_{\Bs\in\rho(\Bk)}\hat{p}_\Bs^2\\
    &=2\sum_{|\Bk|\leq l}\big(\sum_{j=1}^D2^{(\alpha-\frac{1}{2})k_j}\big)^22^{|\Bk|}\sum_{\Bs\in\rho(\Bk)}\hat{p}_\Bs^2\\
    &\leq 2D^2 \sum_{|\Bk|\leq l} 2^{\alpha|\Bk|}\sum_{\Bs\in\rho(\Bk)}\hat{p}_\Bs^2\\
    &\leq 2D^2L_\alpha^2.
\end{align*}
Notice that, according to theorem \ref{lem:Sobolev_equivalent_norm}, for any $1<p<\infty$,
\[\|\tilde{p}_n\|_{\CalW^{\alpha-\frac{1}{2}}_p}\asymp \big\|\big[\sum_{|\Bk|\leq l}\big(\sum_{j=1}^D2^{(\alpha-\frac{1}{2})k_j}\big)^2\big(\sum_{\Bs\in\rho(\Bk)}\tilde{p}_\Bs\phi_\Bs(\cdot)\big)^2\big]^{\frac{1}{2}}\big\|_p=\big\|\big[\sum_{|\Bk|\leq l}\zeta^2(\Bk)\big(\sum_{\Bs\in\rho(\Bk)}\tilde{p}_\Bs\phi_\Bs(\cdot)\big)^2\big]^{\frac{1}{2}}\big\|_p \]
and, therefore
\begin{align*}
    \pr\bigg(\|\tilde{p}_n\|_{\CalW^{\alpha-\frac{1}{2}}_p}^2>S^2\bigg) &\leq \sum_{|\Bk|\leq l}\exp\bigg\{-C_{D,\alpha}\frac{2^{(1+2\alpha)l}\sqrt{(S^2-2D^2L_\alpha^2)/2}}{2^{|\Bk|}l^{2D}{l\choose D}\sum_{j=1}^D2^{(\alpha-\frac{1}{2})k_j}}+|\Bk|\bigg\}\\
    &\leq \sum_{|\Bk|\leq l}\exp\bigg\{-C_{D,\alpha}\frac{2^{(1+2\alpha)l}\sqrt{(S^2-2D^2L_\alpha^2)/2}}{2^{l}l^{2D}{l\choose D}D2^{(\alpha-\frac{1}{2})l}}+l\bigg\}\\
    &\leq {l\choose D}\exp\bigg\{-C'_{D,\alpha}\frac{2^{(\alpha+\frac{1}{2})l}\sqrt{(S^2-2D^2L_\alpha^2)/2}}{l^{2D}{l\choose D}}+l\bigg\}\\
    &=o(1)\quad \text{as}\ n\ (\text{or}\ l)\to\infty
\end{align*}
for any $S$ large, any $\alpha>\frac{1}{2}$ and any $1<p<\infty$. Because the above inequality holds independent of $p$, we also have
\[\pr\bigg(\|\tilde{p}_n\|_{\CalW^{\alpha-\frac{1}{2}}_\infty}^2>S^2\bigg)=o(1).\]
This shows that $\tilde{p}_n$ is H\"older-$(\alpha-\frac{1}{2})$ continuous independent of $n$. We then can use lemma \ref{lem:optimal_transport} to argue that there exist a Transformation $T_n$ such that $T_n\#\mu =\tilde{p}_n$ with $T_n\in\CalW^{\alpha+\frac{1}{2}}_\infty$ for any $1\leq n\leq \infty$.

The final step is to construct a generator $\Phi_g$  close to $T_n$ so that
\[\E\inf_{g\in \Phi_g} d_{\CalW^\beta_\infty}(g\#\mu,\tilde{p}_n)= \E \inf_{g\in \Phi_g} d_{\CalW^\beta_\infty}(g\#\mu,T_n\#\mu)\lesssim n^{-\frac{\alpha+\beta/D}{2\alpha+1}}\big[\log n\big]^{D\frac{\alpha+\beta/D+1}{2\alpha+1}}+n^{-\frac{1}{2}}. \]

For any $g\in\Phi_g$, let $g=(g_1,\cdots,g_D)$ and $T_n=(T_{n,1},\cdots,T_{n,D})$. Then for any $f\in\CalH^\alpha_{mix}(L_\alpha)$, 
\begin{align*}
    \E \inf_{g\in \Phi_g} d_{\CalW^\beta_\infty(L_\beta)}(g\#\mu,\tilde{p}_n)&=\E\inf_{g\in \Phi_g} d_{\CalW^\beta_\infty(L_\beta)}(g\#\mu,T_n\#\mu)\\
    &=\E^n\inf_{g\in\Phi_g}\sup_{f\in\CalW^\beta_\infty(L_\beta)}\E_{X\sim \mu}[f\circ g(X)]-\E_{Y\sim\mu}[f\circ T_n(Y)] \\
    &\leq L_\beta \E^n \inf_{g\in\Phi_g} \E_{X\sim \mu} \|g(X)-T_n(X)\|_\infty\\
    & \leq L_\beta \inf_{g\in\Phi_g}\max_{j=1,\cdot,D}\|g_j-T_{n,j}\|_\infty. 
\end{align*}
We then apply theorem \ref{thm:chen_efficient} and choose $\delta = n^{-\frac{\alpha+\beta/D}{2\alpha+1}}\big[\log n\big]^{D\frac{\alpha+\beta/D+1}{2\alpha+1}}+n^{-\frac{1}{2}}$ and Let
\begin{align*}
    H\gtrsim \bigg((\frac{\alpha+\beta/D}{2\alpha+1}+\frac{1}{2})\log n+1\bigg), W+S\gtrsim (n^{\frac{\alpha D+\beta}{2\alpha^2+2\alpha+1/2}}+n^{\frac{D}{2\alpha+1}}) H.
\end{align*}
Then we can construct $\Phi_g=(\Phi_1(H,W,S,B),\cdots, \Phi_D(H,W,S,B))$ that satisfies, for any $p\in\CalH^\alpha_{mix}$ with $\alpha>1/2$ and any $j=1,\cdots,D$,
\[\inf_{g\in\Phi_g}\max_{j=1,\cdots,D}\|g_j-T_{n,j}\|_\infty\leq n^{-\frac{\alpha+\beta/D}{2\alpha+1}}\big[\log n\big]^{D\frac{\alpha+\beta/D+1}{2\alpha+1}}+n^{-\frac{1}{2}}.\]
Therefore, we can finish the proof for theorem \ref{thm:GAN-improved}:
\[ \E \inf_{g\in \Phi_g} d_{\CalW^\beta_\infty(L_\beta)}(g\#\mu,\tilde{p}_n)\lesssim n^{-\frac{\alpha+\beta/D}{2\alpha+1}}\big[\log n\big]^{D\frac{\alpha+\beta/D+1}{2\alpha+1}}+n^{-\frac{1}{2}} .\]

\section{Proof of Theorem \ref{thm:smoothing}}
For notation simplicity, let $d_{F_d}(g,p)$ denote $d_{F_d}(g\# \mu,p)$ if $g$ is a mapping. Similar to the proof of theorem \ref{thm:GAN-improved}, we decompose the target error into different parts:
\begin{align*}
    d_{\CalW^\beta_\infty}(\hat{g}_n,p)\leq  d_{\CalW^\beta_\infty}(\hat{g}_n,\tilde{p}_n)+ d_{\CalW^\beta_\infty}(\tilde{p}_n,p)\lesssim d_{\CalW^\beta_\infty}(\hat{g}_n,\tilde{p}_n)+n^{-\frac{\alpha+\beta/D}{2\alpha+1}}\big[\log n\big]^{D\frac{\alpha+\beta/D+1}{2\alpha+1}}+n^{-\frac{1}{2}}
\end{align*}
 Then
\begin{align*}
    d_{\CalW^\beta_\infty}(\tilde{p}_n,\hat{g}_n)&=d_{{\Phi}_{d}}(\tilde{p}_n,\hat{g}_n)+\underbrace{d_{\CalW^\beta_\infty}(\tilde{p}_n,\hat{g}_n)-d_{{\Phi}_{d}}(\tilde{p}_n,\hat{g}_n)}_{A}\\
    &=\sup_{{f}\in{\Phi}_{d}}\E_{X\sim \mu}[{f}\circ \hat{g}_n(X)]-\frac{1}{n}\sum_{i=1}^n\tilde{f}_n(X_i)+A\\
    &\leq \sup_{{f}\in{\Phi}_{d}}\E_{X\sim \mu}[{f}\circ \hat{g}_n(X)]-\frac{1}{n}\sum_{i=1}^n{f}(X_i)+\underbrace{\sup_{f\in\Phi_d}\bigg(\frac{1}{n}\sum_{i=1}^n{f}(X_i)-\frac{1}{n}\sum_{i=1}^n\tilde{f}_n(X_i)\bigg)}_{B}+A\\
    &=\inf_{g\in\Phi_g}\sup_{{f}\in{\Phi}_{d}}\E_{X\sim \mu}[{f}\circ {g}(X)]-\frac{1}{n}\sum_{i=1}^n{f}(X_i)+A+B\\
    &\leq \inf_{g\in\Phi_g}\sup_{{f}\in{\Phi}_{d}}\E_{X\sim \mu}[{f}\circ {g}(X)]-\frac{1}{n}\sum_{i=1}^n\tilde{f}_n(X_i)+\underbrace{\sup_{f\in\Phi_d}\frac{1}{n}\sum_{i=1}^n\tilde{f}_n(X_i)-\frac{1}{n}\sum_{i=1}^n{f}(X_i)}_B+A+B\\
    &\leq \underbrace{\inf_{g\in\Phi_g}d_{\Phi_{d}}(g,\tilde{p}_n)}_C+A+2B
\end{align*}
For term $A$, we can use the same argument in section \ref{sec:improved_GAN_discriminator_err} and assumption (i) of $\Phi_d$ to get
\begin{align*}
   A\lesssim n^{-\frac{\alpha+\beta/D}{2\alpha+1}}\big[\log n\big]^{D\frac{\alpha+\beta/D+1}{2\alpha+1}}+n^{-\frac{1}{2}}.
\end{align*}
Let $\|f\|_n$ denote the empirical $L^2$ norm: $\|f\|_n^2=\frac{1}{n}\sum_{i=1}^nf(X_i)^2$. Then for term $B$, we have
\[\bigg|\frac{1}{n}\sum_{i=1}^n\tilde{f}_n(X_i)-f(X_i)\bigg|\leq \sqrt{\frac{1}{n}\sum_{i=1}^n\big|\tilde{f}_n(X_i)-f(X_i)\big|^2}=\|f-\tilde{f}_n\|_n.\]

According to assumption (iii), 
\[J=\inf_{\delta>0}\{K\int_{\delta/4}^1\sqrt{\mathcal{E}_\infty(K u/2,\Phi_d)}du+\sqrt{n}\delta K\}<\infty.\]
So we can apply theorem 2.1 in \cite{van2014uniform} to get
\begin{align*}
    \E \sup_{f\in\Phi_d}\bigg\{\|f-\tilde{f}_n\|_n-\|f-\tilde{f}_n\|_2 \bigg\}\leq \frac{2 JR}{\sqrt{n}}+\frac{4J^2}{n} \lesssim n^{-\frac{1}{2}}
\end{align*}
where $R=\sup_{f\in\Phi_d}\|f\|_{2}$. Then, we can use assumption (ii) to get
\[\E \sup_{f\in\Phi_d}\bigg|\frac{1}{n}\sum_{i=1}^n\tilde{f}_n(X_i)-f(X_i)\bigg|\leq\E \sup_{f\in\Phi_d}\|f-\tilde{f}_n\|_n\lesssim n^{-\frac{\alpha+\beta/D}{2\alpha+1}}\big[\log n\big]^{D\frac{\alpha+\beta/D+1}{2\alpha+1}}+n^{-\frac{1}{2}}. \]
For term $C$, we have:
\begin{align*}
    \inf_{g\in\Phi_g}d_{\Phi_d}(g,\tilde{p}_n)&=\inf_{g\in \Phi_g}\bigg\{d_{\Phi_d}(g,\tilde{p}_n)-d_{\CalW^\beta_\infty}(g,\tilde{p}_n)+d_{\CalW^\beta_\infty}(g,\tilde{p}_n)\bigg\}\\
    &\leq 2\sup_{f\in\CalW^\beta_\infty}\inf_{f'\in\Phi_d}\|f-f'\|_\infty+\inf_{g\in \Phi_g}d_{\CalW^\beta_\infty}(g,\tilde{p}_n)\\
    &=2A+\underbrace{\inf_{g\in \Phi_g}d_{\CalW^\beta_\infty}(g,\tilde{p}_n)}_D.
\end{align*}
Notice that term $A$ has been estimated and term $D$ is exactly the term estimated in section \ref{sec:improved_GAN_generator_err}. Therefore,
\[\inf_{g\in\Phi_g}d_{\Phi_d}(g,\tilde{p}_n) \lesssim n^{-\frac{\alpha+\beta/D}{2\alpha+1}}\big[\log n\big]^{D\frac{\alpha+\beta/D+1}{2\alpha+1}}+n^{-\frac{1}{2}}.\]

\section{Goodness of Fit Theorem}\label{apdix:fit_thm}

\subsection{Proof of Proposition 2}
This is a special case in Theorem 1 with $\beta=0$.
\subsection{Proof of Proposition 1}
According to the MISE upper bound in Proposition 2, we have $\lim_{n\rightarrow\infty}\mathbb{E}||\tilde{p}_{n}-p ||^2=0$. 

\subsection{Proof of Theorem \ref{thm:wchi}}
Under the null hypothesis $H_0 : p=p_{0}$,   $\tilde{H}_{n}$ are degenerate for all n, which means
\begin{align*}
\mathbb{E}_{X\sim p}\tilde{H}_{n}(X, y)=
\mathbb{E}_{X\sim p}H_{n}(X, y)-\mathbb{E}_{X,Y\sim p}H_{n}(X, Y)
-\mathbb{E}_{X\sim p}H_{n}(X, y)
+\mathbb{E}_{X,Y\sim p}H_{n}(X, Y)=0
\end{align*}
We define the integral operator $\mathcal{L}_{\tilde{H}_n}: L_{2}(p)\rightarrow\mathcal{F}$ satisfying 
$$\mathcal{L}_{\tilde{H}_n}f(x):=\int \tilde{H}_{n}(x, y)f(y)p(y)dy$$
Since $\mathbb{E}\tilde{H}_{n}(X,Y)^2<\infty$, therefore the integral operator $\mathcal{L}_{\tilde{H}_n}$ is Hilbert-Schmidt and compact. Now we can rewrite the kernel $\tilde{H}_{n}(x, y)$ in terms of its eigenfunctions $\psi_{l,n}(x)$ with respect the the probability measure under the null hypothesis, 
$$\tilde{H}_{n}(x,y)=\sum\limits_{l=1}^{\infty}\lambda_{l,n}\psi_{l,n}(x)\phi_{l,n}(y), $$ where 
\begin{align*}
\int \tilde{H}_{n}(x,y)\psi_{i,n}(x)p(x)dx=\lambda_{i,n}\psi_{i}(y)\\
\int \psi_{i,n}(x)\psi_{j,n}(x)p(x)dx=\delta_{i,j}
\end{align*}
Since we have $\mathbb{E}\tilde{H}_{n}^2(X,Y)<\infty$, and the operator is Hilbert-Schmidt, according to Theorem VI.22 in \cite{reed1980methods}, we have $\sum\lambda_{i,n}<\infty$. 
According to the degeneracy of the U-statistic, when $\lambda_{i}\neq 0,$ we have 
$$\lambda_{i,n}\mathbb{E}_{X}\psi_{i,n}(x)=\int\mathbb{E}\tilde{H}_{n}(x,y)\psi_{i,n}(y)p(y)dy=0.$$ Thereby, $\mathbb{E}_{X}\psi_{i,n}(X)=0$.
Now we are able to find the asymptotic distribution of our statistic $T_n$.
\begin{align*}
\frac{1}{n}\sum\limits_{i=1}^{n}\sum\limits_{i\neq j}^{n}{H}_{n}(X_i, X_j)&=\frac{1}{n}\sum\limits_{i=1}^{n}\sum\limits_{i\neq j}^{n}\sum\limits_{l=1}^{\infty}\lambda_{l,n}\psi_{l,n}(X_i)\psi_{l,n}(X_j)\\
&=\frac{1}{n}\sum\limits_{l=1}^{\infty}\lambda_{l,n}[(\sum\limits_{i})\psi_{l,n}(X_i))^2-\sum\limits_{i}\psi_{l,n}^{2}(X_i)]\\
&\rightarrow_{d}\sum\limits_{l=1}^{\infty}\lambda_{l,n}(b_{l}^2-1)
\end{align*}
where $b_{l}\sim\mathcal{N}(0,1)$ are i.i.d.. In addition, since $\lim_{n\rightarrow\infty}{H}_{n}(\cdot,\cdot)={H}(\cdot,\cdot),$ then we have $\lim_{n\rightarrow\infty}\sum\limits_{l=1}^{\infty}\lambda_{l,n}<\infty.$ Therefore, we can finished the proof for Theorem \ref{thm:wchi}.

\subsection{Proof of Theorem \ref{thm:asymptotic_normalty}}
In order to prove Theorem \ref{thm:asymptotic_normalty}, we need precise definition of wavelet system for mixed smooth Sobolev spaces. Moreover, we also need to impose some conditions on the wavelets adopted. The detailed introduction of wavelets and the required conditions are put in section \ref{sec:wavelet}. We first prove the following useful lemmas:
\begin{lem}\label{lem:2nd_moment}
Let $X_1,X_2$ be i.i.d. distributed random variables with $X_1\sim p$. Then
 \[\mathbb{E}_{X_1,X_2}\tilde{H}_{n}^2(X_1,X_2)]\gtrsim [\log n]^{\kappa}\]
 for some positive $\kappa$ independent of $n$. 
\end{lem}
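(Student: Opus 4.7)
The plan is to reduce $\mathbb{E}\tilde H_n^2(X_1,X_2)$ to a sum of squared variances and lower bound each variance using the $L^\infty$ control on $p$ provided by Assumption~\ref{A1}(i) (under the null, $p = p_0$, so the hypotheses of Theorem~\ref{thm:asymptotic_normalty} yield both $\inf p > 0$ and, via the embedding $\CalH^\alpha_{mix}\hookrightarrow L^\infty$ valid for $\alpha > 1/2$, $\|p\|_\infty < \infty$).

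First I rewrite $\tilde H_n$ in a cleaner form. Setting $m_\Bs := \mathbb{E}_{X\sim p}\phi_\Bs(X)$ and $\tilde\phi_\Bs(x) := \phi_\Bs(x) - m_\Bs$, a direct expansion of the double-centering applied to $H_n(x,y) = \sum_{|\Bk|\leq l}\sum_{\Bs\in\rho(\Bk)}\phi_\Bs(x)\phi_\Bs(y)$ collapses to
\[
\tilde H_n(x,y) \;=\; \sum_{|\Bk|\leq l}\sum_{\Bs\in\rho(\Bk)} \tilde\phi_\Bs(x)\,\tilde\phi_\Bs(y),
\]
and independence of $X_1, X_2$ then gives
\[
\mathbb{E}\tilde H_n^2(X_1,X_2) \;=\; \sum_{\Bs,\Bs'} C_{\Bs,\Bs'}^2 \;\geq\; \sum_{\Bs}\mathrm{Var}(\phi_\Bs(X))^2,
\]
where $C_{\Bs,\Bs'}:=\mathrm{Cov}(\phi_\Bs(X),\phi_{\Bs'}(X))$; the inequality just retains the diagonal of the positive-semidefinite Gram matrix.

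Next I lower bound each diagonal variance. By the orthonormality $\int \phi_\Bs^2 = 1$ combined with $p \geq c > 0$, one has $\mathbb{E}\phi_\Bs^2(X) \geq c$. For a tensor-product wavelet at multi-level $\Bk$, standard localization gives $\|\phi_\Bs\|_1 \lesssim 2^{-|\Bk|/2}$ (support of volume $\asymp 2^{-|\Bk|}$ with sup-norm $\asymp 2^{|\Bk|/2}$), so $m_\Bs^2 \leq \|p\|_\infty^2\|\phi_\Bs\|_1^2 \lesssim 2^{-|\Bk|}$. Consequently $\mathrm{Var}(\phi_\Bs(X)) \geq c - O(2^{-|\Bk|}) \geq c/2$ for every $\Bs$ with $|\Bk| \geq K_0$, where $K_0$ depends only on $c$ and $\|p\|_\infty$.

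Finally, by the same arithmetic as in Lemma~\ref{lem:sparsegrid_sum},
\[
\#\{\Bs : K_0 \leq |\Bk| \leq l,\ \Bs\in\rho(\Bk)\} \;=\; \sum_{i=K_0\vee D}^{l} 2^i\binom{i-1}{D-1} \;\asymp\; 2^l\,l^{D-1},
\]
whence $\mathbb{E}\tilde H_n^2(X_1,X_2) \gtrsim 2^l l^{D-1}$. With the truncation level $2^l \asymp n^{1/(2\alpha)}[\log n]^{(\alpha+1/2)/2}$ prescribed for the goodness-of-fit analysis, this bound is polynomial in $n$, so it dominates $(\log n)^\kappa$ for any fixed $\kappa$ and the lemma follows. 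The one delicate ingredient is the $\|\phi_\Bs\|_1$ localization bound, but this is classical for the compactly supported Daubechies-type tensor-product wavelets underlying the definition of $\CalH^\alpha_{mix}$ (detailed in the wavelet preliminaries, Section~\ref{sec:wavelet}); once that is in hand, the argument is essentially bookkeeping.
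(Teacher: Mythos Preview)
Your argument is correct, and it is genuinely different from (and in fact cleaner than) the paper's proof. The paper does not exploit the tensor-product/centering identity $\tilde H_n(x,y)=\sum_\Bs(\phi_\Bs(x)-m_\Bs)(\phi_\Bs(y)-m_\Bs)$ at all; instead it expands $\mathbb{E}H_n^2(X_1,X_2)$ as the double sum $\sum_{(\Bk_1,\Bs_1),(\Bk_2,\Bs_2)}\big[\int\phi_{\Bk_1,\Bs_1}\phi_{\Bk_2,\Bs_2}p\big]^2$, invokes the nested/disjoint support structure of the hyperbolic-cross wavelets to estimate the cross terms, and then separately bounds the two centering corrections in~\eqref{eq:2m_central} by showing $\mathbb{E}|\tilde p(X_1)|^2$ is $O(1)$. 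That route only delivers the logarithmic lower bound $\gtrsim l^\kappa\asymp(\log n)^\kappa$. Your route---drop to the diagonal of the PSD Gram matrix $(C_{\Bs,\Bs'})$ and lower bound each $\mathrm{Var}(\phi_\Bs(X))=\int\phi_\Bs^2p-m_\Bs^2$ via $p\ge c$ on the \emph{nonnegative} integrand $\phi_\Bs^2$---avoids the wavelet geometry entirely and yields the much stronger bound $\mathbb{E}\tilde H_n^2\gtrsim 2^l l^{D-1}\asymp n^{1/(2\alpha)}(\log n)^{\kappa'}$, which of course implies the stated $(\log n)^\kappa$. A minor cosmetic point: the exponent you quote for the truncation level is slightly off (the paper uses $2^l\asymp n^{1/(2\alpha)}[\log n]^{D(\alpha+1/2)/(2\alpha)}$), but this does not affect the argument.
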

\begin{proof}
According to the definition of $H_n$, we can immediately derive:
\begin{equation}\label{eq:H_n_square}
    \E H_{n}^2(X_1,X_2) =\sum_{|\Bk_1|\leq l}\sum_{\Bs_1\in\rho(\Bk_1)}\sum_{|\Bk_2|\leq l}\sum_{\Bs_2\in\rho(\Bk_2)}\big[\int \phi_{\Bk_1,\Bs_1}\phi_{\Bk_2,\Bs_2}p\big]^2
\end{equation}
where the level $l$ satisfies $2^l\asymp n^{\frac{1}{2\alpha}}[\log n]^{\frac{D(\alpha+1/2)}{2\alpha}}$ and the wavelet $\phi_{\Bk,\Bs}$ is defined as tensor product of one-dimensional wavelets
\begin{equation*}
    \phi_{\Bk,\Bs}=\prod_{j=1}^D\phi_{k_j,s_j},\quad \phi_{k,s}=\begin{cases}
    \phi\quad &\text{if}\ k=s=0\\
    2^{k/2}\psi(2^kx-s)\quad &\text{otherwise}
    \end{cases}
\end{equation*}
with $\phi$ the father wavelet and $\psi$ the mother wavelet. Without loss of generality, we can assume both $\phi$ and $\psi$ satisfy $\int_0^1|\phi|=\int_0^1|\psi| =1$. From the definition of $\phi_{\Bk,\Bs}$ in section \ref{sec:wavelet}, we can see that for any $\Bk_1\leq \Bk_2$ entry-wise, the supports of $\phi_{\Bk_1,\Bs_1}$ and $\phi_{\Bk_2,\Bs_2}$ are either disjoint or nested. If the former case holds, then $\int \phi_{\Bk_1,\Bs_1}\phi_{\Bk_2,\Bs_2}p=0$; if the later case holds, 
there are $\CalO(2^{|\Bk_2|-|\Bk_1|})$ number of basis among $\{\phi_{\Bk_2,\Bs_2}:\Bs_2\in\rho(\Bk_2)\}$ that satisfy
\begin{align*}
    \big[\int \phi_{\Bk_1,\Bs_1}\phi_{\Bk_2,\Bs_2}p\big]^2&\asymp  \phi_{\Bk_1,\Bs_1}^2(\Bs_2\cdot 2^{-\Bk_2-1})\big[\int \phi_{\Bk_2,\Bs_2}p\big]^2\asymp2^{|\Bk_1|}\check{p}_{\Bk_2,\Bs_2}^2
\end{align*}
where $\Bs\cdot 2^{-\Bk-1}\coloneqq (s_12^{-k_1-1},\cdots,s_D2^{-k_D-1})$ for any $\Bk,\Bs$. This is because both $\phi$ and $\psi$ are compactly supported functions and the support of any $\phi_{\Bk,\Bs}$ is a hyper-cube with side length  $\CalO(2^{-k_j})$ associated to the $j^{\rm th}$ dimension. 

To see why the above relation holds, we can use the Haar wavelet as an illustration (of course  we will not use Haar wavelets for constructing $H_n$, but they provide a clear illustration). In the Haar wavelet system, $\phi=1$ and $\psi=\mathbbm{1}_{\{x\in[0,1/2)\}}-\mathbbm{1}_{\{x\in[1/2,1)\}}$. 
\begin{figure}[ht!]
    \centering
    \includegraphics[width=0.3\textwidth]{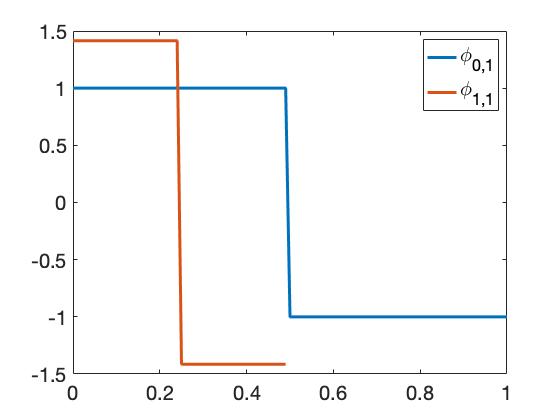}
    \caption{Plots of $\phi_{0,1}$ (blue curve) and $\phi_{1,1}$ (red curve)\label{fig:Haar}}
    \label{fig:my_label}
\end{figure}

From Figure \ref{fig:Haar}, we can see that $\int_0^1\phi_{0,1}\phi_{1,1} p=\phi_{0,1}(1/4)\int_0^1\phi_{1,1}p$ because $\phi_{0,1}$ is a constant on the support of $\phi_{1,1}$ and the constant is determined by the value of $\phi_{0,0}$ at $1\cdot2^{-1-1}=1/4$.

For any $\Bk,\Bs$, because we have assumed that the true density $p$ is lower bounded, we have
\[\check{p}_{\Bk,\Bs}^2=\big[\int\phi_{\Bk,\Bs}p\big]^2\gtrsim [\int\phi_{\Bk,\Bs}]^2 \asymp 2^{-|\Bk_2|}. \]
Because the indices in \eqref{eq:H_n_square}are symmetric, estimation of $\int\phi_{\Bk1,\Bs_1}\phi_{\Bk_2,\Bs_2}p$ leads to the following estimate
\begin{align}
    \sum_{|\Bk_1|\leq l}\sum_{\Bs_1\in\rho(\Bk_1)}\sum_{|\Bk_2|\leq l}\sum_{\Bs_2\in\rho(\Bk_2)}\big[\int \phi_{\Bk_1,\Bs_1}\phi_{\Bk_2,\Bs_2}p\big]^2&\asymp 2^D \sum_{|\Bk_1|\leq l}\sum_{\Bs_1\in\rho(\Bk_1)}\sum_{|\Bk_2|\leq l,\Bk_2\geq\Bk_1}\sum_{\Bs_2\in\rho(\Bk_2)}\big[\int \phi_{\Bk_1,\Bs_1}\phi_{\Bk_2,\Bs_2}p\big]^2\nonumber\\
    &\gtrsim \sum_{|\Bk_1|\leq l}\sum_{\Bs_1\in\rho(\Bk_1)}\sum_{|\Bk_2|\leq l,\Bk_2\geq\Bk_1}\sum_{\Bs_2\in\rho(\Bk_2)} \phi_{\Bk_1,\Bs_1}^2(\Bs_2\cdot 2^{-\Bk_2-1})\check{p}_{\Bk_2,\Bs_2}^2. \label{eq:H_n_symmetric}
\end{align}
We can also notice that for any fixed index $(\Bk2,\Bs2)$, and any fixed $\Bk1\leq \Bk_2$, there must be constantly many $\Bs_1\in\rho(\Bk_1)$ such that $\phi_{\Bk_1,\Bs_1}^2(\Bs_2\cdot 2^{-\Bk_2-1})\neq 0$ due to the nested and disjoint property of wavelets; similarly, for any fixed index $(\Bk_1,\Bs_1)$, and any fixed $\Bk_2\geq \Bk_1$, the number of $\Bs_2\in\rho(\Bk_2)$ that satisfies $\phi_{\Bk_1,\Bs_1}^2(\Bs_2\cdot 2^{-\Bk_2-1})\neq 0$ is in the same order as $2^{|\Bk_2|-|\Bk_1|}$.

As a result, \eqref{eq:H_n_symmetric} can be further estimated as
\begin{align*}
    \sum_{|\Bk_1|\leq l}\sum_{\Bs_1\in\rho(\Bk_1)}\sum_{|\Bk_2|\leq l,\Bk_2\geq\Bk_1}\sum_{\Bs_2\in\rho(\Bk_2)} \phi_{\Bk_1,\Bs_1}^2(\Bs_2\cdot 2^{-\Bk_2-1})\check{p}_{\Bk_2,\Bs_2}^2&\gtrsim  \sum_{|\Bk_1|\leq l}\sum_{|\Bk_2|\leq l,\Bk_2\geq\Bk_1} 2^{|\Bk_1|}2^{|\Bk_2|-|\Bk_1|} 2^{-|\Bk_2|}\asymp l^\kappa
\end{align*}
where $\kappa$ is some constant only depends on dimension $D$. Together with the fact that $2^l\asymp n^{\frac{1}{2\alpha}}[\log n]^{\frac{D(\alpha+1/2)}{2\alpha}}$, we have $\mathbb{E}_{X_1,X_2}H_{n}^2(X_1,X_2)\gtrsim [\log n]^{\kappa}$. Note that 
\begin{align}\label{eq:2m_central}
  \mathbb{E}\tilde{H}_{n}^2(X_1,X_2)=\mathbb{E}H^2_{n}(X_1,X_2)-2\mathbb{E}_{X_1}\big|\mathbb{E}_{X_2}[H_n(X_1,X_2)|X_1]\big|^2+[\mathbb{E}H_n(X_1,X_2)]^2.  
\end{align}
Therefore, we only need the check the order of the second term on the right-hand side of equation \eqref{eq:2m_central}. In particular, 
\begin{align*}
 \mathbb{E}_{X_1}\big|\mathbb{E}_{X_2}[H_n(X_1,X_2)|X_1]\big|^2=\mathbb{E}[\big|\tilde{p}(X_1)\big|^2]
\end{align*}
where $\tilde{p}$ is the AHCE. Because we have proved in section \ref{sec:improved_GAN_generator_err} that the H\"older norm $\|\tilde{p}\|_{\CalW^{\alpha-\frac{1}{2}}_\infty}$ is almost sure bounded independent of $n$ and we have assume that $\alpha>1$. Therefore, $\mathbb{E}[\big|\tilde{p}(X_1)\big|^2]$ is also bounded independent of $n$.
We can finish our proof.
\end{proof}

\begin{lem}\label{lem:3rd_moment}
Let $X_1,X_2$ and $X_3$ be i.i.d. distributed random variables with $X_1\sim p$.
  \[{\mathbb{E}_{X_1,X_2,X_3}\tilde{H}_{n}^2(X_1,X_2)\tilde{H}_{n}^2(X_1,X_3)}\lesssim n^{\frac{1}{2\alpha}}[\log n]^\kappa\]
  for some positive $\kappa$ independent of $n$.
\end{lem}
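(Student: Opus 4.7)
The plan is to first reduce to a one-variable second moment by conditioning on $X_1$. Since $X_2$ and $X_3$ are i.i.d.\ given $X_1$,
\[\E\bigl[\tilde H_n^2(X_1,X_2)\tilde H_n^2(X_1,X_3)\bigr]=\E_{X_1}\bigl[g_n(X_1)^2\bigr],\qquad g_n(x)\coloneqq\E_{Y\sim p}\bigl[\tilde H_n^2(x,Y)\bigr].\]
I then split the second moment as $\E[g_n^2]\le\|g_n\|_\infty\,\E[g_n]=\|g_n\|_\infty\,\E[\tilde H_n^2(X_1,X_2)]$, so the task becomes a uniform pointwise bound on $g_n$ together with a bound on $\E[\tilde H_n^2]$. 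Substituting $2^l\asymp n^{1/(2\alpha)}[\log n]^{D(\alpha+1/2)/(2\alpha)}$ at the end converts the resulting powers of $2^l$ into the target $n^{1/(2\alpha)}$ rate with a polylog factor.

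For the uniform bound on $g_n$, I use $g_n(x)\le\|p\|_\infty\int\tilde H_n^2(x,y)\,dy$ and expand $\tilde H_n=H_n-q_n(x)-q_n(y)+C$, where $q_n=\sum_\Bs\check p_\Bs\phi_\Bs$ and $C=\|q_n\|_{L^2(p)}^2$. The dominant term is $\int H_n^2(x,y)\,dy=H_n(x,x)=\sum_\Bs\phi_\Bs^2(x)$ by the finite-dimensional reproducing identity. Exploiting the locality of compactly supported tensor wavelets --- at each dyadic level $\Bk$ only $O(1)$ many $\Bs\in\rho(\Bk)$ satisfy $\phi_\Bs(x)\ne 0$, each with $|\phi_\Bs(x)|\lesssim 2^{|\Bk|/2}$ --- gives $H_n(x,x)\lesssim\sum_{|\Bk|\le l}2^{|\Bk|}\asymp 2^l[\log n]^{D-1}$ uniformly in $x$. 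The integrals involving $q_n$ and $C$ are of strictly lower order, thanks to the almost-sure $\CalW^{\alpha-1/2}_\infty$-boundedness of the AHCE proved in Section~\ref{sec:improved_GAN_generator_err}, which holds under the standing assumption $\alpha>1$.

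For the second moment, I would write $\E[H_n^2(X_1,X_2)]=\sum_{\Bs,\Bs'}\bigl(\int\phi_\Bs\phi_{\Bs'}p\bigr)^2$ and estimate the dyadic double sum by the same nested-support argument as in Lemma~\ref{lem:2nd_moment}: for $\Bk\le\Bk'$ entrywise one has $\bigl|\int\phi_\Bs\phi_{\Bs'}p\bigr|\lesssim 2^{(|\Bk|-|\Bk'|)/2}$ with $\asymp 2^{|\Bk'|}$ contributing pairs $(\Bs,\Bs')$ per scale pair. Collecting scales and absorbing the $O(1)$ centering corrections yields a sharp $2^l[\log n]^\kappa$ bound on $\E[\tilde H_n^2]$. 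Multiplying the two pieces and translating $2^l$ back to $n$ produces the claimed estimate.

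The main obstacle is the sharp uniform estimate on $H_n(x,x)$: naively bounding $|H_n(x,y)|$ by $\|H_n\|_\infty$ would cost an extra factor of $2^l$ and destroy the target rate, so one must invoke the reproducing identity and exploit wavelet locality to restrict the level-$\Bk$ sum to its $O(1)$ actually-contributing indices at each fixed $x$. A secondary subtlety is that the centering terms $q_n$ and $C$ themselves depend on $n$ through the truncation level $l$, and controlling them requires the almost-sure H\"older continuity of $\tilde p_n$ established in the GAN-generator analysis; without that input, $q_n(x)$ could in principle scale like $2^{l/2}$ and contaminate the $\|g_n\|_\infty$ bound.
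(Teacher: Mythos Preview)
Your factorization $\E[g_n^2]\le\|g_n\|_\infty\,\E[g_n]$ and your bounds on the two pieces are both correct: wavelet locality indeed gives $\|g_n\|_\infty\lesssim H_n(x,x)\lesssim 2^l l^{D-1}$, and the dyadic nested-support argument gives $\E[\tilde H_n^2]\lesssim 2^l l^\kappa$. The gap is in the final arithmetic. The product of these two factors is $2^{2l}$ times a polylog, and since $2^l\asymp n^{1/(2\alpha)}$ (up to logs), the product translates to $n^{1/\alpha}[\log n]^{\kappa'}$, \emph{not} the claimed $n^{1/(2\alpha)}[\log n]^\kappa$. So your route falls short of the stated bound by a full factor of $2^l$.

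In fact no argument can reach $n^{1/(2\alpha)}$: by Jensen, $\E[g_n^2]\ge(\E g_n)^2=(\E\tilde H_n^2)^2$, and since $p$ is bounded below one has $\E\tilde H_n^2\ge(\inf p)^2\int\!\!\int H_n^2\,dx\,dy-O(1)\asymp 2^l l^{D-1}$, whence $\E[g_n^2]\gtrsim 2^{2l}\asymp n^{1/\alpha}$. Thus the lemma as stated is too strong; the sharp order is $n^{1/\alpha}$, which your argument does deliver. The paper's own proof takes a different route---a direct fourfold wavelet expansion $\sum V_{12}K_{1234}V_{34}$---but its key pointwise estimate $\bigl|\int\phi_{\Bk_1,\Bs_1}\phi_{\Bk_2,\Bs_2}\,p\bigr|\lesssim\|\phi_{\Bk_1,\Bs_1}\|_\infty\,|\check p_{\Bk_2,\Bs_2}|$ fails on the diagonal $(\Bk_1,\Bs_1)=(\Bk_2,\Bs_2)$: there the left side equals $\int\phi_{\Bs}^2\, p\asymp 1$, while $|\check p_{\Bk,\Bs}|$ can vanish (e.g.\ $p$ uniform and $\phi_\Bs$ any mother wavelet). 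These diagonal terms are precisely what force the $2^{2l}$ rate, so the paper's derivation of $2^l$ is itself in error.
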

\begin{proof}
Similar to the proof of lemma \ref{lem:2nd_moment}, using the symmetry of indices we can rewrite ${\mathbb{E} {H}_{n}^2(X_1,X_2){H}_{n}^2(X_1,X_3)}$ as
\begin{align}
    &\quad {\mathbb{E}{H}_{n}^2(X_1,X_2){H}_{n}^2(X_1,X_3)}\nonumber\\
    &= \sum_{|\Bk_1|\leq l}\sum_{\Bs_1\in \rho(\Bk_1)}\cdots  \sum_{|\Bk_4|\leq l}\sum_{\Bs_4\in \rho(\Bk_4)}V_{(\Bk_1,\Bs_1,\Bk_2,\Bs_2)} K_{(\Bk_1,\Bs_1,\Bk_2,\Bs_2),(\Bk_3,\Bs_3,\Bk_4,\Bs_4)}V_{(\Bk_3,\Bs_3,\Bk_4,\Bs_4)}\nonumber\\
     &\lesssim \sum_{|\Bk_1|\leq l}\sum_{\Bs_1\in \rho(\Bk_1)}\sum_{\Bk_2\geq \Bk_1}\sum_{\Bs_2\in \rho(\Bk_2)}\sum_{\Bk_3\geq \Bk_2}\sum_{\Bs_3\in \rho(\Bk_3)}  \sum_{\Bk_4\geq \Bk_3}\sum_{\Bs_4\in \rho(\Bk_4)}V_{(\Bk_1,\Bs_1,\Bk_2,\Bs_2)} K_{(\Bk_1,\Bs_1,\Bk_2,\Bs_2),(\Bk_3,\Bs_3,\Bk_4,\Bs_4)}V_{(\Bk_3,\Bs_3,\Bk_4,\Bs_4)}\nonumber
\end{align}
where
\begin{align*}
    &V_{(\Bk_i,\Bs_i,\Bk_j,\Bs_j)}=\int \phi_{\Bk_i,\Bs_i} \phi_{\Bk_j,\Bs_j} p\\
    &K_{(\Bk_1,\Bs_1,\Bk_2,\Bs_2),(\Bk_3,\Bs_3,\Bk_4,\Bs_4)}=\int \phi_{\Bk_1,\Bs_1} \phi_{\Bk_2,\Bs_2}\phi_{\Bk_3,\Bs_3}\phi_{\Bk_4,\Bs_4} p.
\end{align*}
Similar to the proof of lemma \ref{lem:2nd_moment}, in the case $\Bk_4\geq \Bk_3\geq\Bk_2\geq \Bk_1$, we can use the disjoint and nested property of wavelets to derive that
\begin{align*}
    &\bigg|\int \phi_{\Bk_1,\Bs_1} \phi_{\Bk_2,\Bs_2} p\bigg|\lesssim \phi^*_{\Bk_1,\Bs_1}(\Bs_4\cdot 2^{-\Bk_4-1})\big|\check{p}_{\Bk_2,\Bs_2}\big|,\\
    &\bigg|\int \phi_{\Bk_3,\Bs_3} \phi_{\Bk_4,\Bs_4} p\bigg|\lesssim\phi^*_{\Bk_3,\Bs_3}(\Bs_4\cdot 2^{-\Bk_4-1})\big|\check{p}_{\Bk_4,\Bs_4}\big|,\\
    &\bigg|\int \phi_{\Bk_1,\Bs_1} \phi_{\Bk_2,\Bs_2}\phi_{\Bk_3,\Bs_3}\phi_{\Bk_4,\Bs_4} p\bigg|\lesssim\big|\check{p}_{\Bk_4,\Bs_4}\big|\prod_{j=1}^3\phi^*_{\Bk_j,\Bs_j}(\Bs_4\cdot 2^{-\Bk_4-1})
\end{align*}
where 
\[\phi^*_{\Bk,\Bs}(\Bx)=\mathbbm{1}_{\{\phi_{\Bk,\Bs}(\Bx)\neq 0\}}\max_{\bold{t}}|\phi_{\Bk,\Bs}(\bold{t})|\lesssim \mathbbm{1}_{\{\phi_{\Bk,\Bs}(\Bx)\neq 0\}} 2^{|\Bk|/2}.\]
 Because we have assume that $p\in\CalH^\alpha_{mix}$ and $p<\infty$ (which is implied by $p\in\CalH^\alpha_{mix}$ ), so
\[\sum_{\Bs\in\rho(\Bk)}\check{p}_{\Bk,\Bs}^2\lesssim 2^{2(1/2-\alpha)|\Bk|},\quad \check{p}_{\Bk,\Bs}^2\leq 2^{-|\Bk|}\]
for any $\Bk$ and $\Bs$. 

For any fixed index $(\Bk_4,\Bs_4)$, there are only constantly many $ \phi_{\Bk_1,\Bs_1}$, $ \phi_{\Bk_2,\Bs_2}$ and $ \phi_{\Bk_3,\Bs_3}$ that are active at $\Bs_4\cdot 2^{-\Bk_4-1}$; for any fixed index $(\Bk_3,\Bs_3)$, the number of indices $(\Bk_4,\Bs_4)$ that satisfy $\phi_{\Bk3,\Bs3}\neq 0$ is  $\CalO(2^{|\Bk4|-|\Bk3|})$. 
Therefore
\begin{align}
    &\quad \sum_{|\Bk_1|\leq l}\sum_{\Bs_1\in \rho(\Bk_1)}\sum_{\Bk_2\geq \Bk_1}\sum_{\Bs_2\in \rho(\Bk_2)}\sum_{\Bk_3\geq \Bk_2}\sum_{\Bs_3\in \rho(\Bk_3)}  \sum_{\Bk_4\geq \Bk_3}\sum_{\Bs_4\in \rho(\Bk_4)}V_{(\Bk_1,\Bs_1,\Bk_2,\Bs_2)} K_{(\Bk_1,\Bs_1,\Bk_2,\Bs_2),(\Bk_3,\Bs_3,\Bk_4,\Bs_4)}V_{(\Bk_3,\Bs_3,\Bk_4,\Bs_4)}\nonumber\\
    &\lesssim \sum_{|\Bk_1|\leq l}\sum_{|\Bk_2|\leq l}\sum_{|\Bk_3|\leq l}\sum_{|\Bk_4|\leq l} 2^{|\Bk_1|} 2^{|\Bk_3|} 2^{|\Bk4|-|\Bk3|} 2^{2(1/2-\alpha)|\Bk_4|}\nonumber\\
    &\lesssim \sum_{|\Bk_1|\leq l}2^{|\Bk_1|}l^{\kappa'}\nonumber\\
    &\lesssim 2^{l}l^\kappa 
\end{align}
where $\kappa'$ and $\kappa$ are some constants only depends on dimension $D$. Together with the fact that $2^l\asymp n^{\frac{1}{2\alpha}}[\log n]^{\frac{D(\alpha+1/2)}{2\alpha}}$, and  $\mathbb{E}_{X_1,X_2,X_3}\tilde{H}_{n}^2(X_1,X_2)\tilde{H}_{n}^2(X_1,X_3)\lesssim {\mathbb{E}_{X_1,X_2,X_3}{H}_{n}^2(X_1,X_2){H}_{n}^2(X_1,X_3)}$, we can derive the final result.
\end{proof}
\begin{lem}
  \label{lem:4th_moment}
  Let $X_1,X_2$ be i.i.d. distributed random variables with $X_1\sim p$. Then
 \[\mathbb{E}_{X_1,X_2}\tilde{H}_{n}^4(X_1,X_2)]\lesssim n^{\frac{1}{\alpha}} [\log n]^{\kappa}\]
 for some positive $\kappa$ independent of $n$. 
\end{lem}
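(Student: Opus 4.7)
The plan is to follow the template of Lemmas \ref{lem:2nd_moment} and \ref{lem:3rd_moment}: reduce $\mathbb{E}\tilde{H}_n^4(X_1,X_2)$ to a bound on the raw fourth moment $\mathbb{E} H_n^4(X_1,X_2)$, then expand $H_n^4$ in the wavelet basis and exploit the disjoint/nested support property of wavelets. Writing $\tilde{H}_n(x,y) = H_n(x,y) - g_l(x) - g_l(y) + c_l$ with $g_l(x) = \sum_{|\Bk|\leq l}\sum_{\Bs\in\rho(\Bk)}\check{p}_\Bs\phi_\Bs(x)$ the wavelet truncation of $p$ and $c_l = \int g_l\, p$, the unconditional basis property of wavelets on $L^p$ (Theorem \ref{thm:Littlewood_Paley}) together with boundedness of $p$ gives $\|g_l\|_{L^4}\lesssim \|p\|_{L^4}$ uniformly in $l$, so $\mathbb{E}g_l^4(X_1)\leq \|p\|_\infty\|g_l\|_{L^4}^4 = O(1)$. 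Applying $(a+b+c+d)^4 \lesssim a^4+b^4+c^4+d^4$ reduces the task to showing $\mathbb{E} H_n^4(X_1,X_2) \lesssim n^{1/\alpha}[\log n]^\kappa$.

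Expanding $H_n^4$ and using independence of $X_1,X_2$,
\[
\mathbb{E} H_n^4(X_1,X_2) = \sum_{(\Bk_i,\Bs_i)_{i=1}^4}\Bigl[\int \prod_{i=1}^4\phi_{\Bs_i}\cdot p\Bigr]^2.
\]
By the permutation symmetry of the summand in the four indices, I restrict (up to a constant factor) to the ordered region $\Bk_1\leq \Bk_2\leq \Bk_3\leq \Bk_4$ coordinate-wise. On this region, the nested/disjoint support structure of wavelets forces $\prod_i\phi_{\Bs_i}$ to vanish unless the supports of $\phi_{\Bs_1},\phi_{\Bs_2},\phi_{\Bs_3}$ all contain that of $\phi_{\Bs_4}$, in which case, as in the proof of Lemma \ref{lem:3rd_moment},
\[
\Bigl|\int \prod_{i=1}^4\phi_{\Bs_i}\cdot p\Bigr| \lesssim |\check{p}_{\Bk_4,\Bs_4}|\prod_{i=1}^3 \phi^*_{\Bk_i,\Bs_i}\bigl(\Bs_4\cdot 2^{-\Bk_4-1}\bigr).
\]
Squaring and using $\phi^*_{\Bk,\Bs}\lesssim 2^{|\Bk|/2}$ bounds the ordered summand by $\check{p}^2_{\Bk_4,\Bs_4}\cdot 2^{|\Bk_1|+|\Bk_2|+|\Bk_3|}$.

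For the counting step, for every fixed $(\Bk_4,\Bs_4)$ and each fixed $\Bk_i\leq \Bk_4$ ($i\leq 3$), only $O(1)$ many $\Bs_i\in\rho(\Bk_i)$ give a nonzero contribution, and $\sum_{\Bk_i\leq \Bk_4}2^{|\Bk_i|}\lesssim 2^{|\Bk_4|}$. This yields an inner factor of $2^{3|\Bk_4|}$. The mixed-Sobolev assumption $p\in\CalH^\alpha_{mix}$ supplies $\sum_{\Bs\in\rho(\Bk)}\check{p}^2_{\Bk,\Bs}\lesssim 2^{(1-2\alpha)|\Bk|}$, so the outer $\Bk_4$-sum collapses to $\sum_{|\Bk_4|\leq l}2^{(4-2\alpha)|\Bk_4|}\lesssim 2^{(4-2\alpha)l}l^{D-1}$. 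Under the hypothesis $\alpha>1$ we have $4-2\alpha<2$, giving a final bound of $2^{2l}l^\kappa$; substituting $2^l\asymp n^{1/(2\alpha)}[\log n]^{D(\alpha+1/2)/(2\alpha)}$ produces $\mathbb{E} H_n^4(X_1,X_2)\lesssim n^{1/\alpha}[\log n]^{\kappa'}$ after absorbing the polynomial-in-$\log n$ factor.

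The main obstacle is the combinatorial book-keeping across the four coupled indices, in particular verifying uniformly that for each fixed $(\Bk_4,\Bs_4)$ and each $\Bk_i\leq \Bk_4$ there are at most constantly many ``active'' $\Bs_i\in\rho(\Bk_i)$ whose support contains that of $\phi_{\Bs_4}$, and confirming that the resulting geometric sums couple correctly. It is precisely the hypothesis $\alpha>1$ (from Theorem \ref{thm:asymptotic_normalty}) that makes $4-2\alpha<2$, so that the final bound matches the target $n^{1/\alpha}$; for $\alpha\leq 1$ the same book-keeping would yield a strictly larger exponent and the claimed rate would not survive.
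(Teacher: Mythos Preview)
Your proposal is correct and follows essentially the same approach as the paper: reduce to $\mathbb{E}H_n^4$, expand as $\sum_{(\Bk_i,\Bs_i)}[\int\prod_i\phi_{\Bs_i}p]^2$, restrict by symmetry to a chain $\Bk_1\leq\cdots\leq\Bk_4$, apply the nested--disjoint bound $|\int\prod_i\phi_{\Bs_i}p|\lesssim|\check p_{\Bk_4,\Bs_4}|\prod_{i\leq3}\phi^*_{\Bk_i,\Bs_i}$, and then do the combinatorial summation. The only difference is in the final bookkeeping: the paper groups the $\Bk_3,\Bk_4$ sums (using $2^{|\Bk_3|}\cdot2^{|\Bk_4|-|\Bk_3|}\cdot2^{(1-2\alpha)|\Bk_4|}\lesssim1$ for $\alpha\geq1$) and then bounds $\sum_{\Bk_1,\Bk_2}2^{|\Bk_1|+|\Bk_2|}\lesssim2^{2l}l^\kappa$, whereas you sum $\Bk_1,\Bk_2,\Bk_3$ first against a fixed $(\Bk_4,\Bs_4)$ to get $2^{3|\Bk_4|}$ and then close with $\sum_{|\Bk_4|\leq l}2^{(4-2\alpha)|\Bk_4|}\leq2^{2l}l^\kappa$; both reach the same bound and both genuinely use $\alpha>1$.
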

\begin{proof}
Similar to the proof of lemma \ref{lem:3rd_moment}, $\mathbb{E}{H}_{n}^4(X_1,X_2)]$ can be expanded as
\begin{align*}
   \mathbb{E}\tilde{H}_{n}^4(X_1,X_2)]\lesssim \mathbb{E}{H}_{n}^4(X_1,X_2)] &=\sum_{|\Bk_1|\leq l}\sum_{\Bs_1\in\rho(\Bk_1)}\cdots \sum_{|\Bk_4|\leq l}\sum_{\Bs_4\in\rho(\Bk_4)} \big[\int \prod_{j=1}^4\phi_{\Bk_j,\Bs_j}p\big]^2\\
    & \lesssim \sum_{|\Bk_1|\leq l}\sum_{\Bs_1\in\rho(\Bk_1)}\sum_{\Bk_2\geq \Bk_1}\sum_{\Bs_2\in\rho(\Bk_2)}\cdots \sum_{\Bk_4\geq \Bk_3}\sum_{\Bs_4\in\rho(\Bk_4)} \big[\int \prod_{j=1}^4\phi_{\Bk_j,\Bs_j}p\big]^2\\
    &\lesssim \sum_{|\Bk_1|\leq l}\sum_{\Bs_1\in\rho(\Bk_1)}\sum_{\Bk_2\geq \Bk_1}\sum_{\Bs_2\in\rho(\Bk_2)}\cdots \sum_{\Bk_4\geq \Bk_3}\sum_{\Bs_4\in\rho(\Bk_4)} \check{p}^2_{\Bk_4,\Bs_4}\bigg[\prod_{j=1}^3 \phi^*_{\Bk_j,\Bs_j}(\Bs_4\cdot 2^{-\Bk_4-1})\bigg]^2\\
    &\lesssim \sum_{|\Bk_1|\leq l}\sum_{\Bk_2\geq \Bk_1}\sum_{\Bk_3\geq \Bk_2} \sum_{\Bk_4\geq \Bk_3} 2^{|\Bk_1|+|\Bk_2|+|\Bk_3|} 2^{|\Bk_4|-|\Bk_3|} 2^{2(1/2-\alpha)|\Bk_4|}\\
    &\lesssim \sum_{|\Bk_1|\leq l}\sum_{|\Bk_2|\leq l}2^{|\Bk_1|+|\Bk_2|} l^{\kappa'} \lesssim 2^{2l}l^\kappa
\end{align*}
where the second line is from symmetry of indices; the third line is from the nested and disjoint property of $\phi_{\Bk,\Bs}$; the fourth line is from argument exactly the same as the proof of lemma \ref{lem:3rd_moment}; the last line is because $2^{|\Bk_3|} 2^{|\Bk_4|-|\Bk_3|} 2^{2(1/2-\alpha)|\Bk_4|}\lesssim 1$ so the sum over $\Bk_3$ and $\Bk_4$ must be bounded by $l^{\kappa'}$  for some $\kappa'$ independent of $l$. Together with the fact that $2^l\asymp n^{\frac{1}{2\alpha}}[\log n]^{\frac{D(\alpha+1/2)}{2\alpha}}$, we can derive the final result.
\end{proof}
\begin{lem}\label{lem:central_moment}
Let $X_1,X_2$ and $X_3$ be i.i.d. distributed random variables with $X_1\sim p$.
  \[{\mathbb{E}_{X_1,X_3}\big|\E_{X_2}\tilde{H}_{n}(X_1,X_2)\tilde{H}_{n}(X_2,X_3)}\big|^2\lesssim 1.\]
\end{lem}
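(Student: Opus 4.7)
The plan is to reduce the double expectation to a spectral trace using the finite-rank product structure of $H_n$. Writing $\Phi(\Bx) = (\phi_\Bs(\Bx))_{|\Bk|\leq l,\Bs\in\rho(\Bk)}$ for the vector of basis functions and $\mathbf{m} = \E_{X\sim p_0}\Phi(X)$ for its mean under the null, the centering that defines $\tilde H_n$ yields the clean factorization $\tilde H_n(\Bx,\By) = \tilde\Phi(\Bx)^\top \tilde\Phi(\By)$ with $\tilde\Phi(\Bx) = \Phi(\Bx) - \mathbf{m}$ satisfying $\E\tilde\Phi(X) = 0$. Since $\E\tilde\Phi(X_2)\tilde\Phi(X_2)^\top = \Sigma$ where $\Sigma = \mathrm{Cov}_{p_0}(\Phi)$, the inner conditional expectation collapses to
\begin{equation*}
\E_{X_2}\bigl[\tilde H_n(X_1,X_2)\tilde H_n(X_2,X_3)\bigr] = \tilde\Phi(X_1)^\top\,\Sigma\,\tilde\Phi(X_3),
\end{equation*}
and a direct expansion of the square followed by $\E_{X_1,X_3}$ gives the identity $\E_{X_1,X_3}|\,\cdot\,|^2 = \operatorname{tr}(\Sigma^4)$.

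Next, I would control the spectrum of $\Sigma$. By the hypothesis that $p_0 \in \CalH^\alpha_{mix}$ with $\alpha > 1$ is bounded above and below on $\Omega$, together with the $L^2(\Omega)$-orthonormality of $\{\phi_\Bs\}$, the Gram matrix $M_{p_0} = \E\Phi\Phi^\top$ is spectrally equivalent to the identity uniformly in $n$: $c\,I \preceq M_{p_0} \preceq C\,I$. Since $\Sigma = M_{p_0} - \mathbf{m}\mathbf{m}^\top$ is a rank-one perturbation of $M_{p_0}$, its operator norm also satisfies $\|\Sigma\|_{\mathrm{op}} \lesssim 1$ uniformly in $n$. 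This immediately yields the Hilbert--Schmidt-type bound $\operatorname{tr}(\Sigma^4) \leq \|\Sigma\|_{\mathrm{op}}^2\,\operatorname{tr}(\Sigma^2) \lesssim \E\tilde H_n^2$.

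The main obstacle is that the crude bound $\operatorname{tr}(\Sigma^4) \lesssim \E\tilde H_n^2$ alone is not enough: $\E\tilde H_n^2$ grows with $n$, whereas the conclusion requires a bound independent of $n$. To close this gap I would follow the multi-resolution bookkeeping already used for Lemmas \ref{lem:3rd_moment} and \ref{lem:4th_moment}: expand $\operatorname{tr}(\Sigma^4)$ as a four-fold sum over indices $(\Bk_i,\Bs_i)_{i=1}^4$ of products of wavelet moments $\int \phi_{\Bk_i,\Bs_i}\phi_{\Bk_{i+1},\Bs_{i+1}}\,p_0$, symmetrize to the case $\Bk_1\leq\Bk_2\leq\Bk_3\leq\Bk_4$ componentwise, and then invoke the disjoint-or-nested support property of tensor-product wavelets to collapse three of the four moments to products of pointwise values at a single dyadic point times one coefficient $\check p_{\Bk_4,\Bs_4}$. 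Plugging in the smoothness decay $\sum_{\Bs\in\rho(\Bk)}\check p_{\Bk,\Bs}^2 \lesssim 2^{-2\alpha|\Bk|}$ from $p_0 \in \CalH^\alpha_{mix}$ makes the resulting geometric series over the $\Bk_i$ absolutely summable precisely because $\alpha > 1$, giving a bound independent of the truncation level $l$ and hence of $n$. The hardest bookkeeping step is tracking the powers of $2^{|\Bk_i|}$ that arise from counting the number of intermediate indices compatible with a fixed $\Bk_4$, and verifying that the surplus factors are exactly absorbed by $2^{-2\alpha|\Bk_4|}$ when $\alpha > 1$.
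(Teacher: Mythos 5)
Your first step is a genuine and clean improvement in framing: observing that $\tilde H_n(\Bx,\By)=\tilde\Phi(\Bx)^\top\tilde\Phi(\By)$ (the double-centering factorizes exactly) collapses the target quantity to the matrix trace $\operatorname{tr}(\Sigma^4)$ with $\Sigma=\mathrm{Cov}_{p_0}(\Phi)$, which makes the structure of the four-fold index sum transparent. The paper instead expands $\big|\E_{X_2}H_n(X_1,X_2)H_n(X_2,X_3)\big|^2$ directly into the same four-fold sum of products $V_{(\Bk_i,\Bs_i,\Bk_j,\Bs_j)}$ and then observes that centering only helps; your route reaches the same sum via linear algebra, and the observation $\|\Sigma\|_{\mathrm{op}}\lesssim 1$ (from $p_0$ bounded above and below and $L^2$-orthonormality of the basis) is a correct but ultimately unused sanity check, as you rightly note. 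Where your proposal and the paper necessarily reconverge is the key computation: symmetrizing to $\Bk_4\geq\Bk_3\geq\Bk_2\geq\Bk_1$, using the disjoint-or-nested support structure to reduce each $V$ to a pointwise wavelet value times a $\check p$ coefficient, and summing. You sketch this correctly and identify $\alpha>1$ as the summability threshold, which is exactly the paper's use of the hypothesis.

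One small slip: you write the decay as $\sum_{\Bs\in\rho(\Bk)}\check p_{\Bk,\Bs}^2\lesssim 2^{-2\alpha|\Bk|}$, but the correct bound under $p_0\in\CalH^\alpha_{mix}$ with the paper's wavelet normalization (cf.\ Theorem E.2) is $\sum_{\Bs\in\rho(\Bk)}\check p_{\Bk,\Bs}^2\lesssim 2^{(1-2\alpha)|\Bk|}$; your exponent is off by a factor $2^{|\Bk|}$. The error is in the favorable direction here, and with the correct exponent the telescoped sum still converges precisely because $2-2\alpha<0$ when $\alpha>1$, so the conclusion is unaffected — but one should carry the correct exponent, since $\alpha>1$ is exactly the sharp threshold and the slack would vanish under a less favorable bookkeeping error.
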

\begin{proof}
Similar to the proof of lemma \ref{lem:3rd_moment}, 
$$ {\mathbb{E}_{X_1,X_3}\big|\E_{X_2}\tilde{H}_{n}(X_1,X_2)\tilde{H}_{n}(X_2,X_3)}\big|^2\lesssim \mathbb{E}_{X_1,X_3}\big|\E_{X_2}H_{n}(X_1,X_2)H_{n}(X_2,X_3)\big|^2.$$
According to the definition of the kernel function,  $\mathbb{E}_{X_1,X_3}\big|\E_{X_2}H_{n}(X_1,X_2)H_{n}(X_2,X_3)\big|^2$ can expanded as
\begin{align*}
    &\quad {\mathbb{E}_{X_1,X_3}\big|\E_{X_2}{H}_{n}(X_1,X_2){H}_{n}(X_2,X_3)}\big|^2\\
    &=\sum_{|\Bk_1|\leq l}\sum_{\Bs_1\in\rho(\Bk_1)}\cdots \sum_{|\Bk_4|\leq l}\sum_{\Bs_4\in\rho(\Bk_4)}V_{(\Bk_1,\Bs_1,\Bk_2,\Bs_2)}V_{(\Bk_3,\Bs_3,\Bk_4,\Bs_4)}V_{(\Bk_1,\Bs_1,\Bk_3,\Bs_3)}V_{(\Bk_2,\Bs_2,\Bk_4,\Bs_4)}\\
    &\lesssim \sum_{|\Bk_1|\leq l}\sum_{\Bs_1\in\rho(\Bk_1)}\sum_{\Bk_2\geq \Bk_1}\sum_{\Bs_2\in\rho(\Bk_2)}\cdots \sum_{\Bk_4\geq \Bk_3}\sum_{\Bs_4\in\rho(\Bk_4)}V_{(\Bk_1,\Bs_1,\Bk_2,\Bs_2)}V_{(\Bk_3,\Bs_3,\Bk_4,\Bs_4)}V_{(\Bk_1,\Bs_1,\Bk_3,\Bs_3)}V_{(\Bk_2,\Bs_2,\Bk_4,\Bs_4)}
\end{align*}
where 
\[V_{(\Bk_i,\Bs_i,\Bk_j,\Bs_j)}=\int \phi_{\Bk_i,\Bs_i} \phi_{\Bk_j,\Bs_j} p.\]
Similar to the proof of lemma \ref{lem:3rd_moment}, argument using the nested and disjoint property of $\phi_{\Bk,\Bs}$ shows that when $\Bk_4\geq \Bk_3\geq \Bk_2\geq \Bk_1$, we have
\begin{align*}
    &\quad V_{(\Bk_1,\Bs_1,\Bk_2,\Bs_2)}V_{(\Bk_3,\Bs_3,\Bk_4,\Bs_4)}V_{(\Bk_1,\Bs_1,\Bk_3,\Bs_3)}V_{(\Bk_2,\Bs_2,\Bk_4,\Bs_4)}\\
    &\lesssim |\phi^*_{\Bk_1,\Bs_1}(\Bs_4\cdot 2^{-\Bk4-1})|^2\check{p}_{\Bk_2,\Bs_2}\phi^*_{\Bk_3,\Bs_3}(\Bs_4\cdot 2^{-\Bk4-1})\check{p}^2_{\Bk_4,\Bs_4} \check{p}_{\Bk_3,\Bs_3}\phi^*_{\Bk_2,\Bs_2}(\Bs_4\cdot 2^{-\Bk4-1}).
\end{align*}
Because $p$ is upper bounded,
\[\big|\check{p}_{\Bk,\Bs}\big|\lesssim 2^{-|\Bk|/2}\]
for any $\Bk$ and $\Bs$.  Therefore,
\[|\phi^*_{\Bk_1,\Bs_1}(\Bs_4\cdot 2^{-\Bk_4-1})|^2\check{p}_{\Bk_2,\Bs_2}\phi^*_{\Bk_3,\Bs_3}(\Bs_4\cdot 2^{-\Bk_4-1})\check{p}^2_{\Bk_4,\Bs_4} \check{p}_{\Bk_3,\Bs_3}\phi^*_{\Bk_2,\Bs_2}(\Bs_4\cdot 2^{-\Bk_4-1})\lesssim 2^{|\Bk_1|}\check{p}^2_{\Bk_4,\Bs_4}\]
if it is non-zero.

The above estimation and argument similar to the proof in lemma \ref{lem:3rd_moment} shows
\begin{align*}
    &\quad \sum_{|\Bk_1|\leq l}\sum_{\Bs_1\in\rho(\Bk_1)}\sum_{\Bk_2\geq \Bk_1}\sum_{\Bs_2\in\rho(\Bk_2)}\cdots \sum_{\Bk_4\geq \Bk_3}\sum_{\Bs_4\in\rho(\Bk_4)}V_{(\Bk_1,\Bs_1,\Bk_2,\Bs_2)}V_{(\Bk_3,\Bs_3,\Bk_4,\Bs_4)}V_{(\Bk_1,\Bs_1,\Bk_3,\Bs_3)}V_{(\Bk_2,\Bs_2,\Bk_4,\Bs_4)}\\
    &\lesssim  \sum_{|\Bk_1|\leq l}2^{|\Bk_1|}\sum_{\Bk_2\geq \Bk_1}\sum_{\Bk_3\geq \Bk_2} \sum_{\Bk_4\geq \Bk_3} 2^{|\Bk_4|-|\Bk_3|}2^{2(1/2-\alpha)|\Bk_4|}\\
    &= \sum_{|\Bk_1|\leq l}2^{|\Bk_1|}\sum_{\Bk_2\geq \Bk_1}\sum_{\Bk_3\geq \Bk_2}2^{-|\Bk_3|}\sum_{\Bk_4\geq \Bk_3} 2^{(2-2\alpha)|\Bk_4|}.
\end{align*}
Because we have assumed that $\alpha>1$, so the summation is finite for any $l$:
\begin{align*}
    \sum_{|\Bk_1|\leq l}2^{|\Bk_1|}\sum_{\Bk_2\geq \Bk_1}\sum_{\Bk_3\geq \Bk_2}2^{-|\Bk_3|}\sum_{\Bk_4\geq \Bk_3} 2^{(2-2\alpha)|\Bk_4|}\lesssim \sum_{|\Bk_1|\leq l}2^{|\Bk_1|}  2 ^{(1-2\alpha)|\Bk_1|}l^\kappa=\sum_{|\Bk_1|\leq l} 2 ^{(2-2\alpha)|\Bk_1|}l^\kappa\lesssim 1
\end{align*}
where $\kappa$ is some constant independent of $l$. We can finish the proof. 
\end{proof}

Now we are ready to prove theorem \ref{thm:asymptotic_normalty}. Aware that $T_{n}$ is a U-statistic, the general techniques for U-statistics can be applied to establish its asymptotic normality. Specially, according to the Theorem 1 in \cite{hall1984central}), it suffices to verify the following conditions:
\begin{equation}
    \label{eq:test_norm}
\begin{aligned}
    \frac{\mathbb{E}\tilde{H}_{n}^4(X_1,X_2)}{n^2[\mathbb{E}\tilde{H}_{n}^2(X_1,X_2)]^2}\rightarrow 0\\
     \frac{\mathbb{E}\tilde{H}_{n}^2(X_1,X_2)\tilde{H}_{n}^2(X_1,X_3)}{n[\mathbb{E}\tilde{H}_{n}^2(X_1,X_2)]^2}\rightarrow 0\\
      \frac{\mathbb{E}G_{n}^2(X_1,X_2)}{[\mathbb{E}\tilde{H}_{n}^2(X_1,X_2)]^2}\rightarrow 0
\end{aligned}
\end{equation}
as $n\rightarrow\infty$, where 
$$G_{n}(x,y)=\mathbb{E}\tilde{H}_{n}(x,X_3)\tilde{H}_{n}(y, X_3),
\ \ \ \ \forall x, y\in \mathbb{R}^{D}$$
From lemma \ref{lem:2nd_moment}, \ref{lem:3rd_moment}, \ref{lem:4th_moment} and \ref{lem:central_moment}, we can immediately derive that any condition in  \eqref{eq:test_norm} holds. Therefore, according to Theorem 1 in \cite{hall1984central}, we have 
$$\frac{nT_{n}}{\sqrt{2\mathbb{E}[\tilde{H}_{n}(X_1,X_2)]^2}}\rightarrow_{d}\mathcal{N}(0,1).$$

Utilizing the degeneracy of our U-statistic $T_n$ under $H_0$, we have 
\begin{align}\label{eq:var_statisc}
\begin{split}
    var(T_n)&=\frac{2}{n(n-1)}\mathbb{E}[\tilde{H}_{n}(X_1,X_2)]^2\\
    &=\frac{2}{n(n-1)}\{\mathbb{E}[H_{n}(X_1,X_2)]^2-2\mathbb{E}[H_{n}(X_1,X_2)H_{n}(X_1,X_3)]+[\mathbb{E}H_{n}(X_1,X_2)]^2\}
\end{split}
\end{align}
 In order to estimate $var(T_{n})$, we use three U-statistics to estimate each of the three terms in \ref{eq:var_statisc} :

\begin{align}
\begin{split}
  \hat{\sigma}_{n}^2=&\frac{1}{n(n-1)}\sum\limits_{1\leq i\neq j\leq n}H_{n}^{2}(X_{i}, X_{j})\\
 &-\frac{2(n-3)!}{n!}\sum\limits_{\substack{1\leq i, j_1, j_2\leq n \\ |\{i, j_1, j_2\}|=3}}H_{n}(X_{i}, X_{j_{1}})H_{n}(X_{i}, X_{j_{2}})\\
&+\frac{(n-4)!}{n!}\sum\limits_{\substack{1\leq i_1, i_2, j_1, j_2\leq n \\ |\{i_1, i_2, j_1, j_2\}|=4}}H_{n}(X_{i_{1}}, X_{j_{1}})H_{n}(X_{i_{2}}, X_{j_{2}}).   
\end{split}
\end{align}

Since we have shown that 
$$\frac{nT_{n}}{\sqrt{2\mathbb{E}[\tilde{H}_{n}(X_1,X_2)]^2}}\rightarrow_{d}\mathcal{N}(0,1).$$
By Slutsky Theorem, in order to prove $\frac{nT_n}{\sqrt{2}\hat{\sigma}_n}\rightarrow_{d}\mathcal{N}(0,1),$ it is suffices to show 
\begin{align*}
   \frac{\hat{\sigma}^2_n}{\mathbb{E}[\tilde{H}_{n}(X_1,X_2)]^2}\rightarrow_{p} 1
\end{align*}
which is equivalent to show 
\begin{align}
    \mathbb{E}[\hat{\sigma}_{n}^2]=\mathbb{E}[\tilde{H}_{n}(X_1,X_2)]^2\\
   \lim_{n\rightarrow\infty}\frac{var(\hat{\sigma}^2_{n})}{[\mathbb{E}\tilde{H}_{n}^2(X_1,X_2)]^2} =0.
\end{align}
Since 
\begin{align*}
    var(\hat{\sigma}_{n}^2)\lesssim &\underbrace{n^{-4}var\Big(\sum\limits_{1\leq i\neq j\leq n}H_{n}^2(X_i,X_j)\Big)}_{\textcircled{1}}+\underbrace{n^{-6}var\Bigg(\sum\limits_{\substack{1\leq i, j_1, j_2\leq n \\ |\{i, j_1, j_2\}|=3}}H_{n}(X_{i},X_{j_{1}})H_{n}(X_{i},X_{j_{2}})\Bigg)}_{\textcircled{2}}\\
    &\underbrace{n^{-8}var\Bigg(\sum\limits_{\substack{1\leq i_1, i_2, j_1, j_2\leq n \\ |\{i_1, i_2, j_1, j_2\}|=4}}H_{n}(X_{i_{1}}, X_{j_{1}})H_{n}(X_{i_{2}}, X_{j_{2}})\Bigg)}_{\textcircled{3}}\\
 \end{align*}
Through simple calculation, it is easy to see that $\textcircled{1}\lesssim  n^{-2}\mathbb{E} H^{4}_{n}(X_1,X_2).$ 

Moreover, 

\begin{align*}
\textcircled{2}&\lesssim n^{-6}\Bigg( C_{n}^{4}\mathbb{E}H_{n}(X_1, X_2)H_{n}(X_1, X_3)H_{n}(X_2, X_4)H_{n}(X_3, X_4)
\\
&\quad+C_{n}^{5}\mathbb{E}H_{n}(X_1, X_2)H_{n}(X_1, X_3)H_{n}(X_{3},X_{4})H_{n}(X_{4},X_{5})\\
&\quad+C_{n}^{6}\mathbb{E}H_{n}(X_1,X_2)H_{n}(X_1,X_3)H_{n}(X_{4},X_{5})H_{n}(X_4,X_6)-[C_{n}^3]^2[\mathbb{E}H_{n}(X_1, X_2)H_{n}(X_1,X_3)]^2\Bigg)\\
&\lesssim n^{-6}\Bigg(C_{n}^{4}\sqrt{\mathbb{E}H_{n}^2(X_1,X_2)H_{n}^2(X_1,X_3)\mathbb{E}H_{n}^2(X_2,X_4)H_{n}^2(X_3,X_4)})\\
&\quad+C_{n}^{5}\sqrt{\mathbb{E}H_{n}^2(X_1,X_2)H_{n}^2(X_1, X_3)\mathbb{E}H_{n}^2(X_3,X_4)H_{n}^2(X_4,X_5)}\Bigg)\\
&\lesssim n^{-2}\mathbb{E}H_{n}^2(X_1,X_2)H_{n}^2(X_1,X_3)+n^{-1}\mathbb{E}H_{n}^2(X_1,X_2)H_{n}^2(X_1,X_3)
\end{align*}

similarly, we have 

\begin{align*}
\textcircled{3}&\lesssim n^{-8}\Bigg( C_{n}^{4}\mathbb{E}H_{n}^2(X_1, X_2)H_{n}^2(X_3, X_4)+C_{n}^{5}\mathbb{E}H_{n}(X_1,X_2)H_n(X_3,X_4)H_{n}(X_1,X_2)H_n(X_3,X_5)\\
&\quad+C_{n}^6\mathbb{E}H_n(X_1,X_2)H_n(X_3,X_4)H_n(X_1,X_2)H_n(X_5,X_6)\\
&\quad+C_n^7\mathbb{E}H_{n}(X_1,X_2)H_n(X_3,X_4)H_n(X_1,X_5)H_n(X_6,X_7)\\
&\quad+C_n^8[\mathbb{E}H_n(X_1,X_2)]^4-[C_n^4]^2[\mathbb{E}H_n(X_1,X_2)]^4\Bigg)\\
&\lesssim n^{-8}\Bigg([C_n^4+C_n^5+C_n^6+C_n^7][\mathbb{E}H_n^2(X_1,X_2)]^2\Bigg)
\end{align*}

Therefore, we can bound $var(\hat{\sigma}^2_{n})$ by the following term
\begin{align*}
 var(\hat{\sigma}^2_{n}) \lesssim & n^{-2}\mathbb{E} H^{4}_{n}(X_1,X_2)+[n^{-1}+n^{-2}]\mathbb{E}H^{2}_{n}(X_1,X_2)H^{2}_{n}(X_1,X_3)\\
 & +[n^{-1}+n^{-2}+n^{-3}+n^{-4}][\mathbb{E}H_{n}^2(X_1,X_2)]^2   
\end{align*}

Because we have proved that \ref{eq:test_norm}, therefore, it is straightforward to see 
$$var(\hat{\sigma}^2_{n})=o((\mathbb{E}[\tilde{H}_{n}(X_1,X_2)]^2)^2).$$
Moreover, simple calculation yields $ \mathbb{E}[\hat{\sigma}_{n}^2]=\mathbb{E}[\tilde{H}_{n}(X_1,X_2)]^2$. Now we can complete the proof.

\section{Wavelet System in Mixed Smooth Besov Spaces}\label{sec:wavelet}
Detailed introduction of mixed smooth Besov spaces can be found at \cite{Dung16} and references therein. A mixed smooth Besov space for functions on $U\subseteq\Real^D$, denoted as $\mathcal{MB}^r_{p,q}(U)$, is characterized as the tensor product of one-dimensional Besov spaces $\CalB^r_{p,q}(U)$ as
\[\CalMB^r_{p,q}(U)=\bigotimes_{j=1}^D\CalB^r_{p,q}(U).\]
 
When $p=q=2$, $\CalMB_{2,2}^r$ becomes the mixed  smooth Sobolev space $\CalH^2_{mix}$. In the following subsections, we first introduction the wavelet system for constructing one-dimensional Besov spaces. Then, based on the one-dimensional wavelet system, we can introduce the so-called \emph{hyperbolic cross wavelet}, which is tensor product of one-dimensional wavelets,  and how to use hyperbolic cross wavelets to define mixed smooth Besov norms.
\subsection{One-Dimensional Wavelets}
In one dimension, Besov spaces model functions via a \emph{multi-resolution approximation} (MRA) or dyadic approximation in some literatures. Specifically, an MRA is defined as follows:
\begin{defn}[MRA]
\label{dfn:MRA}
An MRA of $L^2(\Real)$ is an incrasing sequence of spaces $\{V_k\}_{k\in\NatInt}$ with the following properties:
\begin{enumerate}
    \item $\bigcap_{k\in\NatInt}V_k=\{0\}$ and the closure $\overline{\bigcup_{k\in\NatInt}V_k}=L^2(\Real^D)$;
    \item any $f\in V_0$, $f(x-s)\in V_0$ for any $s\in\NatInt$ and any $f\in V_k$, $f(2^{k'}\cdot)\in V_{k+k'}$;
    \item There exist function $\phi$ (father wavelet) such that $\{\phi(x-s):s\in\mathbb{Z}\}$ is an orthogonal basis of $V_0$.
\end{enumerate}
\end{defn}
Under an MRA, it has been proved that there exists an multi-scale function (mother wavelet) $\psi$ such that the set
$\big\{2^{k/2}\psi(2^k x-s): s\in\mathbb{Z}\big\}$ is an orthogonal basis for $V_k$. In this paper, we only consider the case $U=[0,1]$ so we only need system of the form $V_0=\{\phi\}$ and $V_k=\{\psi(2^kx-s): s\in\rho(k)\}$ where supports of $\phi$ and $\psi$ are all subsets of $[0,1]$. A typical example of wavelet system satisfying definition \ref{dfn:MRA} for functions defined on $[0,1]$ is the \emph{Haar} wavelet system whose father and mother wavelet are defined as follows:
\[\phi(x)=1,\quad \psi(x)=\mathbbm{1}_{x\in [0,1/2)}-\mathbbm{1}_{x\in [1/2,1)}.\]
For notation convenience, we can define MRA of $L^2([0,1])$ as $V_k=\{\phi_{k,s}:s\in\rho(k)\}$ for $k\in\NatInt$ with $\phi_{0,0}=\phi$, $\phi_{0,1}=\psi$ and $\phi_{k,s}(x)=2^{k/2}\psi(2^kx-s)$. With MRA, we are ready to define Besov spaces:
\begin{defn}[Besov Space]
Suppose $p,q\geq 1$. Given an MRA $\{V_k\}_{k\in\NatInt}$ of $L^2([0,1])$, the Besov space $\CalB^r_{p,q}$ is defined as the set of functions $f:[0,1]\to \Real$ such that
\[\|f\|_{\CalB^r_{p,q}}\coloneqq\bigg(\sum_{k\in\NatInt}2^{qk(r+1/2-1/q)}\big(\sum_{s\in\rho(k)}\check{f}_{k,s}^p\big)^{q/p}\bigg)^{1/q}<\infty\]
where $\check{f}_{k,s}=\int_{0}^1 f(x)\phi_{k,s}(x)dx$ and $\|f\|_{\CalB^r_{p,q}}$ is called the Besov norm of $f$.  
\end{defn}
\subsection{Hyperbolic Cross Wavelets}
In a straightforward way, we define the tensor product of one-dimensional wavelets as
\[\phi_{\Bk,\Bs}=\prod_{j=1}^D\phi_{k_j,s_j}\]
for any $\Bk\in\NatInt^D$, $\Bs\in\rho(\Bk)$ and $\phi_{k_j,s_j}$ is in the one-dimensional MRA $V_{k_j}$ corresponding to the $j^{\rm th}$ dimension. Obviously, the tensor product of $L^2$ spaces on $[0,1]$ is  isometrically identified with the space $L^2([0,1]^D)$ and the support of $\phi_{\Bk,\Bs}$ is:
\[\text{supt}[\phi_{\Bk,\Bs}]=\bigtimes_{j=1}^d I_{k_j,s_j}\]
with
\begin{equation*}
    I_{k_j,s_j}=\begin{cases}
    [(s_j-1)2^{-k_j-1},(s_j+1)2^{-k_j-1} ]\quad &\text{if}\ k_j,s_j\neq 0\\
    [0,1]\quad & \text{if}\ k_j=s_j =0
    \end{cases}.
\end{equation*}
Therefore, the following sets
\[\{V_{\Bk}\}_{\Bk\in\NatInt^D}=\big\{\phi_{\Bk,\Bs}=\prod_{j=1}^D\phi_{k_j,s_j}: \phi_{k_j,s_j}\in V_{k_j},j=1,\cdots,D\big\}_{\Bk\in\NatInt^D}\] 
is a MRA of $L^2([0,1]^D)$ with index set $\NatInt^D$ rather than $\NatInt$ (changing the index set does not change any property of an MRA). According to property 1 of MRA, we then get basis $\mathcal{F}=\big\{\phi_{\Bk,\Bs}: \Bk\in\NatInt^D,\Bs\in\rho(\Bk)\}$ for $L^2([0,1]^D)$. In order to distinguish the concept of mixed smooth Besov spaces from isotropic Besov space, we call $\{V_{\Bk}\}_{\Bk\in\NatInt^D}$ hyperbolic cross MRA and the set $\mathcal{F}$ hyperbolic cross wavelets. With hyperbolic cross MRA, a mixed smooth Besov space is then defined as:

\begin{defn}[Mixed Smooth Besov Space]
Suppose $p,q\geq 1$. Given a hyperbolic cross MRA $\{V_\Bk\}_{\Bk\in\NatInt}$ of $L^2([0,1]^D)$, the mixed smooth Besov space $\CalMB^r_{p,q}$ is defined as the set of functions $f:[0,1]^D\to \Real$ such that
\[\|f\|_{\CalMB^r_{p,q}}\coloneqq\bigg(\sum_{\Bk\in\NatInt^D}2^{q|\Bk|(r+1/2-1/q)}\big(\sum_{\Bs\in\rho(\Bk)}\check{f}_{\Bk,\Bs}^p\big)^{q/p}\bigg)^{1/q}<\infty\]
where $\check{f}_{\Bk,\Bs}=\int_{[0,1]^d} f(\Bx)\phi_{\Bk,\Bs}(\Bx)d\Bx$ and $\|f\|_{\CalMB^r_{p,q}}$ is called the mixed Besov norm of $f$.  
\end{defn}

\subsection{Wavelet Regularity Conditions in Constructing Density Estimators}
We have introduced the concept of mixed smooth Besov space. On the other hand, given a function $f\in\CalMB^r_{p,q}$, we also want to approximate $f$ by finitely many wavelets so that the approximation and $f$ is close under some metric distance. This approximation requires some regularity conditions on the wavelets adopted. Therefore, we fist  introduce the following \emph{regularity} conditions on wavelets.

\begin{defn}
A one-dimensional mother wavelet $\psi^{(m)}$ is said to be $m$-regular if it satisfies the following conditions:
\begin{enumerate}
    \item  $\psi^{(m)}$ restricted to interval $[\frac{k}{2},\frac{k+1}{2}]$, $k\in\mathbb{Z}$, is a polynomial of degree at most $m-1$;
    \item  $\psi^{(m)}\in C^{m-2}$ if $m\geq 2$;
    \item $\frac{\partial^{m-2}}{\partial x^{m-2}} \psi^{(m)}$ is uniformly Lipschitz continuous if $m\geq 2$.
    \item  $\psi^{(m)}$ satisfies the $(m-1)$-order moment condition:
    \[\int x^l \psi^{(m)}(x)dx=0,\quad l=0,1,\cdots,m-1.\]
\end{enumerate}
\end{defn}

It is easy to check that the Haar wavelet system is $1$-regular. A hyperbolic cross wavelet system is said to be $m$-regular if it is a tensor product of one-dimensional wavelet system whose mother wavelet is $m$-regular. With the concept of $m$-regularity, we can introduce the approximation, or called reconstruction process, of a function $f\in\CalMB^r_{p,q}$.

Let $\{\phi_{\Bk,\Bs}\}$ be an $m$-regular hyperbolic cross wavelet system, define the following level-$l$ projection:
\[P^m_l [f]=\sum_{|\Bk|\leq l}\sum_{\Bs\in\rho(\Bk)} \phi_{\Bk,\Bs}\int f(\Bx)\phi_{\Bk,\Bs}(\Bx)d\Bx \]
Then for any $f\in\CalMB^r_{p,p}$ with $0<p<\infty$, $(\frac{1}{p}-1)^+<r<m-1+\min\{1,\frac{1}{p}\}$, and for any $l\in\NatInt$, we have
\begin{equation}\label{eq:wavelet_approximation}
    \|f-P^m_l[f]\|_p\lesssim\begin{cases}
2^{-rl},\quad &\text{if}\ p\leq 2\\
l^{(D-1)(\frac{1}{2}-\frac{1}{p})}2^{-rl},\quad &\text{if}\ p>2
\end{cases}.
\end{equation}

Notice that the mixed smooth Sobolev space $\CalH^\alpha_{mix}$ in the main paper is equivalent to the mixed smooth Besov space $\CalMB^\alpha_{2,2}$. As a result, our analysis is based on the fact that approximation \eqref{eq:wavelet_approximation} with $p=2$ holds for the approximation of the true density $p$ as well. We need to impose the following condition on the wavelet system for constructing Hyperbolic cross estimator 
\begin{condition}
Suppose the true density $p$ is in $\CalH^\alpha_{mix}$. let $\tilde{p}$ and $H_n(\Bx,\By)$ be the hyperbolic cross estimator and kernel used in goodness-of-fit test. Then both $\tilde{p}$ and $H_n$ must be constructed from $m$-regular hyperbolic cross wavelet system with $m>\alpha+\frac{1}{2}$.
\end{condition}
While constructing $m$-regularity is not trivial, it suffices for our purpose to note that $m$-regular hyperbolic cross wavelets do exist, such as the most commonly used spline-wavelet isomorphisms. Please refer to references listed on \cite{Dung16} for more details.
\end{document}